\documentclass{article}

\usepackage{microtype}
\usepackage{graphicx}
\usepackage{subcaption}
\usepackage{booktabs} 
\usepackage{multirow}
\usepackage{pifont}
\usepackage{wrapfig}
\usepackage{array}
\newcolumntype{P}{r@{\,{\tiny$\pm$}\,}>{\tiny}l}
\newcommand{\pmstat}[2]{\makebox[3.4em][r]{#1}\,{\tiny$\pm$}\,{\tiny #2}}
\usepackage[most]{tcolorbox}
\usepackage{enumitem}
\newcommand{\cmark}{\ding{51}}
\newcommand{\xmark}{\ding{55}}
\usepackage[table, svgnames, dvipsnames]{xcolor}
\usepackage{colortbl}
\usepackage{hyperref}

\usepackage[accepted]{icml2026}

\usepackage{amsmath}
\usepackage{amssymb}
\usepackage{mathtools}
\usepackage{amsthm}
\usepackage{graphicx}
\newlength{\imgH}

\usepackage[capitalize,noabbrev]{cleveref}

\theoremstyle{plain}
\newtheorem{theorem}{Theorem}[section]
\newtheorem{proposition}[theorem]{Proposition}
\newtheorem{lemma}[theorem]{Lemma}

\theoremstyle{definition}
\newtheorem{definition}[theorem]{Definition}
\newtheorem{assumption}[theorem]{Assumption}
\theoremstyle{remark}
\newtheorem{remark}[theorem]{Remark}

\usepackage[textsize=tiny]{todonotes}
\usepackage[dvipsnames]{xcolor}

\icmltitlerunning{Temporal Straightening for Latent Planning}

\begin{document}

\author{
  Ying Wang$^{1}$, Oumayma Bounou$^{1}$, Gaoyue Zhou$^{1}$, Randall Balestriero$^{2}$, Tim G. J. Rudner$^{3}$,\\Yann LeCun$^{1}$$^{*}$, and Mengye Ren$^{1}$$^{*}$ \\
  $^{1}$New York University $^{2}$Brown University $^{3}$University of Toronto\\
  Correspondence to \texttt{\{yw3076,yann.lecun,mengye\}@nyu.edu}\\
  \url{https://agenticlearning.ai/temporal-straightening}
}

\twocolumn[
  \icmltitle{Temporal Straightening for Latent Planning}



  \icmlsetsymbol{equal}{*}

  \begin{icmlauthorlist}
    \icmlauthor{Ying Wang}{nyu}
    \icmlauthor{Oumayma Bounou}{nyu}
    \icmlauthor{Gaoyue Zhou}{nyu}
    \icmlauthor{Randall Balestriero}{brown}
    \icmlauthor{Tim G. J. Rudner}{uoft} \\
    \icmlauthor{Yann LeCun}{equal,nyu}
    \icmlauthor{Mengye Ren}{equal,nyu}
  \end{icmlauthorlist}

  \icmlaffiliation{nyu}{New York University}
  \icmlaffiliation{brown}{Brown University}
  \icmlaffiliation{uoft}{University of Toronto}

  \icmlcorrespondingauthor{Ying Wang}{yw3076@nyu.edu}
  \icmlcorrespondingauthor{Yann LeCun}{yann.lecun@nyu.edu}
  \icmlcorrespondingauthor{Mengye Ren}{mengye@nyu.edu}

  \icmlkeywords{Machine Learning, ICML, World Model, Planning, Representation Learning, Temporal Straightening, JEPA}

  \vskip 0.3in
]



\printAffiliationsAndNotice{\textsuperscript{*}Equal advising}  

\begin{abstract}
Learning good representations is essential for latent planning with world models. While pretrained visual encoders produce strong semantic visual features, they are not tailored to planning and contain information irrelevant---or even detrimental---to planning. Inspired by the perceptual straightening hypothesis in human visual processing, we introduce temporal straightening to improve representation learning for latent planning. Using a curvature regularizer that encourages locally straightened latent trajectories, we jointly learn an encoder and a predictor of a Joint-Embedding Predictive Architecture (JEPA) world model. We show that reducing curvature this way makes the Euclidean distance in latent space a better proxy for the geodesic distance and improves the conditioning of the planning objective. We demonstrate empirically that temporal straightening makes gradient-based planning more stable and yields significantly higher success rates across a suite of goal-reaching tasks. Our code is available at \url{https://agenticlearning.ai/temporal-straightening/}.
\end{abstract}
\section{Introduction}

Latent world models offer a compelling solution for planning due to better efficiency and generalization~\citep{truck-backer-upper, dyna, worldmodel, Hafner2020Dream, hafner2022dreamerv2, hafner2023dreamerv3, hansen2022temporal, hansen2023td}. They compress high-dimensional observations into compact latent representations, learn predictive dynamics in that latent space, and enable imaginary rollouts for action optimization. Compared to operating directly in pixel or state space, the latent abstraction reduces dimensionality and ignores noise, making dynamics learning more efficient. At test time, planning is typically posed as optimizing an action sequence by rolling the model forward and minimizing a cost function between the goal and the predicted states in the latent space.

\begin{figure}
\center
\includegraphics[width=\linewidth]{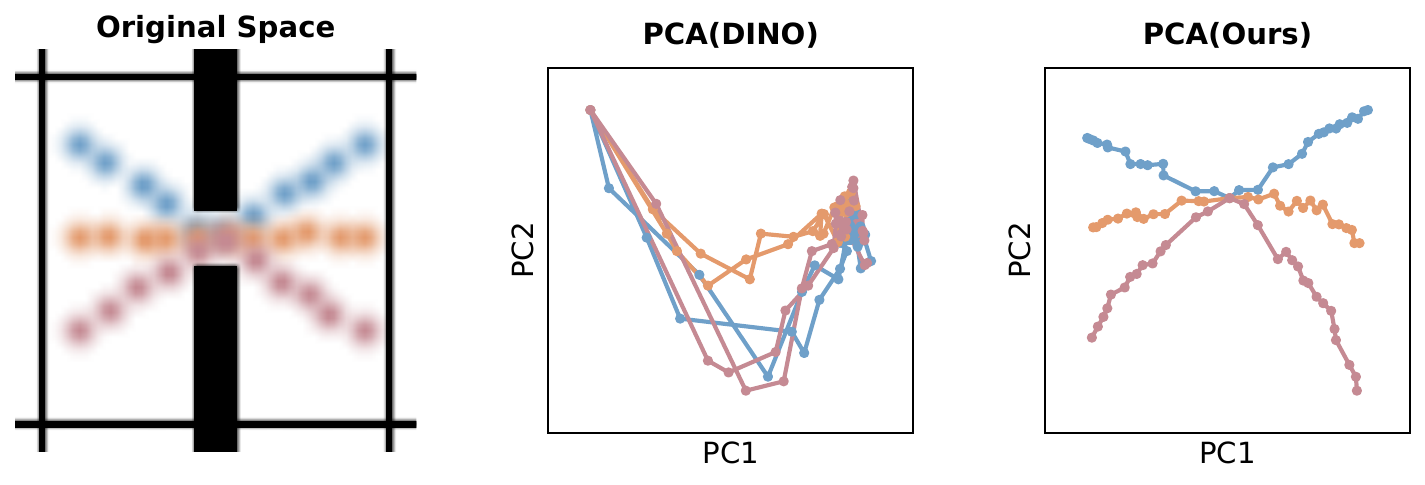}
\caption{Latent trajectories encoded by a pretrained visual encoder are usually highly curved, increasing the difficulty of prediction and planning. We learn a representation space where feasible trajectories are straighter to facilitate latent planning.
}
\vspace{-0.2in}
\label{img:teaser}
\end{figure}

In practice, however, optimization in the learned latent space remains challenging. The induced planning objective is typically highly non-convex, potentially causing gradient-based optimizers to struggle. As a result, many successful practices~\citep{hafner2019planet, hansen2023td, dinowm, sobal2025learning, terver2026drivessuccessphysicalplanning} rely on search-based methods such as CEM~\citep{rubinstein1997optimization} or MPPI~\citep{mppi}, which achieve competitive performance but introduce a substantial compute burden and latency. Moreover, commonly used goal cost metrics based on Euclidean distance can be misleading if the embedding space is not properly regularized. In particular, when latent trajectories are highly curved, straight-line distances in embedding space misrepresent the geodesic distance along feasible transitions. These challenges call for better representations that facilitate latent planning.

\begin{figure*}
    \centering
    \begin{subfigure}[t]{\textwidth}
        \centering
        \includegraphics[width=\textwidth]{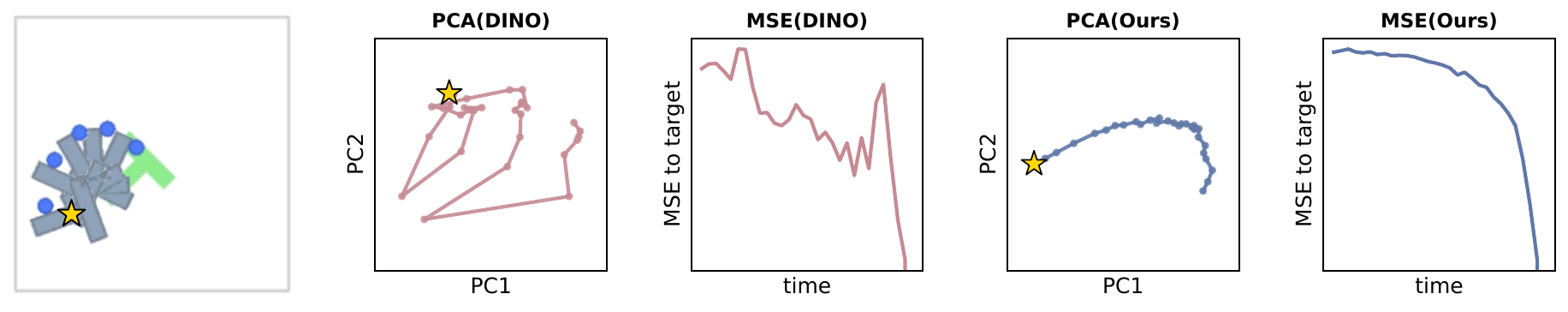}
    \end{subfigure}
    \begin{subfigure}[t]{\textwidth}
        \centering
        \includegraphics[width=\textwidth]{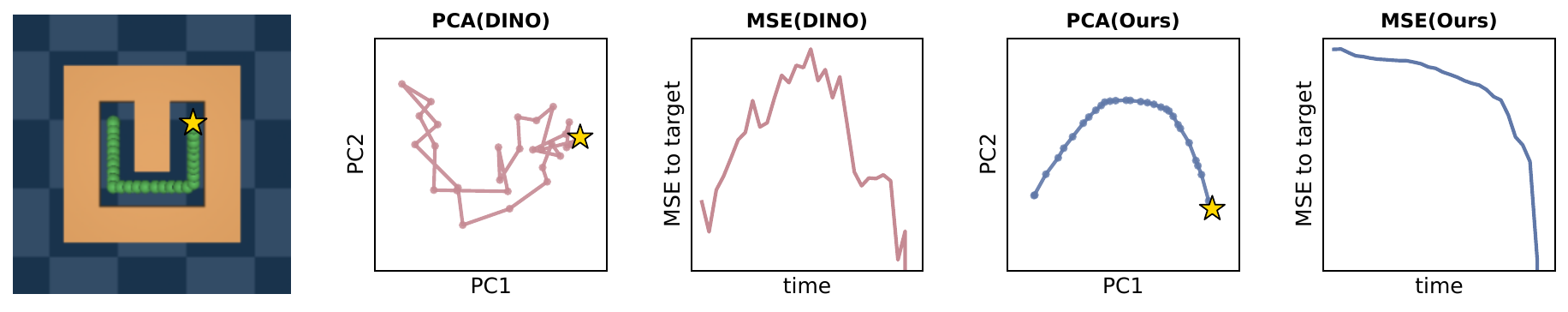}
    \end{subfigure}
\caption{Latent trajectories before vs.\ after straightening. The upper PushT example is a rotation and the bottom UMaze example shows the agent traveling from the top-left to the top-right, with the star denoting the target. Straightening yields less curved and smoother trajectories, and makes Euclidean distance a more faithful proxy for geodesic progress towards the goal. More examples are in~\Cref{app:pca}.}
\vspace{-0.1in}
\label{img:pca_mse}
\end{figure*}

What is a ``good'' representation for latent planning? Although general-purpose visual pretraining provides powerful semantic-aware features, it is not tailored to the dynamics of the environment and often retains plenty of planning-irrelevant low-level details. We argue that planning could benefit from representations that are (i) sufficient for predicting dynamics but without task-irrelevant information and (ii) properly regularized so that embedding distances reflect the geodesic distance and gradient-based optimization is reliable. With such representations, we can exploit the differentiability of latent world models and enable efficient gradient-based planning, bypassing the need for computationally expensive search-based methods.

Inspired by the perceptual straightening hypothesis in human vision~\citep{perceptual_straightening}, which posits that visual systems transform complex natural videos into straighter internal representations, we introduce a simple approach to straighten latent trajectories for planning. Concretely, we jointly learn an encoder and a predictor of a Joint-Embedding Predictive Architecture (JEPA) world model, while imposing regularization on the curvature of latent trajectories during training. We
find that the JEPA prediction objective alone induces \emph{implicit straightening} to some extent, and introducing an \emph{explicit} curvature regularizer further strengthens and
stabilizes this effect. The resulting encoded trajectories are significantly straighter, with Euclidean distances better aligned with geodesic distances~(\Cref{img:pca_mse}). We prove that reducing curvature improves convergence of gradient-based planners, and observe superior empirical gains across a suite of goal-reaching tasks: open-loop planning success improves by 20--60\% and MPC by 20--30\% with a simple gradient-based planner.

\begin{figure*}[t]
\center
\includegraphics[width=0.9\textwidth]{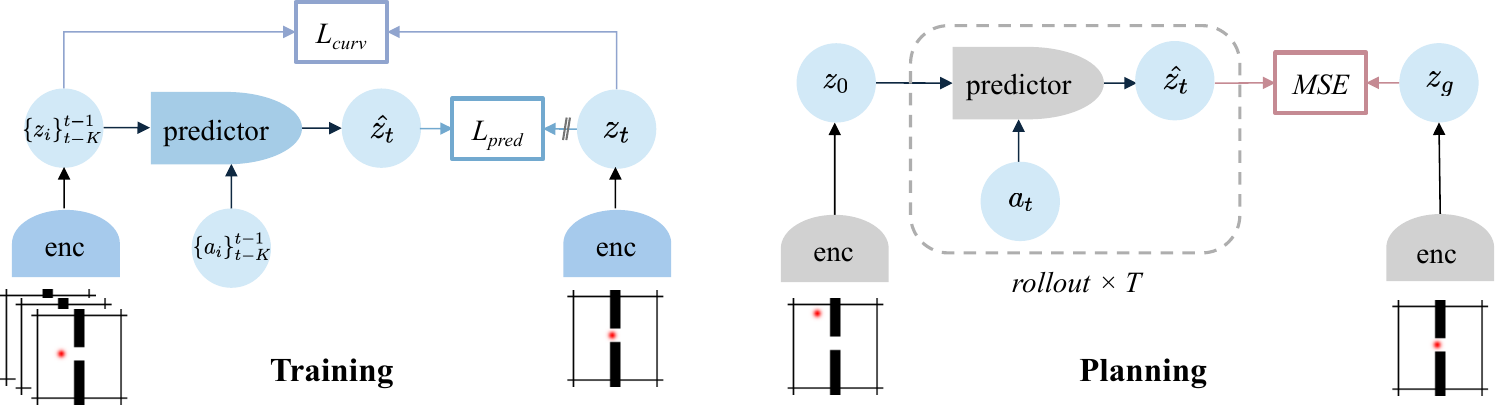}
\caption{During training, we minimize the prediction loss between the predicted embedding $\hat{z}_{t}$ and the target embedding ${z}_{t}$ with stop-grad in the target branch, and minimize the local curvature of embeddings. During planning, we roll out for the horizon T using the trained predictor and select optimal actions that minimize the cost between the predicted terminal state $\hat{z}_{T}$ and the goal embedding $z_g$.
}
\label{img:train}
\end{figure*}

\section{Related Work}

While early visual world models directly predict in pixel spaces and use generated images for control~\citep{oh2015actionconditionalvideopredictionusing, finn2017deep,ebert2018visual, unipi}, an increasing number of recent works first encode high-dimensional sensory inputs into compact latent representations and plan in the resulting latent space. Learning representations is central to these latent world models. 

To obtain meaningful representations for world modeling, prior methods add reconstruction-based objectives when training the encoder along with the predictor~\citep{watter2015embedcontrollocallylinear, zhang2019solar, Levine2020pcc, worldmodel,hafner2019planet, Hafner2020Dream, iris, transformer100k}. However, these reconstruction objectives overemphasize low-level visual details that are unnecessary for planning and may fail to capture task-relevant information. More recent approaches decouple perception from dynamics by leveraging strong pretrained visual encoders~\citep{nair2022r3muniversalvisualrepresentation, dinowm, bar2025navigation, dexwm, bai2025wholebodyconditionedegocentricvideo, assran2025vjepa2selfsupervisedvideo}. Closest to our setup, DINO-WM~\citep{dinowm} trains task-agnostic predictors and plans directly in frozen DINOv2~\citep{oquab2024dinov2} feature space. While DINOv2 features provide high-quality visual representations, they are not optimized for planning and may lead to planning objectives that are challenging to optimize. In this work, we improve the representation space for planning by introducing a curvature regularization during world model training.

Joint-Embedding Predictive Architecture (JEPA) emerges as a promising paradigm for world models by learning representations through prediction~\citep{jepa, vjepa, assran2025vjepa2selfsupervisedvideo}. It aims to capture predictable structure without retaining unpredictable low-level details, making it more effective and efficient than reconstruction objectives~\citep{vanassel2025jointembeddingvsreconstruction}. This paradigm has been shown to be effective for predictive modeling and planning, with training from scratch on offline simulator data~\citep{sobal2025learning} and large-scale real-world video pretraining followed by action-conditioned post-training for robotic planning~\citep{assran2025vjepa2selfsupervisedvideo}. Our work also belongs to the JEPA family and focuses on learning better representations.

Temporal Contrastive Learning~\citep{sermanet2018timecontrastivenetworksselfsupervisedlearning, tclr, eysenbach2024inference, yang2025memory} is also a popular paradigm for learning representations that can reflect the temporal relationships. It encourages temporally close frames to have similar embeddings while distant frames have more dissimilar embeddings through InfoNCE loss~\citep{radford2021learningtransferablevisualmodels}. However, how to choose positive and negative samples requires careful tuning and this objective might push away geodesically close states if suboptimal trajectories are used. Instead, our regularization-based method does not require negatives and applies \emph{local} straightening without requiring expert trajectories. 

Motivated by the perceptual straightening hypothesis in human vision~\citep{perceptual_straightening}, some prior works have examined implicit straightening in pretrained visual encoders~\citep{harrington2023exploring,interno2025aigenerated} or used it as an objective to obtain robust video models~\citep{niu2024learning, bagad2025chirality}. Relatedly, early work on learning linearized video representations also regularizes the curvature of latent trajectories~\citep{goroshin2015learninglinearizeuncertainty}. Implicit straightening is also observed in autoregressive language models optimized for next-word prediction~\citep{hosseini2023large, hosseini2026context}. To the best of our knowledge, however, none of these prior works have studied the impact of straightening on world modeling and planning. We further discuss the connection of our work to a broader literature in control and learning plannable representations in~\Cref{sec:related_cont}.

\section{Temporal Straightening}
We consider control tasks with high-dimensional observations $o_t \in \mathbb{R}^{n_o}$ of an agent interacting with its environment using actions $a_t \in \mathbb{R}^{n_a}$. Our goal is to learn a world model that maps observations to a latent space and models the dynamics in this space, which we use for latent planning. 

In this section, we first outline the architecture of our world model, then define the training objectives with a novel geometric regularization that straightens latent trajectories. 

\subsection{World Model}
Our world model predicts future states in a learned latent space and consists of three components: a sensory encoder, an action encoder, and a predictor. 

\paragraph{Sensory encoder.}
The sensory encoder $\mathcal{E}^s_\phi$ maps raw observations $o_t$ into latent representations
\begin{align}
z_t \in \mathbb{R}^{d} = \mathcal{E}^s_\phi(o_t).
\end{align}

The sensory encoder can be any function that maps observations to latent representations. For visual observations, the encoder may preserve spatial structure or collapse it into a global vector representation.

\paragraph{Action encoder.}
Each action $a_t \in \mathbb{R}^{n_a}$ is mapped to a latent action embedding via 
\[
\mathcal{E}^a_\psi: \mathbb{R}^{n_a} \to \mathbb{R}^{d_a}.
\]

\paragraph{Predictor.}
The predictor \mbox{$f_\theta: \mathbb{R}^{K \times d} \times \mathbb{R}^{K \times d_a} \to \mathbb{R}^{d}$} models transitions in the latent space. Given a history of $K$ past latent states and actions, it predicts the next latent state
\begin{equation}
\hat{z}_{t} = f_\theta \left( \{z_i\}_{i=t-K}^{t-1}, \{\mathcal{E}^a_\psi(a_i)\}_{i=t-K}^{t-1} \right). \end{equation}

\subsection{Straightening Latent Trajectories}
We seek to straighten the latent space induced by the sensory encoder $\mathcal{E}^s_\phi$ by penalizing the curvature along trajectories. Let $z_t, z_{t+1}$, and $z_{t+2}$ be three consecutive latent representations  obtained by encoding observations $o_t, o_{t+1}$, and $o_{t+2}$ using $\mathcal{E}^s_\phi$ . We define approximate latent velocity vectors
\begin{equation}
v_t = z_{t+1} - z_t, \quad v_{t+1} = z_{t+2} - z_{t+1},
\end{equation}
and seek to minimize the angle between them, or equivalently maximize their cosine similarity
\begin{equation}
\label{eq:cossim}
\mathcal{C}= \frac{v_t \cdot v_{t+1}}{||v_t||_2 \cdot ||v_{t+1}||_2}.
\end{equation}

\subsection{Training Objective.}

The parameters $\phi, \psi$ and $\theta$ of the world model components $\mathcal{E}^s_\phi$, $\mathcal{E}^a_\psi$ and $f_\theta$ are trained jointly to minimize prediction error and enforce straightened trajectories. 

\paragraph{Prediction objective.} We minimize the MSE between the predicted and target latent states $\hat{z}_{t+1}$ and $z_{t+1}$:

\begin{equation}
\mathcal{L}_{pred} = || \hat{z}_{t+1} - \operatorname{sg}(z_{t+1}) ||_2^2 \\ \text{ },
\label{eq:pred}
\end{equation}
where $\mathrm{sg}$ denotes the stop-gradient operation to prevent collapse of the latent space.

\paragraph{Straightening objective.}
We minimize trajectory curvatures by minimizing the negative cosine similarity
\begin{equation}
\mathcal{L}_{curv} = 1 - \mathcal{C}.
\label{eq:curvature}
\end{equation}
This straightening loss can be applied to any differentiable sensory encoder, either in isolation or jointly with the prediction objective.

\paragraph{Overall objective.}
The total training objective combines prediction and straightening as

\begin{equation}
\mathcal{L}_{total} = \mathcal{L}_{pred} + \lambda \mathcal{L}_{curv},
\end{equation}

where $\lambda \geq 0$ controls the strength of the straightening.

\paragraph{Collapse prevention.}
Since our encoder is trainable, the model is likely to produce degenerate solutions in which all latent representations collapse to a constant. Common anti-collapse strategies can be regularization-based~\citep{bardes2022vicregvarianceinvariancecovarianceregularizationselfsupervised, zhu2024variancecovarianceregularizationimprovesrepresentation,balestriero2025lejepa,kuang2026rectified}, contrastive-based~\citep{SimCLR,moco}, and stop-gradient-based~\citep{chen2020simsiam, byol}. Our curvature regularizer is \emph{orthogonal to
these anti-collapse methods} and can be combined with any of them. We use stop-grad for major experiments due to its simplicity and efficiency, as it does not require negative samples or introduce new hyperparameters. We apply stop-gradient to the target latent in the prediction loss~\eqref{eq:pred} to prevent the gradients from $\mathcal{L}_{pred}$ from being backpropagated through the target branch. 
Although a collapsing solution is still possible in theory, stop-grad has been shown to be effective in self-supervised vision learning~\citep{chen2020simsiam}, and also effective in our experiments.

\begin{figure}[t]
\center
    \begin{subfigure}[t]{0.22\textwidth}
        \centering
        \includegraphics[width=\textwidth]{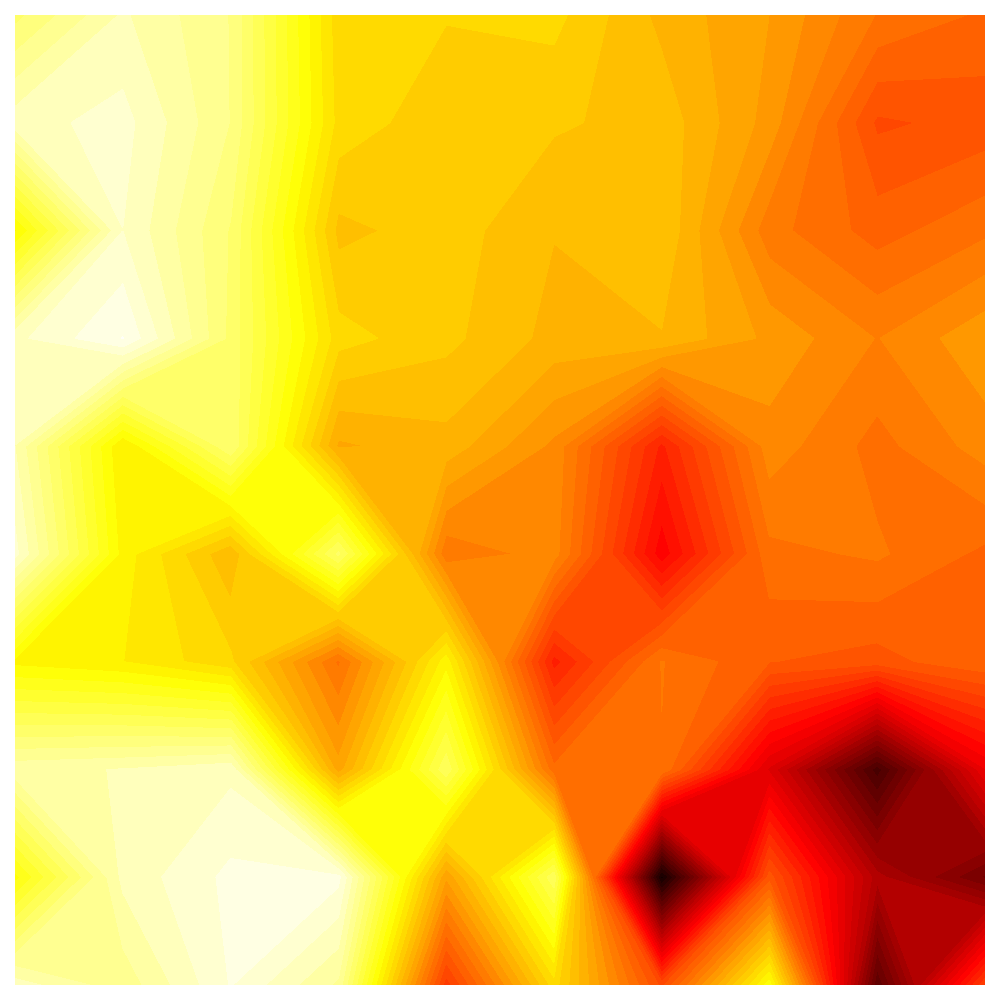}
        \caption{DINOv2}
        \label{fig:wall}
    \end{subfigure}
    \hspace{0.1in}
    \begin{subfigure}[t]{0.22\textwidth}
        \centering
        \includegraphics[width=\textwidth]{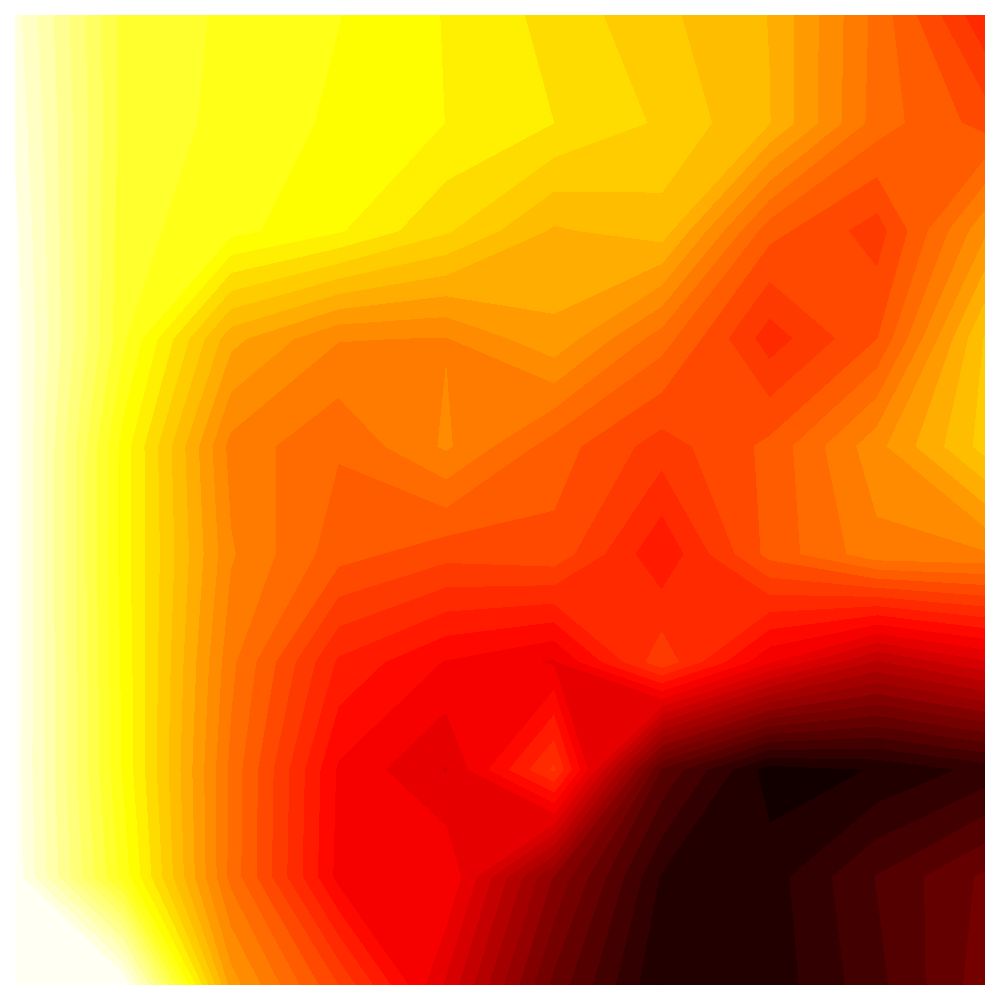}
        \caption{Straightened}
        \label{fig:pointmaze}
    \end{subfigure}
\caption{
Action-Space Loss Landscape. We pick one test sample from PushT with a planning horizon of 25 steps. For each coordinate $(a_x, a_y)$ in the grid, we fix the first action and optimize the remaining actions in the planning horizon to minimize the terminal goal cost. The heatmap represents the minimum attainable loss for each initial action choice, with darker colors indicating lower loss. The loss landscape is closer to being convex after straightening.
}
\vspace{-0.1in}
\label{img:loss_landscape}
\end{figure}

\section{Planning with Straightened Dynamics}
\label{sec:theory}
In this section, we present a theoretical analysis on the effect of straightening in the case of a linear dynamical system and show that straightened latent dynamics lead to better convergence in gradient-based planning.

We consider a goal-reaching task where we optimize an action sequence
$\mathbf a=(a_0,\dots,a_{K-1})\in\mathbb R^{K\times d_a}$ over a horizon $K$ to reach a target latent goal $z_g$.
For simplicity, we use the mean-squared terminal error,
\begin{equation}\label{eq:obj}
\mathcal L(\mathbf a)=\|z_K-z_g\|_2^2,
\qquad
z_K=\Phi(\mathbf a),
\end{equation}
where $\Phi$ denotes unrolling the learned latent dynamics from a fixed initial state $z_0$.

\begin{assumption}[Linear latent dynamics]\label{as:linear}
For analysis, we consider linear latent dynamics 
\begin{equation}\label{eq:linear}
f: (z_t, a_t) \to Az_t+Ba_t,
\qquad
\text{s.t.} \quad
z_{t+1}=Az_t+Ba_t,
\end{equation}
where $A\in\mathbb R^{d\times d}$ and $B\in\mathbb R^{d\times d_a}$. We first state results for $d_a=d$ and $B$ invertible; see Remark~\ref{rem:lowdim} for $d_a<d$.
\end{assumption}

\begin{definition}[$\varepsilon$-straight transition]\label{def:eps}
Under the linear dynamics, we call $f$ \emph{$\varepsilon$-straight} if
\begin{equation}\label{eq:eps_def}
\|A-I\|_2 \leq \varepsilon.
\end{equation}
The term ``straight'' reflects that, as $\varepsilon$ tends to $0$, the dynamics of $f$ approach those of the reference function $g: (z_t, a_t) \to z_t+Ba_t$, where the state evolves linearly along a straight trajectory modified only by the control input. We are primarily interested in the regime where $\varepsilon$ is small.
\end{definition}

\begin{remark}[Cosine similarity as a practical proxy]\label{rem:cos_proxy}
In practice, we regularize temporal straightness using the cosine similarity between consecutive latent velocities~\eqref{eq:cossim}. Under mild bounded-variation assumptions on velocity magnitudes and smooth actions, a large
cosine similarity implies that $(A-I)$ is small along visited velocity directions. Detailed proofs are in~\Cref{app:theory_cos}.
\end{remark}

\begin{theorem}[Conditioning of the planning Hessian]\label{thm:cond}
Under Assumption~\ref{as:linear} with $d_a=d$ and $B$ invertible, unrolling~\eqref{eq:linear} yields
\[
z_K=A^{K}z_0+\sum_{t=0}^{K-1}A^{K-1-t}Ba_t,
\]
so $z_K$ is affine in $\mathbf a$ and the planning Hessian is
\[
H:=\nabla_{\mathbf a}^2\mathcal L(\mathbf a)=2J_\Phi^\top J_\Phi\succeq 0,
\]
where
$J_\Phi=\bigl[\,A^{K-1}B,\; A^{K-2}B,\; \cdots,\; B\,\bigr]
\in\mathbb R^{d\times Kd}.
$
Let $\mathcal W_K:=J_\Phi J_\Phi^\top=\sum_{k=0}^{K-1}A^kBB^\top(A^\top)^k$ be the finite-horizon
controllability Gramian~\citep{kailath1980linear,sontag1998mathematical,chen1999linear}. Then the
\emph{effective} condition number $\kappa_{\mathrm{eff}}(H):=\sigma_{\max}(H)/\sigma_{\min}^+(H)$ satisfies
\begin{equation}\label{eq:kappaA}
\begin{aligned}
\kappa_{\mathrm{eff}}(H)=\kappa(\mathcal W_K)
&\le
\kappa(B)^2\,
\frac{\sum_{k=0}^{K-1}\sigma_{\max}(A)^{2k}}
     {\sum_{k=0}^{K-1}\sigma_{\min}(A)^{2k}} \\
&\le
\kappa(B)^2\,\kappa(A)^{2(K-1)},
\end{aligned}
\end{equation}
where $\kappa(A):=\sigma_{\max}(A)/\sigma_{\min}(A)$.
Moreover, if the transition is $\varepsilon$-straight with $\varepsilon=\|A-I\|_2<1$, then
\begin{equation}\label{eq:eps_bound}
\kappa_{\mathrm{eff}}(H)
\;\le\;
\kappa(B)^2\left(\frac{1+\varepsilon}{1-\varepsilon}\right)^{2(K-1)}.
\end{equation}
For $\varepsilon\le\tfrac12$, this gives $
\kappa_{\mathrm{eff}}(H)
\;\le\;
\kappa(B)^2 e^{6\varepsilon K}$.

Proofs are in~\Cref{app:theory_hessian}.
\end{theorem}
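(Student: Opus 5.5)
I will proceed in four steps. First I establish the structural identities, then I bound the spectrum of the Gramian, then I convert to $\varepsilon$, and finally I derive the exponential form.

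\emph{Structural identities.} Unrolling \eqref{eq:linear} by induction gives the stated formula for $z_K$, so $z_K=c+J_\Phi\mathbf a$ with $c=A^Kz_0$. Therefore $\mathcal L(\mathbf a)=\|J_\Phi\mathbf a+c-z_g\|^2$ is a quadratic, and its Hessian is the constant matrix $H=2J_\Phi^\top J_\Phi\succeq0$.

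The nonzero eigenvalues of $J_\Phi^\top J_\Phi$ coincide with those of $J_\Phi J_\Phi^\top=\mathcal W_K$. This follows from the SVD of $J_\Phi$, and the factor $2$ cancels in the ratio. Because $B$ is invertible, the block $k=0$ gives $\mathcal W_K\succeq BB^\top\succ0$, so $\mathcal W_K$ is full rank. Hence $\sigma^+_{\min}(H)=2\lambda_{\min}(\mathcal W_K)$ and $\kappa_{\mathrm{eff}}(H)=\kappa(\mathcal W_K)$.

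\emph{Spectral bounds on the Gramian.} For a unit vector $x$, write
\[
x^\top\mathcal W_K x=\sum_{k}\|B^\top (A^\top)^k x\|^2 .
\]
I bound each term from both sides:
\[
\sigma_{\min}(B)^2\sigma_{\min}(A)^{2k}\le \|B^\top (A^\top)^k x\|^2\le \sigma_{\max}(B)^2\sigma_{\max}(A)^{2k}.
\]
These follow from $\sigma_{\min}(MN)\ge\sigma_{\min}(M)\sigma_{\min}(N)$ and submultiplicativity. Taking the max and min over $x$ and dividing gives the first inequality in \eqref{eq:kappaA}.

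For the second inequality, I compare the sums termwise: $\sigma_{\max}^{2k}\le \kappa(A)^{2(K-1)}\sigma_{\min}^{2k}$ for each $k\le K-1$. This uses $\kappa(A)\ge1$, since $(\sigma_{\max}/\sigma_{\min})^{2k}\le\kappa^{2(K-1)}$. Summing over $k$ gives the claim. If $A$ is singular, the bound is read as $+\infty$ and holds trivially; in the $\varepsilon$-straight case $A$ is automatically nonsingular.

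\emph{From $\varepsilon$-straightness to singular values.} Write $A=I+E$ with $\|E\|_2\le\varepsilon<1$. Weyl's inequality for singular values gives $\sigma_{\max}(A)\le1+\varepsilon$ and $\sigma_{\min}(A)\ge 1-\varepsilon>0$. Substituting into $\kappa(A)^{2(K-1)}$ yields \eqref{eq:eps_bound}.

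\emph{Exponential form.} I need $\log\frac{1+\varepsilon}{1-\varepsilon}\le 3\varepsilon$ for $\varepsilon\le\frac12$. With this, $\left(\frac{1+\varepsilon}{1-\varepsilon}\right)^{2(K-1)}\le e^{6\varepsilon(K-1)}\le e^{6\varepsilon K}$. To prove the inequality, let $h(\varepsilon)=3\varepsilon-\log(1+\varepsilon)+\log(1-\varepsilon)$. Then $h(0)=0$ and
\[
h'(\varepsilon)=3-\frac{2}{1-\varepsilon^2}.
\]
This derivative is nonnegative iff $\varepsilon^2\le 1/3$, which holds for $\varepsilon\le\frac12$ since $\frac14\le\frac13$.

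\emph{Main obstacle.} The step that needs the most care is identifying $\kappa_{\mathrm{eff}}(H)$ with $\kappa(\mathcal W_K)$. $H$ is a $Kd\times Kd$ matrix of rank at most $d$, so it is singular whenever $K>1$. The argument therefore has to be phrased via nonzero singular values, together with the full-rank observation $\mathcal W_K\succeq BB^\top$. The remaining steps are standard singular-value inequalities.
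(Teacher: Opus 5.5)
Your proof is correct and follows essentially the same route as the paper's: the identity $\kappa_{\mathrm{eff}}(H)=\kappa(\mathcal W_K)$ via the shared nonzero spectrum of $J_\Phi^\top J_\Phi$ and $J_\Phi J_\Phi^\top$, then uniform two-sided bounds on each term of $x^\top\mathcal W_K x$, then a termwise (equivalently max-ratio) comparison for the second inequality, then Weyl's inequality for the $\varepsilon$-straight case, and finally $\ln\frac{1+\varepsilon}{1-\varepsilon}\le 3\varepsilon$. Your additions tighten details the paper leaves implicit: the observation $\mathcal W_K\succeq BB^\top\succ 0$, which justifies $\sigma_{\min}^+(H)=2\lambda_{\min}(\mathcal W_K)$, and the explicit derivative check of the logarithmic inequality, which the paper only cites as standard.
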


\begin{remark}[Low-dimensional actions]\label{rem:lowdim}
When $d_a<d$, $B$ is not invertible and $\mathcal W_K$ (hence $H$) may be singular outside the controllable subspace.
Theorem~\ref{thm:cond} holds on $\mathcal S_K=\mathrm{range}(\mathcal W_K)$ when $\kappa_{\mathrm{eff}}$ is computed
using $\sigma_{\min}^+(\mathcal W_K)$; see~\Cref{app:theory_hessian}.
\end{remark}

Theorem~\ref{thm:cond} shows that $\varepsilon$-straight transitions control the condition number of the planning Hessian: when $\varepsilon=\|A-I\|_2$ is small, the Gramian remains better conditioned, yielding $\kappa_{\mathrm{eff}}(H)$ that grows slowly with the horizon. Since the planning objective is quadratic with Hessian $H\succeq 0$, gradient descent converges linearly at a rate controlled by the condition number, so the improved bounds on $\kappa_{\mathrm{eff}}(H)$ translate to faster optimization in practice. For nonlinear predictors $z_{t+1}=f_\theta(z_t,a_t)$, analogous guarantees require controlling products of state-dependent Jacobians and higher-order terms, which can be an exciting future work direction. 

Empirically, we observe that straightening yields a loss landscape with reduced non-convexity under nonlinear dynamics (\Cref{img:loss_landscape}). In the next section, we show that it improves gradient-based planning.

\section{Experiments}

To test the effectiveness of the proposed method, we evaluate planning on four environments: Wall~\citep{dinowm, sobal2025learning}, PointMaze UMaze and a more complicated medium maze~\citep{fu2020d4rl}, and PushT~\citep{chi2024pusht}. We compare against the baseline DINO-WM~\citep{dinowm}, which builds on frozen DINOv2 spatial features or CLS tokens. Following DINO-WM's setup, we use a frameskip of 5 for all environments. Details on the environments and experiments are in~\Cref{app:datasets,app:experiments}.

\subsection{Architecture details}
\label{subsec:architecture-details}

Here, we describe the encoder and predictor architectures used to instantiate our world model.

\paragraph{Visual encoder.}
We consider two encoder setups for the visual encoder $\mathcal{E}^s_\phi$:
\begin{itemize}
    \item A \textbf{frozen pretrained} visual backbone with a lightweight projector: We use DINOv2~\citep{oquab2024dinov2} as the backbone.\footnote{We choose DINOv2 because it has shown the best empirical performance for latent planning on 2D navigation tasks~\citep{terver2026drivessuccessphysicalplanning}, outperforming DINOv3~\citep{dinov3} and V-JEPA2~\citep{assran2025vjepa2selfsupervisedvideo}.} Given an observation, the backbone produces spatial features $e_t \in \mathbb{R}^{M \times D}$. We add a trainable lightweight CNN projector $\mathcal{P}_\phi$ on top of the backbone, leading to
    \begin{equation}
    z_t^v = \mathcal{P}_\phi(e_t) \in \mathbb{R}^{m_v \times d_v},
    \end{equation}
    where we usually choose $m_v \le M$ and $d_v \le D$. The projector may reduce spatial resolution (pooling/striding), channel dimension, or both, encouraging abstraction and reducing computation.
    \item A ResNet~\citep{He2015DeepRL} trained \textbf{from scratch}, producing features $z_t^v \in \mathbb{R}^{m_v \times d_v}$ directly.
 \end{itemize}

\paragraph{Predictor.}
We use a ViT~\citep{dosovitskiy2021vit} as the dynamics predictor $f_\theta$. When available, proprioceptive states $p_t \in \mathbb{R}^{n_p}$ are encoded via $\mathcal{E}^p_\xi: \mathbb{R}^{n_p} \to \mathbb{R}^{d_{p}}$ and concatenated with each visual spatial feature. To condition on actions, we concatenate the action embeddings $z_t^a=\mathcal{E}^a_\psi(a_t)\in\mathbb{R}^{d_a}$ with the visual and proprioceptive embeddings before passing them to the predictor. We apply a temporal causal attention mask so tokens at time $t$ attend only to frames $\{t-K,\dots,t-1\}$, enabling frame-level autoregressive prediction.

\paragraph{Cosine similarity computation.}
The straightening loss (Eq.~\eqref{eq:cossim}) is only applied on visual latents $z_t^v$. Different implementations depend on whether latent representations preserve spatial structure:
\begin{itemize}
\item \textbf{Global features} ($n_v = 1$): Compute the cosine similarity directly between vectors.
\item \textbf{Spatial features} ($n_v > 1$): We consider four variants: (i) compute the cosine similarity per-patch and average across patches; (ii) flatten all spatial features and then compute the cosine similarity; (iii) average-pool the spatial features before cosine similarity; (iv) use a learnable aggregation head to aggregate spatial features before cosine similarity. We use (iv) in the main experiments and ablate these choices in~\Cref{app:straightening}.
\end{itemize}

\begin{figure*}[t]
\center
\centering
\includegraphics[width=\linewidth]{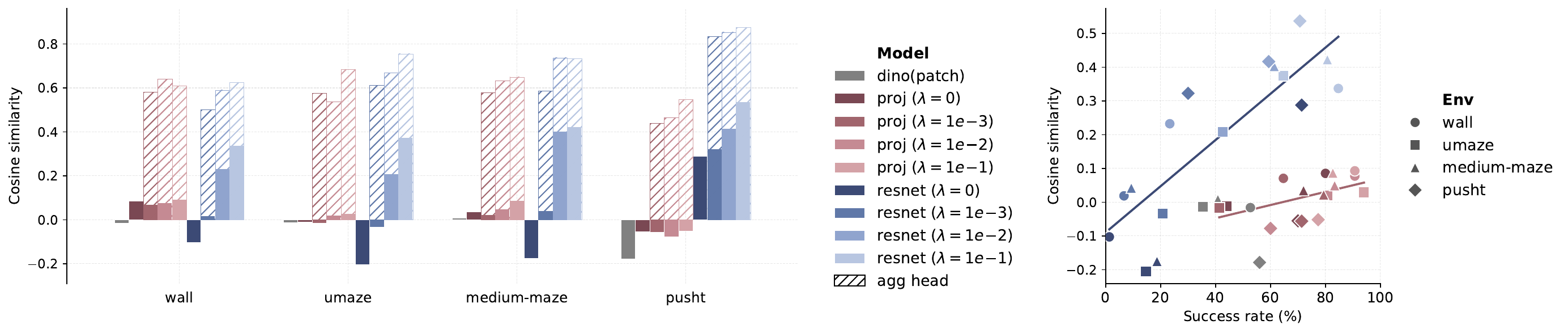}
\caption{Latent Curvature and Open-Loop GD Success Rate for Different Encoders. Higher cosine similarity indicates lower curvature. Here, we compare models with spatial features and report the average patch-wise cosine similarity. Given the same type of encoder, reduced curvature generally leads to higher success rates. 
}
\label{img:cos}
\end{figure*}

\begin{figure*}[t]
\center
    \begin{subfigure}[t]{0.46\textwidth}
        \centering
        \includegraphics[width=\textwidth]{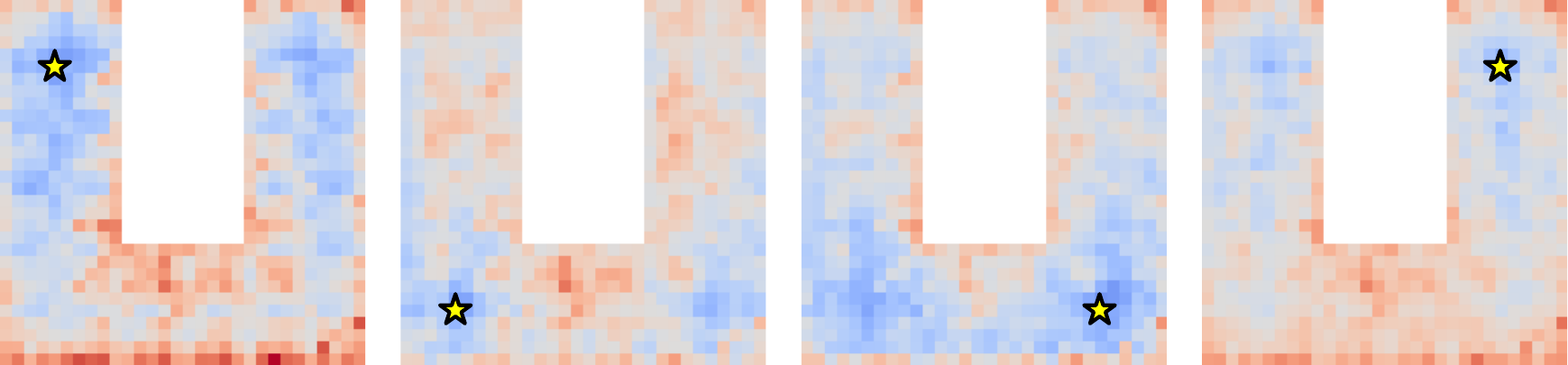}
        \caption{DINOv2 CLS embedding}
    \end{subfigure}
    \hfill
    \begin{subfigure}[t]{0.46\textwidth}
        \centering
        \includegraphics[width=\textwidth]{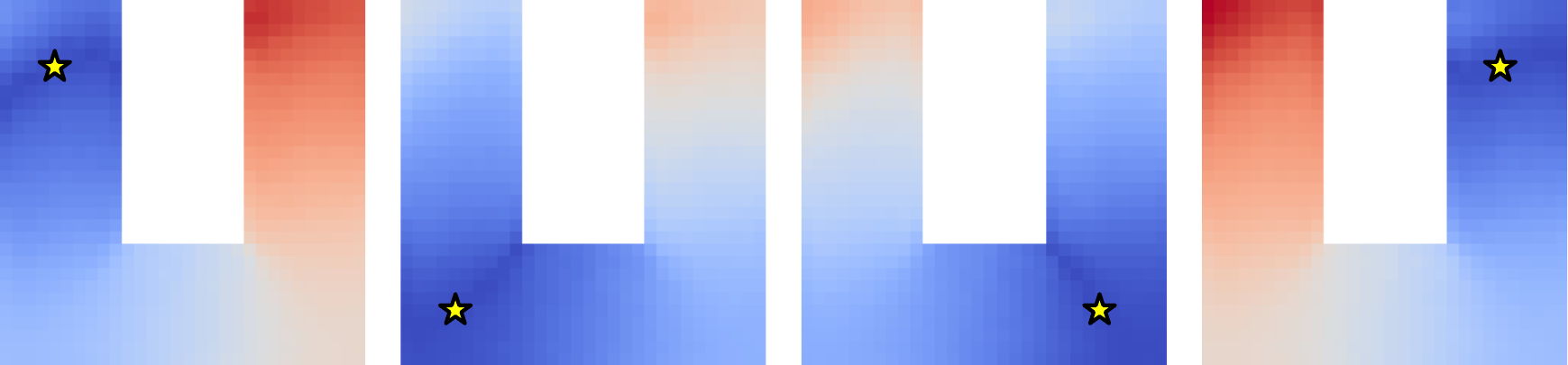}
        \caption{Straightened (agg head)}
        \label{img:heatmap_umaze_proj_agg}
    \end{subfigure}
    \hfill
    \begin{subfigure}[t]{0.46\textwidth}
        \centering
        \includegraphics[width=\textwidth]{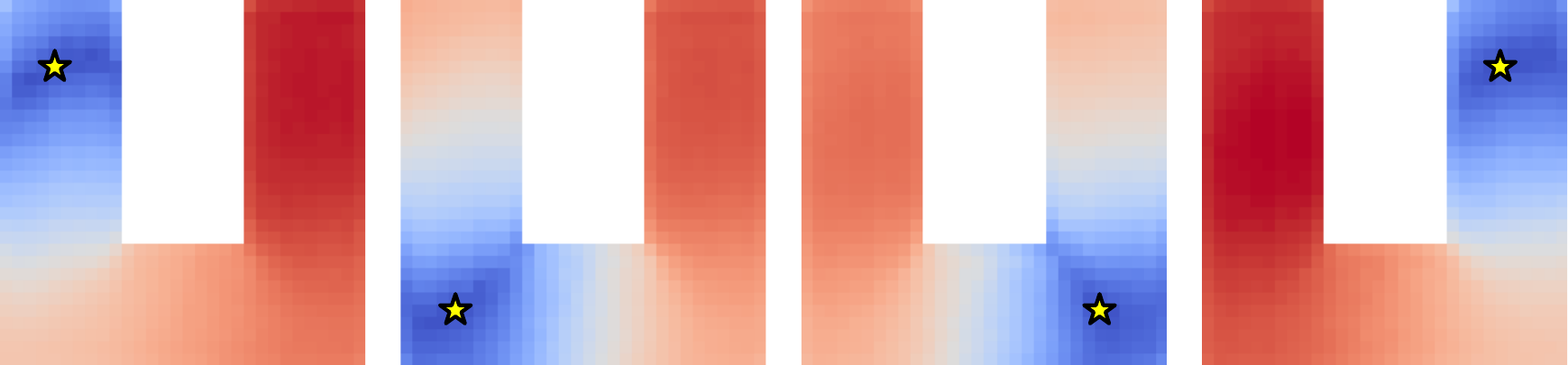}
        \caption{Straightened (spatial features)}
        \label{img:heatmap_umaze_proj_patch}
    \end{subfigure}
    \hfill
    \begin{subfigure}[t]{0.46\textwidth}
        \centering
        \includegraphics[width=\textwidth]{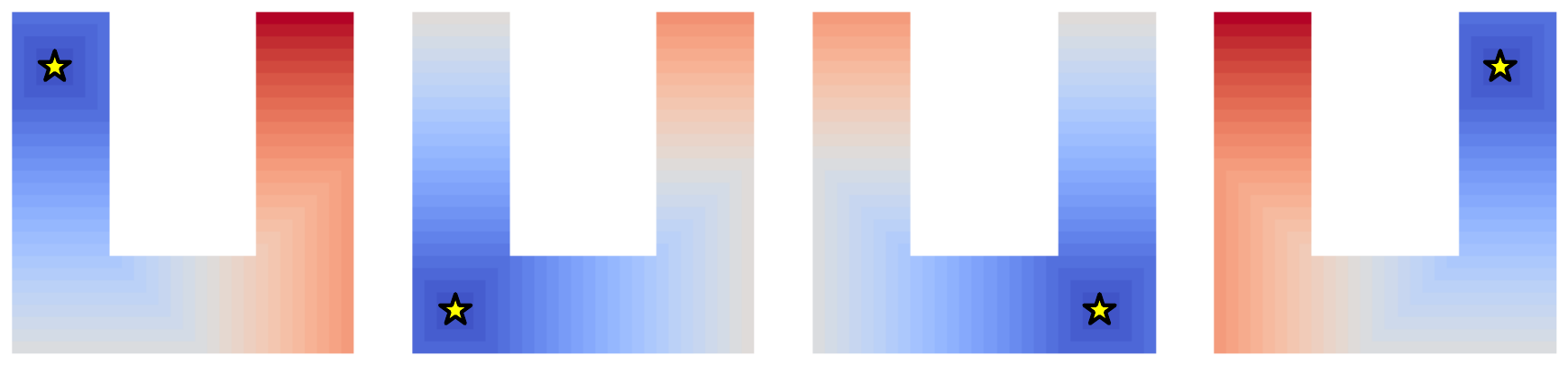}
        \caption{Ground-Truth using A-star}
    \end{subfigure}
\caption{Distance heatmaps of PointMaze (blue indicates small values, and red indicates large values). The yellow star represents the target, and we compute the Euclidean distance between its embedding and those of all other states in the maze. \Cref{img:heatmap_umaze_proj_agg,img:heatmap_umaze_proj_patch} use ResNet with output features $z \in \mathbb{R}^{14\times14\times8}$, trained with straightening regularization. After straightening, the latent distance accurately reflects the minimum number of steps required to reach the target. 
}
\label{img:heatmap}
\end{figure*}

\subsection{How good is the embedding space?}
\label{sec:analysis_emb}

We first inspect the learned embedding space before comparing the downstream planning performance. We measure latent trajectory curvatures and latent Euclidean distances to understand the impact of straightening. For interpretability, we train a VAE~\citep{kingma2013vae,Oord2017NeuralDR} decoder with a reconstruction loss, detaching latents to stop gradients from the encoder and predictor. 

We find that (i) \emph{implicit straightening} can happen in JEPA world models when training the encoder using the prediction loss alone; (ii) adding explicit curvature regularization further strengthens and stabilizes the straightening effect; (iii) straightening encourages the latent Euclidean distance to better align with the geodesic distance; (iv) near-perfect reconstruction can be attained with a very low feature dimensionality.

\begin{figure*}
    \centering
    \begin{subfigure}[t]{0.15\textwidth}
        \centering
        \includegraphics[width=\textwidth]{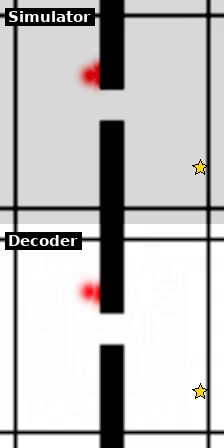}
        \caption{Wall: DINO}
    \end{subfigure}
    \hfill
    \begin{subfigure}[t]{0.15\textwidth}
        \centering
        \includegraphics[width=\textwidth]{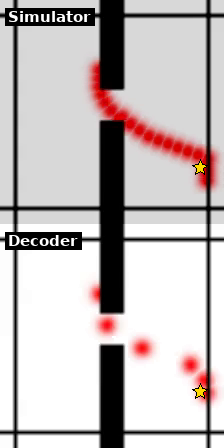}
        \caption{Wall: Ours}
    \end{subfigure}
    \hfill
    \begin{subfigure}[t]{0.15\textwidth}
        \centering
        \includegraphics[width=\textwidth]{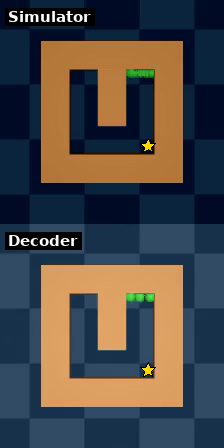}
        \caption{UMaze: DINO}
    \end{subfigure}
    \hfill
    \begin{subfigure}[t]{0.15\textwidth}
        \centering
        \includegraphics[width=\textwidth]{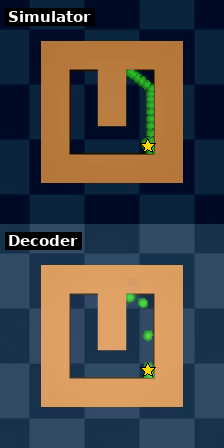}
        \caption{UMaze: Ours}
    \end{subfigure}
    \hfill
    \begin{subfigure}[t]{0.15\textwidth}
        \centering
        \includegraphics[width=\textwidth]{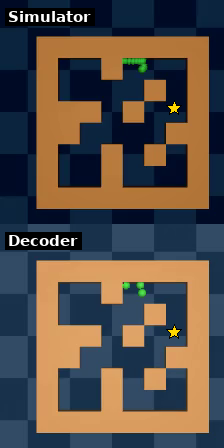}
        \caption{Medium: DINO}
    \end{subfigure}
    \hfill
    \begin{subfigure}[t]{0.15\textwidth}
        \centering
        \includegraphics[width=\textwidth]{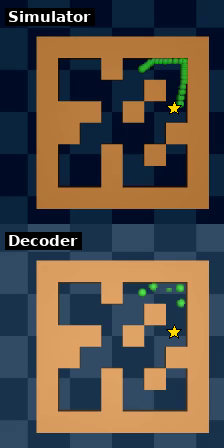}
        \caption{Medium: Ours}
    \end{subfigure}
\vspace{-0.05in}
\caption{Comparison of Open-loop GD Planning. The star denotes the target. For each subfigure, the upper row shows the overlaid rendered images from the simulator by executing the actions, and the bottom shows imaginary rollouts (with a frameskip of 5) decoded using a trained decoder. GD planners can easily get stuck with pretrained DINOv2 features, while straightening significantly increases success. More examples of open-loop planning are in~\Cref{app:traj}.}
\label{img:example_rollout}
\vspace{-0.15in}
\end{figure*}

\paragraph{Reduced curvature.} In~\Cref{img:cos}, we compare the curvature of test latent trajectories by computing the cosine similarity of the difference in adjacent frames as in~\Cref{eq:curvature}. We also visualize the latent trajectories using PCA as shown in~\Cref{img:pca_mse,app:pca}. 

The pretrained DINOv2 embedding space is highly curved as shown in the PCA plots and reflected by the low cosine similarities. The embedding space generally becomes straighter after training even without explicit straightening regularization. We attribute this \emph{implicit straightening} to the JEPA objective: it favors representations whose temporal evolution is easy to predict, so training pressure reduces abrupt directional changes in the latent trajectory. With the \emph{explicit straightening} regularization, the curvature of the embedding space is effectively reduced further. We observe that training a ResNet encoder from scratch generally yields lower curvatures than training a projector on top of a frozen pretrained backbone, likely because it offers greater representational flexibility to adapt the geometry to the dynamics.

When straightening is applied to the aggregation head, the curvature of the aggregated features is significantly reduced while the underlying spatial features are not forced to be overly straightened. For example, PushT has more complex object motions and the patch-wise cosine similarity is unable to faithfully capture the global state changes. The introduction of an aggregation head increases the flexibility of representation learning and generally leads to better planning performance (see \Cref{app:straightening}). We thus use this implementation for the main experiments.

\paragraph{Faithful distance.} Although DINOv2 is a strong visual encoder for various downstream vision tasks, it is not optimized for planning and control. As shown in~\Cref{img:pca_mse}, MSE (which is equal to the squared Euclidean distance) between pretrained DINOv2 features does not reflect the progress of moving towards the target. To better understand the limitation of DINOv2, we visualize the Euclidean distance between the embedding of a target state and all other states in the maze in~\Cref{img:heatmap}. We also compare with ground-truth geodesic distance maps, computed using the A-star search algorithm on the grid of the maze. More heatmaps from different environments and encoders are in~\Cref{app:heatmap}. 

Straightening results in a distance heatmap that closely aligns with the geodesic distance. Notably, the model is only trained on suboptimal, non-expert trajectories. Yet, it does not simply memorize the inefficient paths from the training data; instead, it learns to approximate the minimum number of steps required to transition between states. We also find that the spatial features and aggregated global features capture different levels of distance information. The spatial features preserve local geometry and thus yield fine-grained, locally discriminative distance variations, whereas global features provide a smoother, more coherent long-range signal that better reflects long-horizon distance-to-goal trends.

\vspace{-0.1in}
\paragraph{Sufficient information.} To examine whether or not these projected features preserve sufficient information for planning, we train a decoder to reconstruct images from latents. The decoder is solely for interpretability purposes and is detached from the world model via stop-gradient. Note that perfect reconstruction is not required, since planning only depends on task-relevant information. However, in our visually simple environments, even aggressively compressed features reconstruct the observations with high fidelity, as shown in~\Cref{img:example_rollout}. This indicates that the resulting features retain sufficient planning-relevant information.

\begin{table*}[t]
  \centering
  \caption{Success Rate of 50 Test Episodes (\%) using the GD planner. Values are mean $\pm$ std over three eval data seeds. The best values are \textbf{bold}. The shaded rows are ours while the rest is DINO-WM~\cite{dinowm}. For global features, the straightening strength $\lambda$ is selected using MPC on two held-out validation seeds ($\lambda{=}0.1$ unless marked: $^{\dag}\lambda{=}0.01$, $^{\ddag}\lambda{=}0.001$). All spatial features use $\lambda{=}0.1$.}
  \label{tab:main_results}
  \setlength{\tabcolsep}{5pt}
  \renewcommand{\arraystretch}{1}
  \resizebox{\linewidth}{!}{%
  \begin{tabular}{lcc@{\hspace{8pt}}|@{\hspace{8pt}}cc@{\hspace{10pt}}cc@{\hspace{10pt}}cc@{\hspace{10pt}}cc}
    \toprule
    \multirow{2}{*}{Encoder} & \multicolumn{2}{c}{Config} &
    \multicolumn{2}{c}{Wall} &
    \multicolumn{2}{c}{PointMaze -- UMaze} &
    \multicolumn{2}{c}{PointMaze -- Medium} &
    \multicolumn{2}{c}{PushT} \\
    \cmidrule(lr){2-3}\cmidrule(lr){4-5}\cmidrule(lr){6-7}\cmidrule(lr){8-9}\cmidrule(lr){10-11}
     & dim & $\mathcal{L}_{curv}$ 
     & Open-loop & MPC & Open-loop & MPC & Open-loop & MPC & Open-loop & MPC \\
    \midrule
    DINOv2 (CLS) & $1 \times 384$ & \xmark & \pmstat{28.67}{12.68} & \pmstat{66.67}{10.50} & \pmstat{25.33}{0.94} & \pmstat{82.67}{9.98} & \pmstat{20.00}{8.16} & \pmstat{67.50}{3.54} & \pmstat{19.33}{8.22} & \pmstat{28.00}{1.63} \\
    \rowcolor{Blue!5} DINOv2 (patch) + proj & $1 \times 384$ & \xmark & \pmstat{28.67}{0.94} & \pmstat{76.00}{4.90} & \pmstat{34.67}{1.89} & \pmstat{79.33}{2.49} & \pmstat{18.00}{1.63} & \pmstat{46.00}{3.27} & \pmstat{2.00}{1.63} & \pmstat{11.33}{3.40} \\
    \rowcolor{Blue!5} DINOv2 (patch) + proj & $1 \times 384$ & \cmark & \pmstat{32.00$^{\ddag}$}{7.12} & \pmstat{77.33$^{\ddag}$}{0.94} & \pmstat{38.67}{3.40} & \pmstat{96.00}{0.00} & \pmstat{22.67$^{\dag}$}{5.73} & \pmstat{78.00$^{\dag}$}{2.83} & \pmstat{2.00}{1.63} & \pmstat{8.67}{1.89} \\
    \rowcolor{Blue!5} ResNet (from scratch) & $1 \times 384$ & \xmark & \pmstat{58.67}{4.99} & \pmstat{68.00}{4.32} & \pmstat{58.00}{4.90} & \pmstat{88.67}{0.94} & \pmstat{73.33}{6.60} & \pmstat{89.33}{2.49} & \pmstat{40.00}{4.32} & \pmstat{70.00}{1.63} \\
    \rowcolor{Blue!5} ResNet (from scratch) & $1 \times 384$ & \cmark & \pmstat{86.67}{5.25} & \pmstat{97.33}{2.49} & \pmstat{63.33}{1.89} & \pmstat{97.33}{1.89} & \pmstat{79.33$^{\ddag}$}{5.25} & \pmstat{93.33$^{\ddag}$}{2.49} & \pmstat{58.67$^{\dag}$}{5.73} & \pmstat{83.33$^{\dag}$}{2.49} \\
    DINOv2 (patch) & $14 \times 14 \times 384$ & \xmark & \pmstat{52.67}{5.73} & \pmstat{76.67}{6.18} & \pmstat{35.33}{4.11} & \pmstat{80.67}{6.18} & \pmstat{40.83}{10.07} & \pmstat{76.67}{5.14} & \pmstat{56.00}{4.32} & \pmstat{66.00}{4.90} \\
    \rowcolor{Blue!5} DINOv2 (patch) + proj & $14 \times 14 \times 8$ & \xmark & \pmstat{80.00}{7.12} & \pmstat{90.67}{3.77} & \pmstat{44.00}{7.12} & \pmstat{81.33}{6.80} & \pmstat{72.00}{4.32} & \pmstat{96.67}{0.94} & \pmstat{70.00}{1.63} & \pmstat{78.67}{0.94} \\
    \rowcolor{Blue!5} DINOv2 (patch) + proj & $14 \times 14 \times 8$ & \cmark & \pmstat{\textbf{90.67}}{0.94} & \pmstat{\textbf{100.00}}{0.00} & \pmstat{\textbf{94.00}}{1.63} & \pmstat{\textbf{100.00}}{0.00} & \pmstat{\textbf{82.67}}{3.77} & \pmstat{98.67}{0.94} & \pmstat{\textbf{77.33}}{6.18} & \pmstat{85.33}{4.99} \\
    \rowcolor{Blue!5} ResNet (from scratch) & $14 \times 14 \times 8$ & \xmark & \pmstat{1.33}{1.89} & \pmstat{6.67}{1.89} & \pmstat{14.67}{4.99} & \pmstat{66.00}{9.09} & \pmstat{18.67}{4.11} & \pmstat{57.33}{4.71} & \pmstat{71.33}{7.36} & \pmstat{70.67}{10.50} \\
    \rowcolor{Blue!5} ResNet (from scratch) & $14 \times 14 \times 8$ & \cmark & \pmstat{84.67}{2.49} & \pmstat{\textbf{100.00}}{0.00} & \pmstat{64.67}{8.38} & \pmstat{98.67}{1.89} & \pmstat{80.67}{0.94} & \pmstat{\textbf{99.33}}{0.94} & \pmstat{70.67}{0.94} & \pmstat{\textbf{91.33}}{2.49} \\
    \bottomrule
  \end{tabular}%
  }
  \vspace{-0.1in}
\end{table*}

\begin{figure*}[t]
\center
    \begin{subfigure}[t]{0.24\textwidth}
        \centering
        \includegraphics[width=\textwidth]{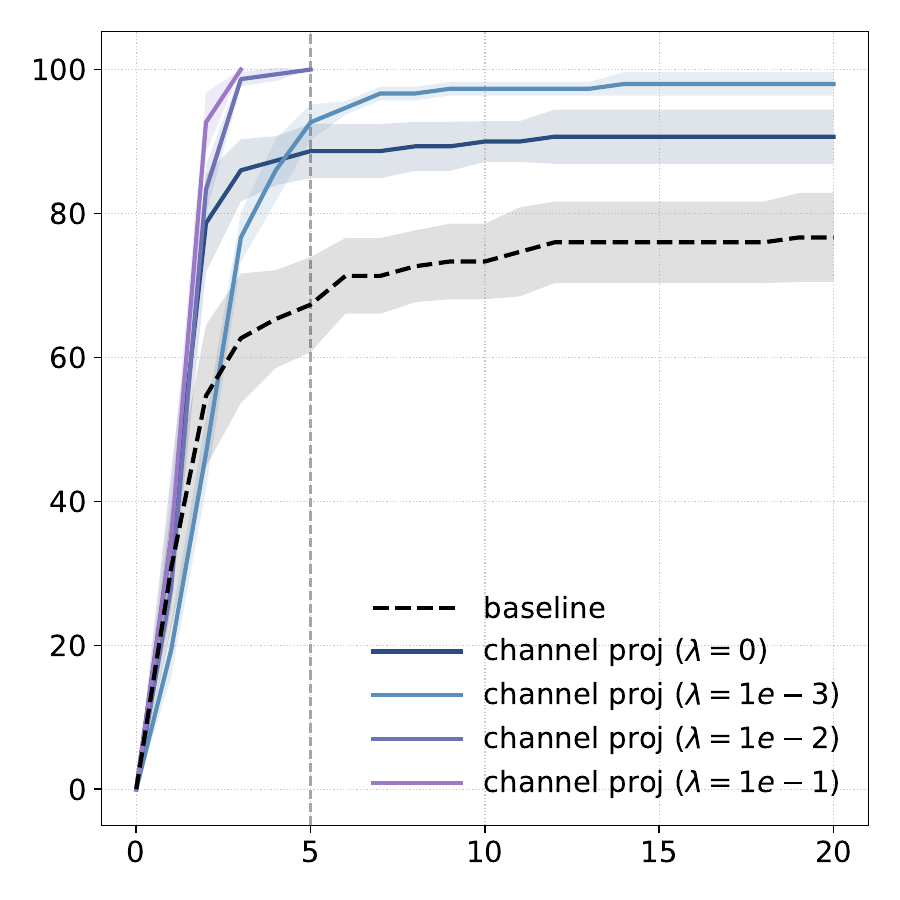}
        \vspace{-0.25in}
        \caption{Wall}
        \label{fig:mpc_wall}
    \end{subfigure}
    \hfill
    \begin{subfigure}[t]{0.24\textwidth}
        \centering
        \includegraphics[width=\textwidth]{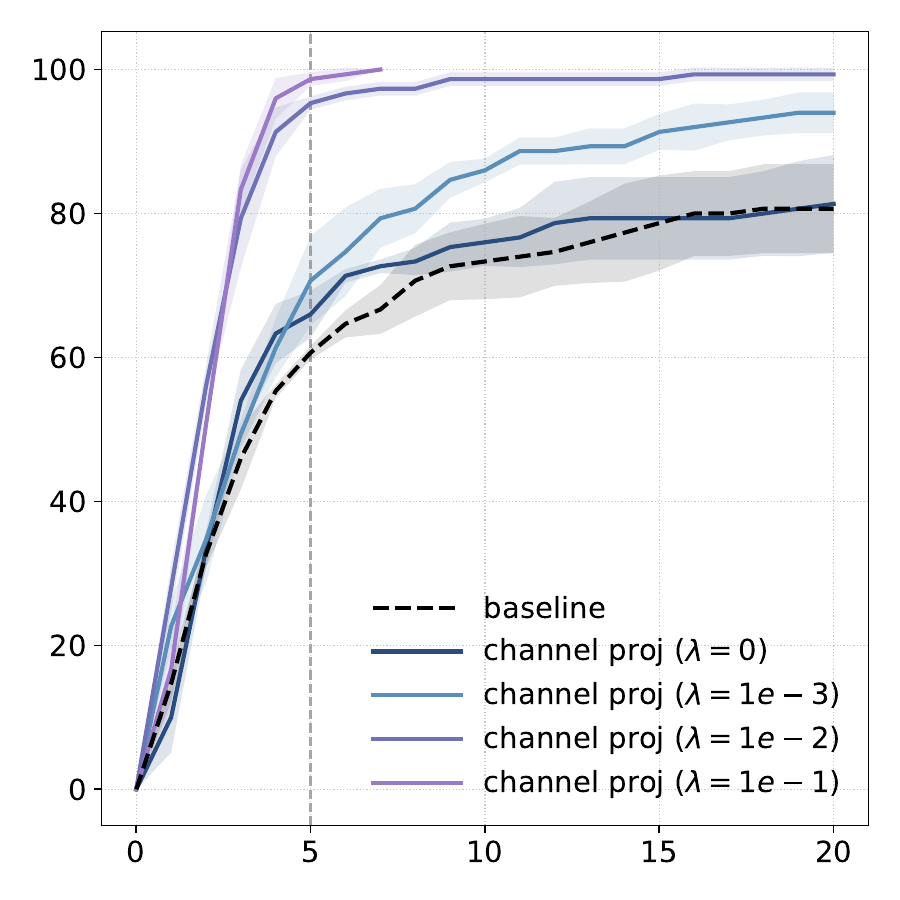}
        \vspace{-0.25in}
        \caption{PointMaze-UMaze}
        \label{fig:mpc_umaze}
    \end{subfigure}
    \hfill
    \begin{subfigure}[t]{0.24\textwidth}
        \centering
        \includegraphics[width=\textwidth]{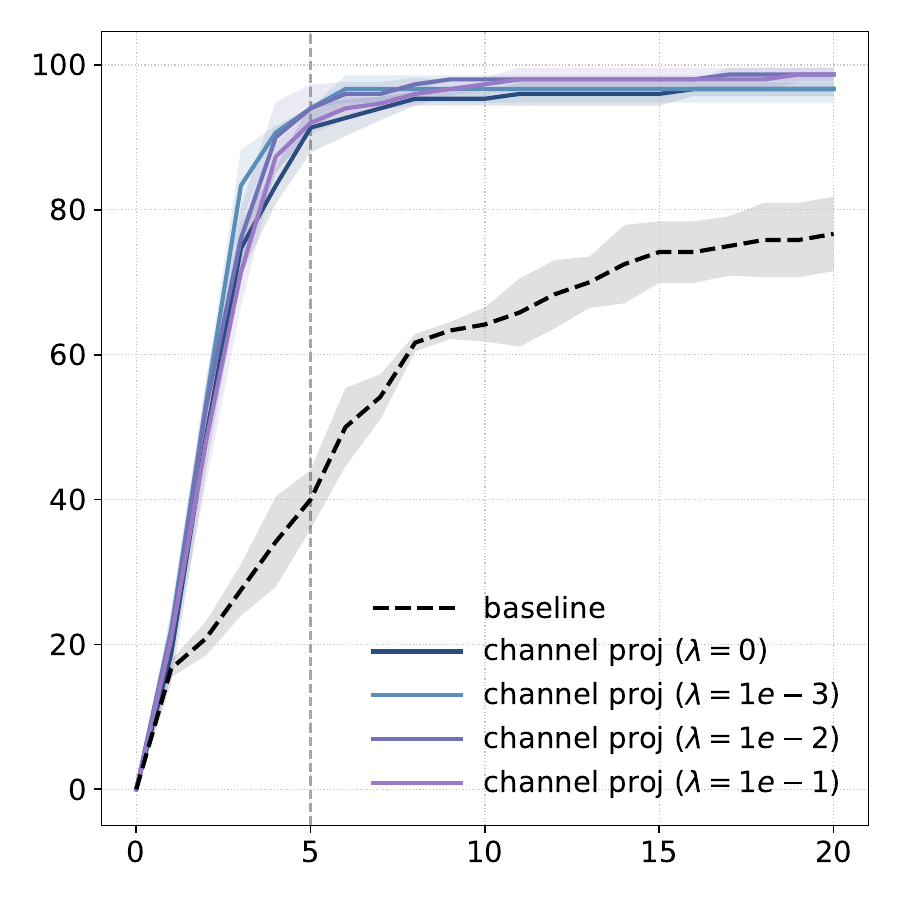}
        \vspace{-0.25in}
        \caption{PointMaze-Medium}
        \label{fig:mpc_medium}
    \end{subfigure}
    \hfill
    \begin{subfigure}[t]{0.24\textwidth}
        \centering
        \includegraphics[width=\textwidth]{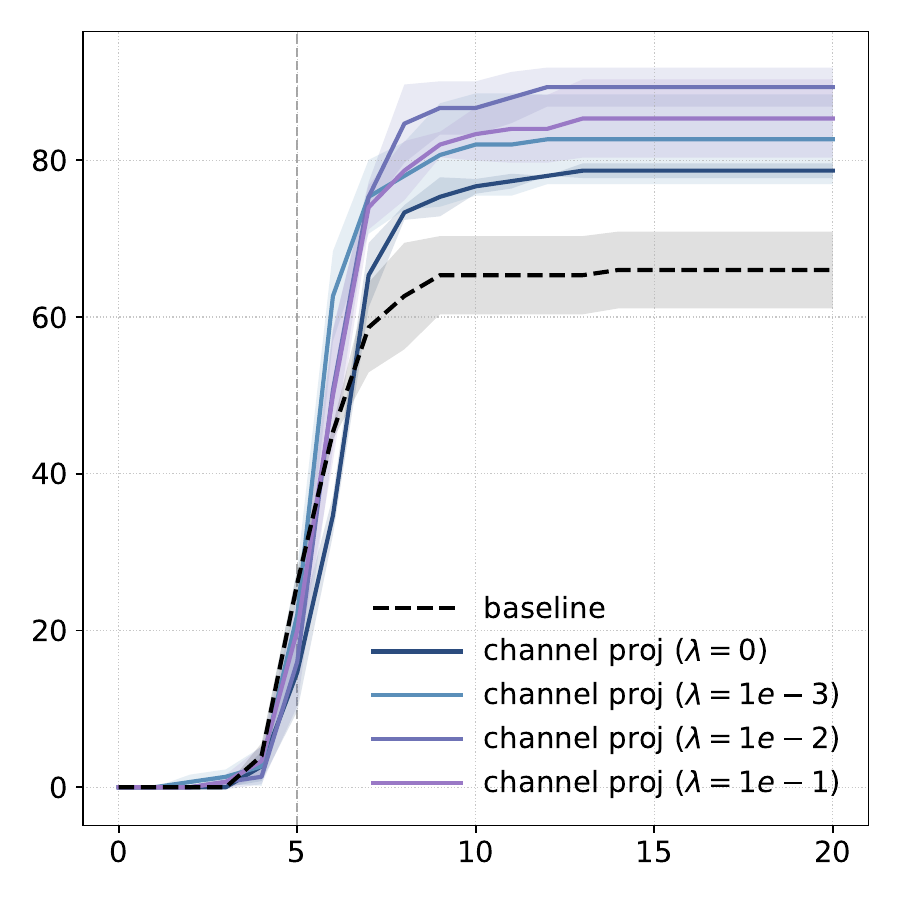}
        \vspace{-0.25in}
        \caption{PushT}
        \label{fig:mpc_pusht}
    \end{subfigure}
\vspace{-0.05in}
\caption{Success Rate over MPC Steps. The dashed black lines represent DINO-WM with frozen DINOv2 patch features. The solid lines represent frozen DINOv2 patch features with a trainable channel projector (with resulting features $z \in \mathbb{R}^{14\times14\times 8}$) with different strengths of straightening. Our model reaches 100\% success rates very quickly as shown in~\Cref{fig:mpc_wall,fig:mpc_umaze}.
}
\label{img:mpc}
\end{figure*}

\subsection{Planning} 

We show that straightening can significantly improve the planning success rates across models and environments.

\paragraph{Setup.} The start states and goals are sampled from test trajectories to guarantee that goals can be reached within 25 steps. We follow DINO-WM~\citep{dinowm} in using a frameskip of five, so we only need to roll out the world model for $H=5$ times. During test time, an action sequence is optimized using gradient descent through the learned dynamics model ($f_\theta$) to minimize a goal cost. For PushT, we assume we have both target images and proprioceptions; for other environments, we only use target images to increase the task difficulty. 

We evaluate performance in both open-loop and closed-loop settings. Open-loop planning optimizes a length-$H$ action sequence using the MSE between the terminal embedding and the target embedding as the planning cost. Closed-loop MPC replans at every step: it optimizes a length-$H$ action sequence, executes only the first action, and then replans, using a weighted objective that encourages the predicted trajectory to approach the target (see~\Cref{appendix:planning}). For PushT specifically, we use only the terminal loss within horizon $H$ because regime-switching dynamics make intermediate-state loss misleading, so we apply the weighted intermediate loss only beyond $H$ where it is more stable.

\paragraph{Results.}  As shown in~\Cref{tab:main_results},  we observe a significant improvement across all models and environments. When training the projectors or encoders, we observe an improvement in performance even without the straightening regularization. We attribute this improvement to the \emph{implicit straightening} during training as discussed in~\Cref{sec:analysis_emb}. For ResNet with spatial features, we observe abnormally low success rates for Wall, PointMaze-UMaze and PointMaze-Medium, which could be explained by the extremely high curvature in~\Cref{img:cos}, suggesting a degradation of features. We also notice that the implicit straightening is the weakest for the UMaze when using the projector, which also results in the lowest improvement in planning. 

Applying explicit straightening further strengthens the straightness in the embedding space, resulting in more than 10\% boost in open-loop and MPC success rates across many setups. Note that we use weighted loss on intermediate states for mazes which enables reaching the target before consuming the full horizon $H=5$. It is impressive that our model reaches 100\% success with MPC on Wall and UMaze within only a few steps (\Cref{img:mpc}), suggesting it discovers more direct trajectories than the randomly generated test trajectories. The PushT success increases more slowly because we apply only the terminal loss within the horizon $H=5$, yet straightening still yields substantial final gains. We also compare with other widely used temporal regularization, namely smoothness and temporal contrastiveness, in~\Cref{app:smooth_tc}, but find temporal straightening significantly more effective. 

\begin{figure*}[t]
\centering
\noindent
\makebox[\linewidth][c]{%
\begin{minipage}[c]{0.04\linewidth}
    \makebox[\linewidth][r]{\rotatebox[origin=c]{90}{\footnotesize (a) Success}}
\end{minipage}%
\hspace{3pt}%
\begin{minipage}[c]{0.86\linewidth}
    \includegraphics[width=\linewidth]{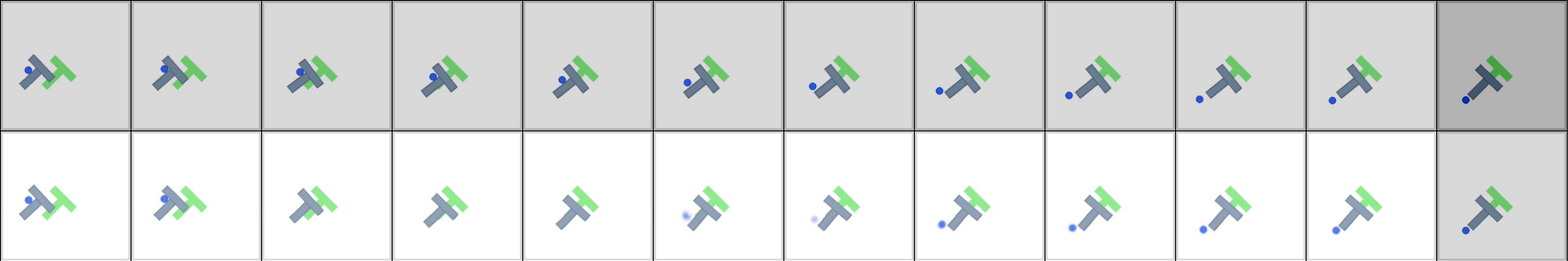}
\end{minipage}%
}

\vspace{2mm}

\makebox[\linewidth][c]{%
\begin{minipage}[c]{0.04\linewidth}
    \makebox[\linewidth][r]{\rotatebox[origin=c]{90}{\footnotesize (b) Failure}}
\end{minipage}%
\hspace{3pt}%
\begin{minipage}[c]{0.86\linewidth}
    \includegraphics[width=\linewidth]{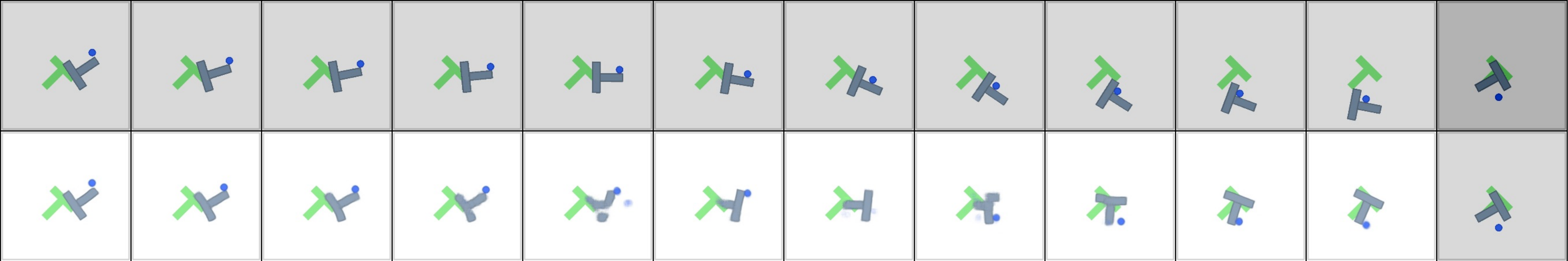}
\end{minipage}%
}
\caption{Examples of Long-Horizon Open-Loop GD Planning on PushT. For each example, the top row shows simulator-rendered images and the bottom row shows decoded images, with the last column being the target. The failure example shows a case where the long-horizon imagined rollout does not match the real dynamics.}
\label{img:long}
\vspace{-0.1in}
\end{figure*}

\paragraph{Comparison of CEM and GD.} We compare the open-loop performance of gradient descent (GD) and the
cross-entropy method (CEM) in~\Cref{app:ablations_subplanner}. Straightening regularization consistently improves the success rate of both planners. Overall, CEM achieves higher success rates but requires substantially longer time than GD. With straightening, GD achieves a better success--latency trade-off.

\paragraph{Effect of feature dimensions.} We find that preserving spatial structure generally matters more than retaining channels. When we keep all patch tokens, we can aggressively reduce the channel dimension of DINOv2 features from 384 down to 8 without degrading performance. Increasing the channel dimension to $d\in\{32, 128\}$ does not improve performance and can even lead to a drop for some environments (\Cref{app:ablations_dim}), which is not surprising as lower dimensions can simplify both dynamics prediction and downstream optimization. In contrast, collapsing patch features into a single global vector makes precise dynamics prediction harder. The predictor produces less accurate rollouts, which in turn reduces planning success. Notably, training a ResNet from scratch produces significantly better global features than training a global projector on frozen DINO patch features.

\paragraph{Long horizon.} To further stress test our method, we also evaluate a longer-horizon setting where the target is 50 steps away. We leave out UMaze and Wall, because in those environments, a target picked via random 50-step rollouts can end up surprisingly close in terms of shortest-path distance, which does not reflect true long-horizon difficulty. We summarize the results in~\Cref{tab:long_horizon} and show success and failure examples in~\Cref{img:long}. As expected, success rates drop substantially compared to the short-horizon setting, but our method consistently outperforms the baseline across all settings. More broadly, long-horizon rollouts remain a well-known challenge for latent planning where prediction errors compound over steps and lead to substantial trajectory drift. This is visible in failure cases where decoded rollouts become blurry or misaligned with the simulator.

Motivated by~\Cref{img:heatmap}, where the aggregation head produces a smoother long-range distance signal than spatial features alone, we add a global goal cost for long-horizon planning. Specifically, we keep the spatial goal cost and add a goal cost computed in the aggregated feature space: $\mathcal{L}_{\mathrm{plan}}=\mathcal{L}_{\mathrm{spatial}}+0.1\mathcal{L}_{\mathrm{agg}}$. Here, $\mathcal{L}_{\mathrm{spatial}}$ measures squared goal distance over spatial features, while $\mathcal{L}_{\mathrm{agg}}$ measures squared goal distance after applying the aggregation head. As shown in the last two rows of~\Cref{tab:long_horizon}, this combined cost improves over using the spatial cost alone across all models under MPC. These results suggest that long-horizon planning may benefit from objectives that combine fine-grained local costs with global distance geometry.

\vspace{-0.1in}
\paragraph{Teleported-PointMaze.}
Pretrained visual embeddings primarily reflect visual similarity, whereas our straightening objective is designed to align the latent space with temporal dynamics. To test whether straightening truly captures dynamics rather than exploiting appearance cues, we introduce \emph{Teleported-PointMaze} with modified transitions: touching the right wall instantly teleports the agent to the left side (see~\Cref{appendix:teleportation}). This creates states that are far in the pixel space but have small temporal distance. We visualize a representative success case in~\Cref{img:teleport_str_comp}, where the straightened model plans to reach the target by leveraging teleportation.

\paragraph{Limitations and future directions.}
Our current formulation focuses on continuous goal-conditioned latent planning with a symmetric Euclidean goal cost, which may be suboptimal under asymmetric or irreversible dynamics. While our curvature regularizer straightens observed latent transitions and does not itself assume reversibility, such settings may require directional planning costs such as quasimetrics~\citep{wang2022learning,wang2023quasimetric}. Furthermore, the gains from using an aggregation head for long-horizon planning suggest that regularization and planning objectives do not necessarily operate in the prediction latent space: the world model can learn dynamics in one space, while the planner optimizes a task- and geometry-aware objective in a projected space.

\begin{table}
  \centering
  \caption{Longer-Horizon Success Rate (\%) w/ Spatial Features.}
  \label{tab:long_horizon}
  \setlength{\tabcolsep}{5pt}
  \renewcommand{\arraystretch}{1}
  \resizebox{\linewidth}{!}{%
  \begin{tabular}{lc|cc|cc}
    \toprule
    \multirow{2}{*}{Model} & \multirow{2}{*}{$\mathcal{L}_{curv}$} &
    \multicolumn{2}{c|}{PushT} &
    \multicolumn{2}{c}{PointMaze -- Medium} \\
    \cmidrule(lr){3-4}\cmidrule(lr){5-6}
     & & Open-loop & MPC & Open-loop & MPC \\
    \midrule
    DINO-WM & -- & 3.33 $\pm$ {\tiny 2.36} & 27.33 $\pm$ {\tiny 6.66} & 35.00 $\pm$ {\tiny 2.35} & 65.33 $\pm$ {\tiny 3.13} \\
    + Proj & \xmark & 6.67 $\pm$ {\tiny 3.77} & 26.67 $\pm$ {\tiny 9.98} & 60.00 $\pm$ {\tiny 3.27} & 72.00 $\pm$ {\tiny 0.00} \\
    + Proj & \cmark & 13.33 $\pm$ {\tiny 3.77} & 24.00 $\pm$ {\tiny 6.53} & 68.00 $\pm$ {\tiny 8.64} & 88.00 $\pm$ {\tiny 3.27} \\
    ResNet & \xmark & 13.33 $\pm$ {\tiny 3.77} & 29.33 $\pm$ {\tiny 9.43} & 14.67 $\pm$ {\tiny 6.80} & 48.00 $\pm$ {\tiny 9.80} \\
    ResNet & \cmark & 10.67 $\pm$ {\tiny 4.99} & 33.33 $\pm$ {\tiny 4.99} & 76.00 $\pm$ {\tiny 6.53} & 98.67 $\pm$ {\tiny 1.89} \\
    \midrule
    \multicolumn{6}{l}{\emph{Combined planning cost:} $\mathcal{L}_{\mathrm{plan}}=\mathcal{L}_{\mathrm{spatial}}+0.1\mathcal{L}_{\mathrm{agg}}$} \\
    \midrule
     + Proj \, & \cmark & 20.00 $\pm$ {\tiny 0.00} & 33.33 $\pm$ {\tiny 4.16} & 66.67 $\pm$ {\tiny 7.57} & 92.00 $\pm$ {\tiny 5.29} \\
    ResNet   & \cmark & 13.33 $\pm$ {\tiny 4.62} & 36.00 $\pm$ {\tiny 5.29} & 68.67 $\pm$ {\tiny 4.16} & 98.67 $\pm$ {\tiny 1.15} \\

    \bottomrule
  \end{tabular}%
  }
  \vspace{-0.2in}
\end{table}

\section{Conclusion}
In this work, we show that temporal straightening yields an embedding space that effectively facilitates latent planning. In this straightened representation space, the Euclidean distance provides a more reliable proxy for the geodesic distance and gradient-based planning is better conditioned. Across a range of 2D goal-reaching tasks, this leads to significant and consistent gains over baselines. More broadly, our findings highlight that representation geometry plays an important role in latent planning and show that straightening latent trajectories is a simple yet effective way to improve it. We believe this opens a promising path toward more efficient latent planning in more challenging environments.
\section*{Impact Statement}

This paper presents work whose goal is to advance the field of Machine Learning. World models with improved planning capabilities could have both beneficial applications (e.g., robotics, autonomous systems, scientific discovery) and potential risks if deployed without adequate safety measures. We encourage future work to consider safety implications when deploying such systems in real-world settings.
\section*{Acknowledgments}
We thank Yilun Kuang and Daohan Lu for helpful discussions. This work was supported in part by AFOSR under grant FA95502310139, NSF Award 1922658, Visko AI, a Google TPU Award, the NYU-KAIST Award A25-0081-002, and the Institute of Information \& Communications Technology Planning Evaluation (IITP) under grant RS-2024-00469482, funded by the Ministry of Science and ICT (MSIT) of the Republic of Korea in connection with the Global AI Frontier Lab International Collaborative Research.
The compute is supported by the NYU High Performance Computing resources, services, and staff expertise.

\bibliography{main}
\bibliographystyle{icml2026}

\newpage
\appendix
\onecolumn

\appendix

\section*{Appendix}

\section{Data and Environments}
\label{app:datasets}

\subsection{Wall}

This is a 2D navigation environment introduced by~\citet{dinowm} and~\citet{sobal2025learning}. The environment consists of two rooms separated by a wall with a single narrow door. To move between rooms, the agent must pass through the door. The task is to navigate from a start position to a target position, given start and target images. The action space consists of 2D vectors representing displacements in x and y axes. For training, we follow the approach of \citet{dinowm} to generate a dataset of 1,920 trajectories, each 50 time steps long. We train for 20 epochs.

\subsection{PointMaze (UMaze and Medium-Maze)}
\label{appendix:pointmze}

This is a 2D navigation environment based on the MuJoCo physics engine~\citep{fu2020d4rl}. We experiment on the ``UMaze'' and ``Medium-Maze'' here and plan to test other maze setups in future work. The task is to navigate from a start position to a target position, given start and target images. Unlike the previous ``Wall'' environment, this dynamics is governed by realistic physical properties such as velocity, acceleration, and inertia. The action space consists of forces applied along the x and y axes. For training, we follow \citet{dinowm} to generate a dataset of 2,000 trajectories for UMaze and 4,000 for Medium-Maze, each 100 time steps long. We train for 20 epochs.

\subsection{PushT}
\label{appendix:pointmze}

This is a challenging, contact-rich environment introduced by \citet{chi2024pusht}. PushT features a pusher agent interacting with a T-shaped block. Starting from a random initial state, the agent must drive both the pusher and the T-block to a known feasible target configuration matching their target poses. The fixed green T is not the T-block’s target and is only a visual reference marker. We use training data from \citet{dinowm}, which contains 18500 trajectories with lengths of 100-300. We train for 2 epochs.

\section{Experiments}
\label{app:experiments}

\subsection{Model Predictive Control (MPC)}
\label{appendix:planning}
We outline the MPC algorithm below. Unlike DINO-WM~\citep{dinowm} that uses Cross-Entropy Method (CEM) as the subplanner, we use gradient descent for the major experiments instead to accelerate planning.

a) \textbf{Encode States:} Given the current observation $o_0$ and the goal observation $o_g$ (both RGB images), we first encode them into their latent state representations using our trained encoder $\mathcal{E}^s$ (either a pre-trained DINOv2 encoder plus a projector, or a ResNet from scratch):
$$
z_0 = \mathcal{E}^s(o_0), \quad z_g = \mathcal{E}^s(o_g).
$$

b) \textbf{Initialize Actions:} An initial action sequence for the planning horizon $T$ is sampled from Gaussian distribution, $\{a_0, a_1, \dots, a_{T-1}\}$.

c) \textbf{Define Objective:} The planning objective is to minimize the mean squared error (MSE) between the predicted final latent state $\hat{z}_T$ and the goal state $z_g$:
$$
L = \|\hat{z}_T - z_g\|_2^2
$$
where the latent trajectory is predicted by recursively applying the world model: $\hat{z}_t = f_{\theta}(\hat{z}_{t-1}, a_{t-1})$.

d) \textbf{Optimize via Gradient Descent:} Update actions iteratively using gradients of the cost with respect to the actions:
$$
a_t \leftarrow a_t - \eta \frac{\partial L}{\partial a_t}, \quad \text{for } t = 0, \dots, T-1,
$$
where $\eta$ is the learning rate. Repeat until reaching the predefined number of iterations.

e) \textbf{Execute Action:} After the optimization loop is complete, the first $k$ actions from the optimized action sequence are executed in the environment.

f) \textbf{Re-plan:} The process is repeated from step (a) at the next environment timestep, using the new observation $o_1$.

\clearpage

\subsection{Hyperparameters}

\begin{table}[h]
\centering
\begin{minipage}[t]{0.45\textwidth}
\centering
\caption{Training Hyperparameters. }
\label{tab:train_hyperparams}
\begin{tabular}{ll}
\hline
\textbf{Name} & \textbf{Value} \\ 
\hline
Spatial Projector/ResNet lr & 1e-5\footnote{We observe severe performance degradation when training without straightening and decreasing the learning rate helps. We thus use $lr=1e-6$ for no straightening.} \\
Global Projector/ResNet lr & 1e-6 \\
Predictor lr & 5e-4 \\
Action/Prop encoder lr & 5e-4 \\
Batch size & 32 \\ 
History frames & 3 \\
Frameskip & 5 \\
\hline
\end{tabular}
\end{minipage}
\hfill
\begin{minipage}[t]{0.45\textwidth}
\centering
\caption{Planning Hyperparameters.}
\label{tab:plan_hyperparams}
\begin{tabular}{ll}
\hline
\textbf{Name} & \textbf{Value} \\ 
\hline
Subplanner horizon & 25 \\
\# Executed actions & 25\footnote{This is for open-loop. If using MPC, we execute the first 5 actions (or the first chunk of actions if using a frameskip of 5). } \\
Optimizer & Adam \\
Action Initialization & Zero \\
Learning rate & 0.1 \\
\#opt steps & 100 \\
\hline
\end{tabular}
\end{minipage}
\end{table}

\subsection{Planning: GD vs. CEM}
\label{app:ablations_subplanner}

\begin{wrapfigure}{r}{0.45\textwidth}
\vspace{-0.3in}
\centering
\includegraphics[width=\linewidth]{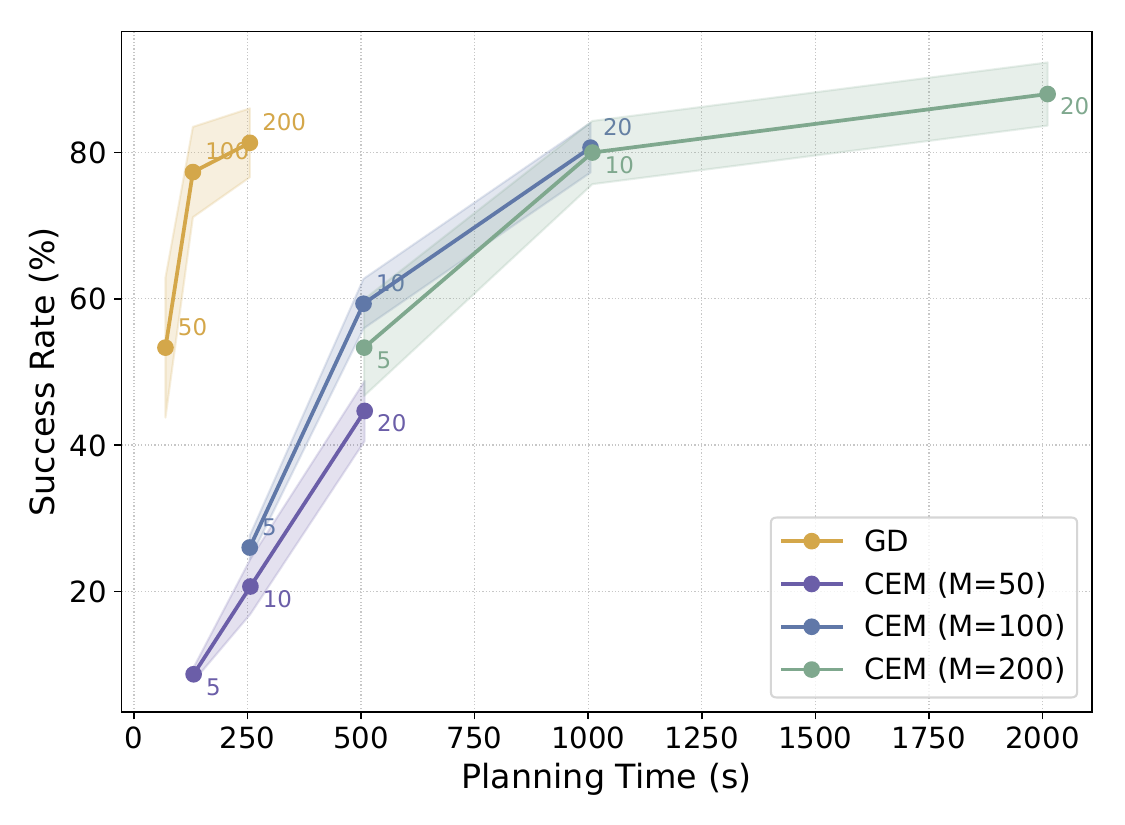}
\caption{Success rate versus wall-clock planning time for open-loop GD and CEM planning after straightening.}
\vspace{-0.2in}
\label{img:time}
\vspace{-0.1in}
\end{wrapfigure}

We compare the open-loop success rate using GD and CEM planners. For one single plan, GD optimizes the action sequence by backpropagating through the learned rollout model. With $N$ optimization steps, this requires $N$ forward rollouts and $N$ backward passes. In contrast, CEM iteratively samples $M$ candidate action sequences, rolls each candidate out with the learned predictor, refits the sampling distribution to the top-performing candidates, and repeats this procedure for $K$ iterations. Thus, CEM requires $MK$ forward rollouts, with $M$ typically large for competitive performance.

In our experiments, we find that CEM requires at least $200$ samples and $10$ iterations to achieve strong performance, making it roughly $10\times$ slower than GD in wall-clock planning time. We report wall-clock time for open-loop planning on PushT over 50 test trajectories on a single L40S GPU in~\Cref{img:time}.

As shown in~\Cref{tab:open_loop_gd_cem}, straightening consistently improves \textbf{both} GD and CEM across environments and model architectures. Consistent with prior work~\citep{dinowm}, CEM often obtains higher absolute success rates than GD, but at substantially higher computational cost. Importantly, straightening largely reduces the performance gap between GD and CEM, suggesting that the regularizer improves the latent optimization landscape and enables simple gradient-based planning to achieve a better success--latency trade-off.

\begin{table*}[h]
  \centering
  \caption{Open-Loop Planning Success Rate of 50 Test Trajectories (\%). We compare GD and CEM planners. For CEM, we use 200 samples and 10 iterations. Values are mean $\pm$ std over three data seeds. The best value is \textbf{bold}.}
  \label{tab:open_loop_gd_cem}
  \setlength{\tabcolsep}{5pt}
  \renewcommand{\arraystretch}{1.08}
  \resizebox{\linewidth}{!}{
  \begin{tabular}{lcc@{\hspace{8pt}}|@{\hspace{8pt}}cc@{\hspace{10pt}}cc@{\hspace{10pt}}cc@{\hspace{10pt}}cc}
    \toprule
    \multirow{2}{*}{Method} & \multicolumn{2}{c}{Config} &
    \multicolumn{2}{c}{Wall} &
    \multicolumn{2}{c}{PointMaze -- UMaze} &
    \multicolumn{2}{c}{PointMaze -- Medium} &
    \multicolumn{2}{c}{PushT} \\
    \cmidrule(lr){2-3}\cmidrule(lr){4-5}\cmidrule(lr){6-7}\cmidrule(lr){8-9}\cmidrule(lr){10-11}
     & dim & $\mathcal{L}_{curv}$ & GD & CEM & GD & CEM & GD & CEM & GD & CEM \\
    \midrule
    DINOv2 (patch) & $14 \times 14 \times 384$ & \xmark & 52.67 $\pm$ {\tiny 5.73} & 87.33 $\pm$ {\tiny 4.99} & 35.33 $\pm$ {\tiny 4.11} & 88.00 $\pm$ {\tiny 1.63} & 40.83 $\pm$ {\tiny 10.07} & 88.00 $\pm$ {\tiny 1.63} & 56.00 $\pm$ {\tiny 4.32} & 71.33 $\pm$ {\tiny 7.72} \\
    \rowcolor{Blue!5} DINOv2 (patch) + proj & $14 \times 14 \times 8$ & \xmark & 80.00 $\pm$ {\tiny 7.12} & 92.00 $\pm$ {\tiny 0.00} & 44.00 $\pm$ {\tiny 7.12} & 75.33 $\pm$ {\tiny 4.99} & 72.00 $\pm$ {\tiny 4.32} & \textbf{92.67} $\pm$ {\tiny 4.71} & 70.00 $\pm$ {\tiny 1.63} & 71.33 $\pm$ {\tiny 6.18} \\
    \rowcolor{Blue!5} DINOv2 (patch) + proj & $14 \times 14 \times 8$ & \cmark & \textbf{90.67} $\pm$ {\tiny 0.94} & \textbf{100.00} $\pm$ {\tiny 0.00} & \textbf{94.00} $\pm$ {\tiny 1.63} & \textbf{94.00} $\pm$ {\tiny 1.63} & \textbf{82.67} $\pm$ {\tiny 3.77} & 86.67 $\pm$ {\tiny 1.89} & \textbf{77.33} $\pm$ {\tiny 6.18} & \textbf{80.00} $\pm$ {\tiny 4.32} \\
    \rowcolor{Blue!5} ResNet & $14 \times 14 \times 8$ & \xmark & 1.33 $\pm$ {\tiny 1.89} & 1.33 $\pm$ {\tiny 0.94} & 14.67 $\pm$ {\tiny 4.99} & 20.67 $\pm$ {\tiny 0.94} & 18.67 $\pm$ {\tiny 4.11} & 24.00 $\pm$ {\tiny 4.32} & 71.33 $\pm$ {\tiny 7.36} & 56.00 $\pm$ {\tiny 0.00} \\
    \rowcolor{Blue!5} ResNet & $14 \times 14 \times 8$ & \cmark & 84.67 $\pm$ {\tiny 2.49} & 90.00 $\pm$ {\tiny 5.89} & 64.67 $\pm$ {\tiny 8.38} & 83.33 $\pm$ {\tiny 2.49} & 80.67 $\pm$ {\tiny 0.94} & 89.33 $\pm$ {\tiny 6.18} & 70.67 $\pm$ {\tiny 0.94} & 72.67 $\pm$ {\tiny 6.60} \\
    \bottomrule
  \end{tabular}
  }
\end{table*}

\newpage
\subsection{Effect of Feature Dimensions}
\label{app:ablations_dim}

In order to improve efficiency and efficacy, we ablate the output dimensions of the encoders. Here, we test on the ``frozen DINOv2 + spatial projector'' setup and preserve the spatial dimensions of the DINOv2 patch features $m_v=196$ but decreasing channels from 384 to $d_v \in \{2,8,32,128\}$. For all experiments, we use $lr=1e-6$ for the encoder. If with straightening, we apply straightening on the aggregation head as described in~\Cref{app:straightening} with a straightening strength $\lambda=0.1$. 

We report the open-loop planning success rate of 50 test episodes over three data sampling seeds in~\Cref{img:ablation_dim}. Very small dimensions (e.g., \(d_v=2\)) result in poor performance, indicating insufficient capacity to preserve planning-relevant information. Increasing to a moderate dimension (\(d_v=\{8,32\}\)) yields the best results, while too large dimensions (\(d_v = 128\)) consistently reduce success rates. This suggests that overly high-dimensional latents can hinder gradient-based planning.

\begin{figure}[h]
\center
    \begin{subfigure}[t]{0.24\textwidth}
        \centering
        \includegraphics[width=\textwidth]{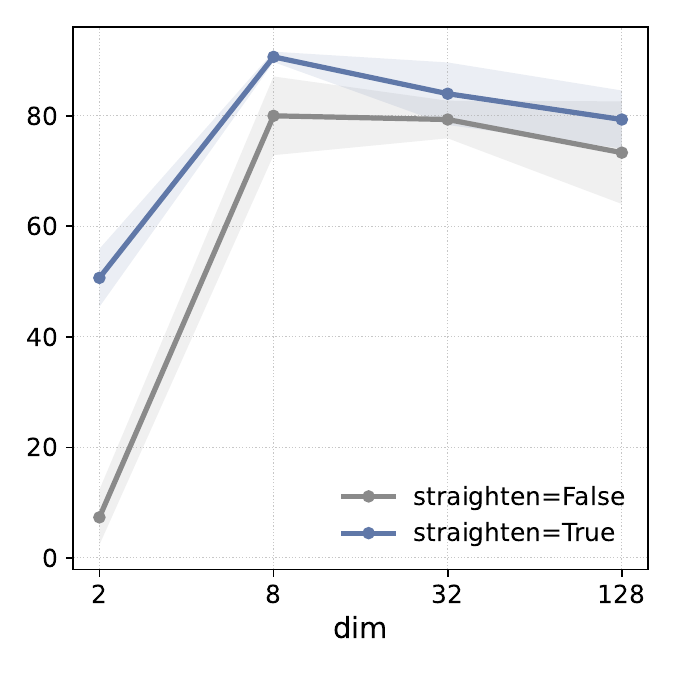}
        \caption{Wall}
        \label{fig:wall}
    \end{subfigure}
    \hfill
    \begin{subfigure}[t]{0.24\textwidth}
        \centering
        \includegraphics[width=\textwidth]{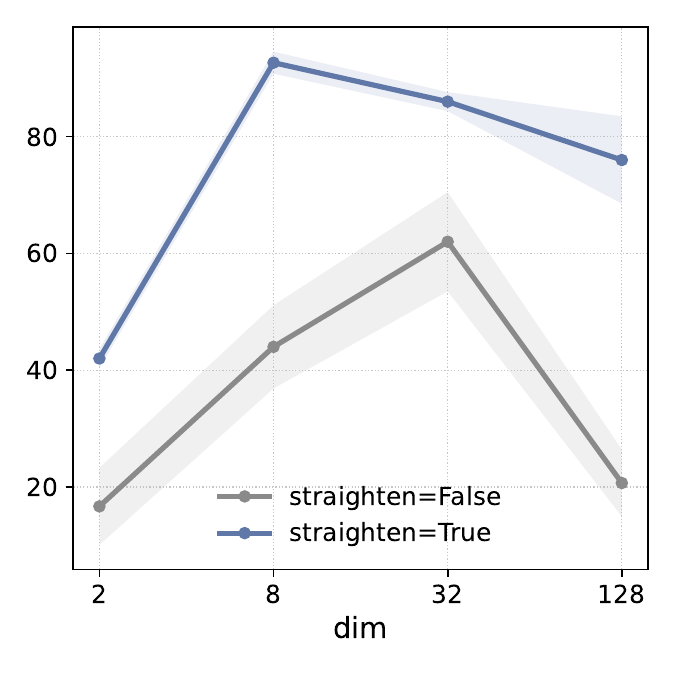}
        \caption{PointMaze-UMaze}
        \label{fig:umaze}
    \end{subfigure}
    \hfill
    \begin{subfigure}[t]{0.24\textwidth}
        \centering
        \includegraphics[width=\textwidth]{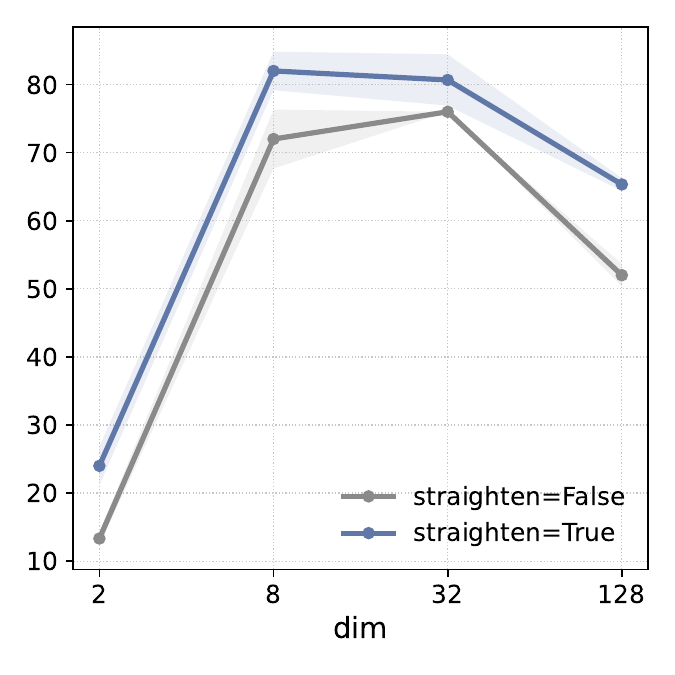}
        \caption{PointMaze-Medium}
        \label{fig:medium}
    \end{subfigure}
    \hfill
    \begin{subfigure}[t]{0.24\textwidth}
        \centering
        \includegraphics[width=\textwidth]{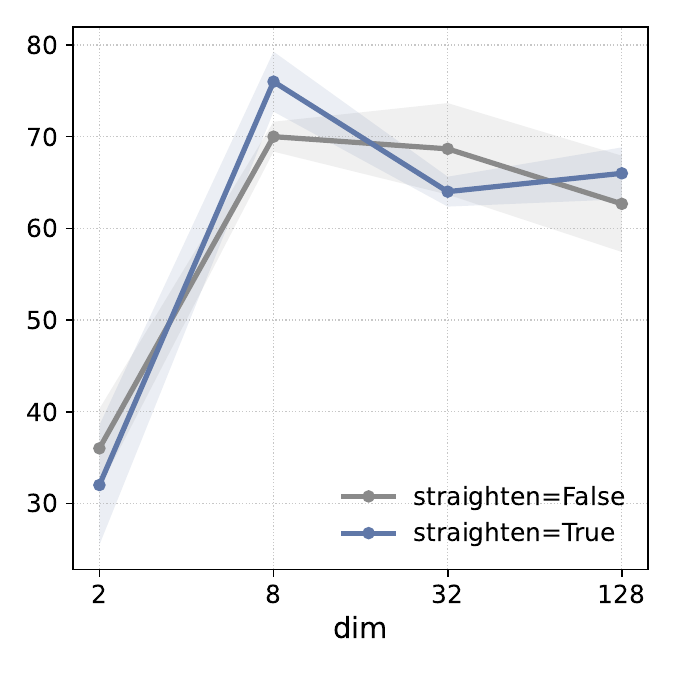}
        \caption{PushT}
        \label{fig:mpc_pusht}
    \end{subfigure}
\caption{Comparison of Different Dimensions. The line plots show the success rate changes with increasing channels. Too small dimensions (e.g. $d_v=2$) are unable to encode sufficient planning-relevant information, while unnecessarily high dimensions (e.g. $d_v=128$) hinder the planning performance.
}
\label{img:ablation_dim}
\end{figure}

\subsection{Comparison to Smoothness and Temporal Contrastive Objectives}
\label{app:smooth_tc}

\begin{wrapfigure}{r}{0.3\textwidth}
\vspace{-0.3in}
\centering
\includegraphics[width=\linewidth]{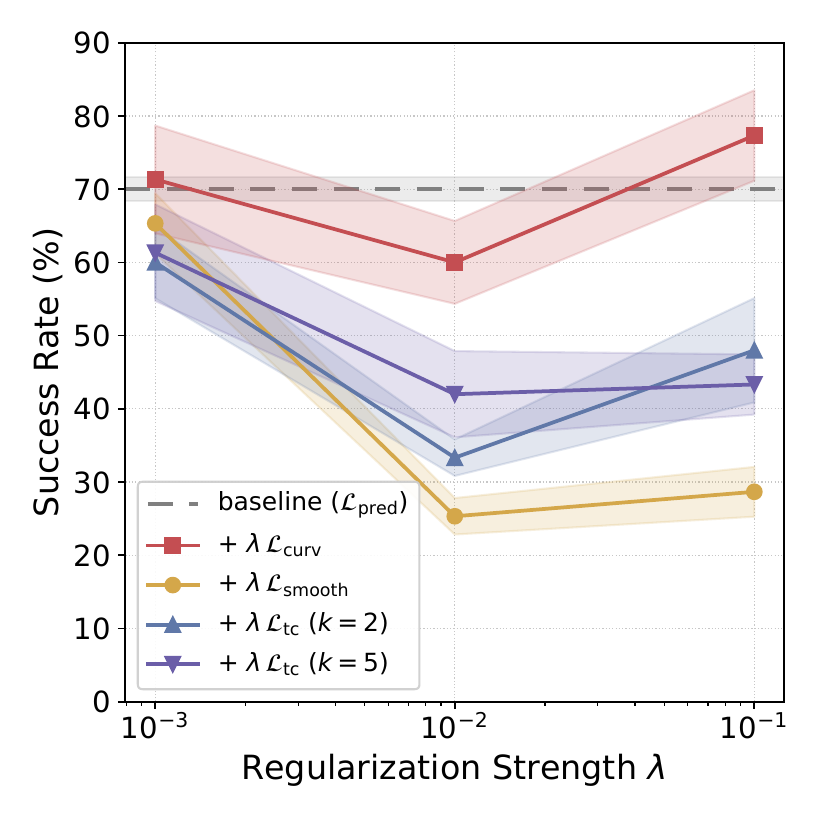}
\caption{Comparison with Other Regularizations.}
\label{img:smmoth_tc}
\end{wrapfigure}

We compare with two common temporal regularization objectives:

\begin{itemize}
\item \textbf{Smoothness.} This objective penalizes large temporal jumps in visual embeddings:
\[
\mathcal{L}_{\mathrm{smooth}}=\mathbb{E}_t\left[\|z_{t+1}-z_t\|_2^2\right].
\]
However, an overly strong smoothness penalty can lead to degenerate solutions where embeddings of different states collapse to similar values.

\item \textbf{Time contrastiveness.} This objective treats frames within a temporal window of size $k$ as positives and other frames in the same trajectory as negatives, encouraging temporally nearby embeddings to be similar and temporally distant embeddings to be different:
\[
\mathcal{L}_{\mathrm{tc}}=\mathrm{InfoNCE}(z_{\mathrm{pos}},z_{\mathrm{neg}}).
\]
However, when training trajectories are suboptimal, temporal distance may not reflect geodesic distance: states that are geodesically close can be temporally far apart, and this objective may incorrectly separate them.
\end{itemize}

We test on the ``frozen DINOv2 + spatial projector'' setup and report open-loop GD planning success rate on PushT in~\Cref{img:smmoth_tc}, evaluated on 50 test episodes over three data seeds. Overall, we do not observe improvements from adding smoothness or temporal contrastive objectives. Larger weights generally hurt performance, while smaller weights are less harmful but still do not match the gains from straightening. These objectives may still be useful in other settings, and our method is complementary to them: the curvature regularization loss can be combined with any other losses.

\clearpage

\subsection{Cosine Similarity Variants for Spatial Features}
\label{app:straightening}

\begin{wrapfigure}{r}{0.45\textwidth}
\vspace{-0.3in}
\centering
\includegraphics[width=\linewidth]{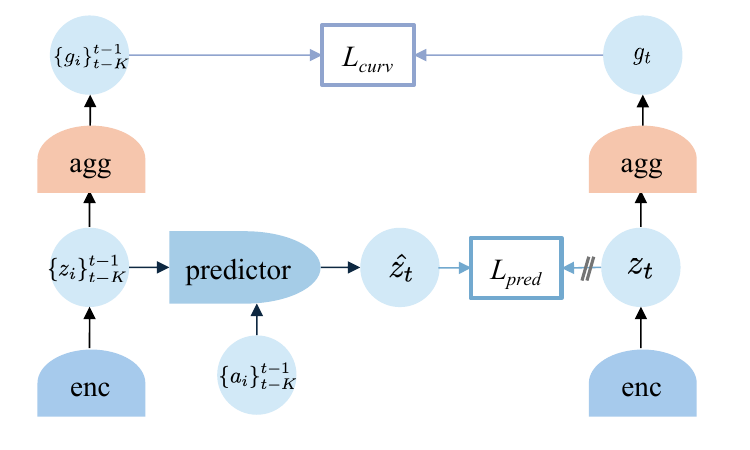}
\caption{\colorbox{orange!20}{Aggregation Head} for Straightening during Training. The prediction loss is applied to the spatial features, while the curvature loss is applied to the aggregated features.}
\label{img:agg}
\vspace{-0.5in}
\end{wrapfigure}

For spatial visual features \(z_t^v \in \mathbb{R}^{m_v \times d_v}\) (\(m_v>1\)), we compute straightness from approximate latent velocities
\(
v_t := z_{t+1}^v - z_t^v \in \mathbb{R}^{m_v \times d_v}.
\)
Let \(v_{t,i}\in\mathbb{R}^{d_v}\) denote the \(i\)-th patch vector and \(\cos(u,w)=\frac{u^\top w}{\|u\|_2\|w\|_2}\). We ablate four choices of \(\mathcal{C}_t\):

\begin{itemize}

\item \textbf{[patch]} We treat each patch independently, then average:
\[
\mathcal{C}_t=\frac{1}{m_v}\sum_{i=1}^{m_v}\cos\!\left(v_{t,i},\,v_{t+1,i}\right).
\]

\item \textbf{[mean]} We average patches to one vector, then compute the cosine:
\[
\bar v_t=\frac{1}{m_v}\sum_{i=1}^{m_v} v_{t,i},
\qquad
\mathcal{C}_t=\cos(\bar v_t,\bar v_{t+1}).
\]
\end{itemize}

\begin{itemize}
\item \textbf{[flatten]} We flatten the spatial features and compute a single cosine over all dimensions:
\[
\mathcal{C}_t=\cos\!\left(\mathrm{vec}(v_t),\,\mathrm{vec}(v_{t+1})\right),
\]
where \(\mathrm{vec}(\cdot):\mathbb{R}^{m_v\times d_v}\to\mathbb{R}^{m_v d_v}\).

\item \textbf{[agg]} We learn an aggregation head to aggregate features to a single global feature before applying the cosine (\Cref{img:agg}):
\[
\mathcal{C}_t=\cos\!\left(h_\phi(v_t),\,h_\phi(v_{t+1})\right),
\]
with an aggregation head \(h_\phi:\mathbb{R}^{m_v\times d_v}\to\mathbb{R}^{d_h}\). Concretely, we use an MLP with an output dimension of 128 as $h_\phi$ in all experiments.
\end{itemize}

We test these variants on the ``frozen DINOv2 + spatial projector'' setup and report the open-loop planning success rate of 50 test episodes over three data sampling seeds in~\Cref{img:ablation_str}. The projector projects pretrained DINOv2 patch features $e_t \in \mathbb{R}^{196\times384}$ to $z_t \in \mathbb{R}^{196\times8}$. For the straightening strength coefficient, we use $\lambda=0.1$ for agg and $\lambda=0.01$ for the rest, as these values yield the best performance. We find that using a learnable aggregation head performs best. This is not surprising as straightening should act on the \emph{global} trajectory representations, whereas spatial tokens mainly capture local, patch-level variations that are only loosely aligned across time due to object motion and occlusion.

\begin{figure}[h]
\center
    \begin{subfigure}[t]{0.24\textwidth}
        \centering
        \includegraphics[width=\textwidth]{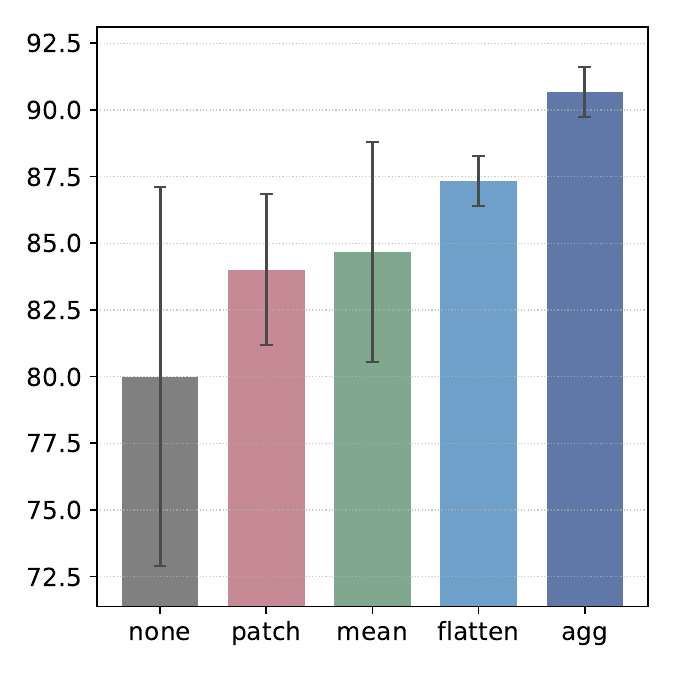}
        \caption{Wall}
        \label{fig:wall}
    \end{subfigure}
    \hfill
    \begin{subfigure}[t]{0.24\textwidth}
        \centering
        \includegraphics[width=\textwidth]{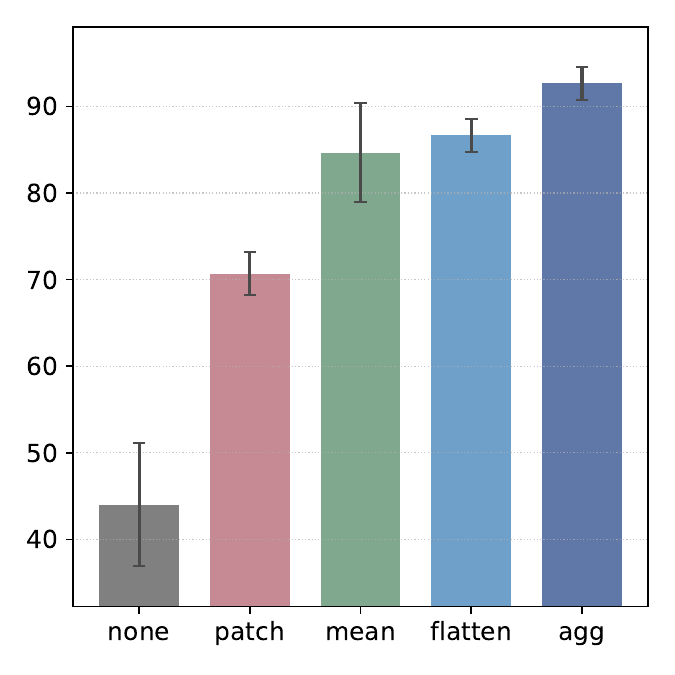}
        \caption{PointMaze-UMaze}
        \label{fig:umaze}
    \end{subfigure}
    \hfill
    \begin{subfigure}[t]{0.24\textwidth}
        \centering
        \includegraphics[width=\textwidth]{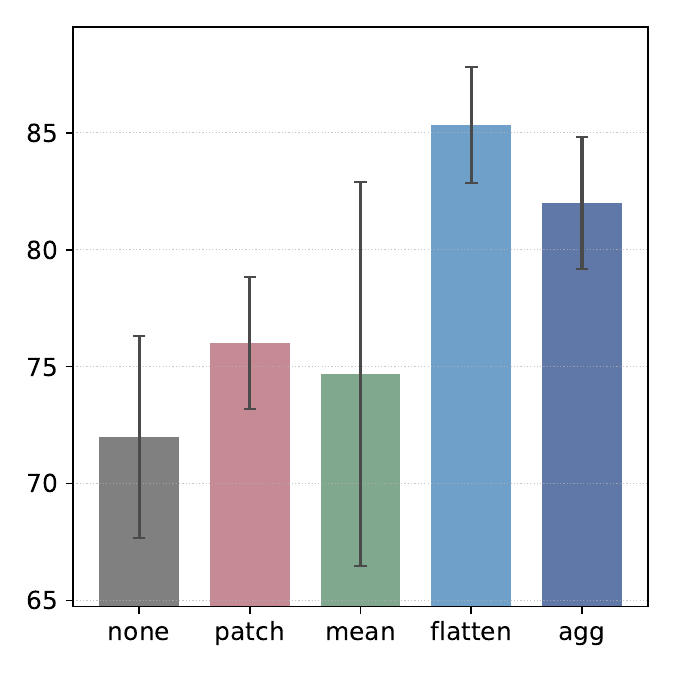}
        \caption{PointMaze-Medium}
        \label{fig:medium}
    \end{subfigure}
    \hfill
    \begin{subfigure}[t]{0.24\textwidth}
        \centering
        \includegraphics[width=\textwidth]{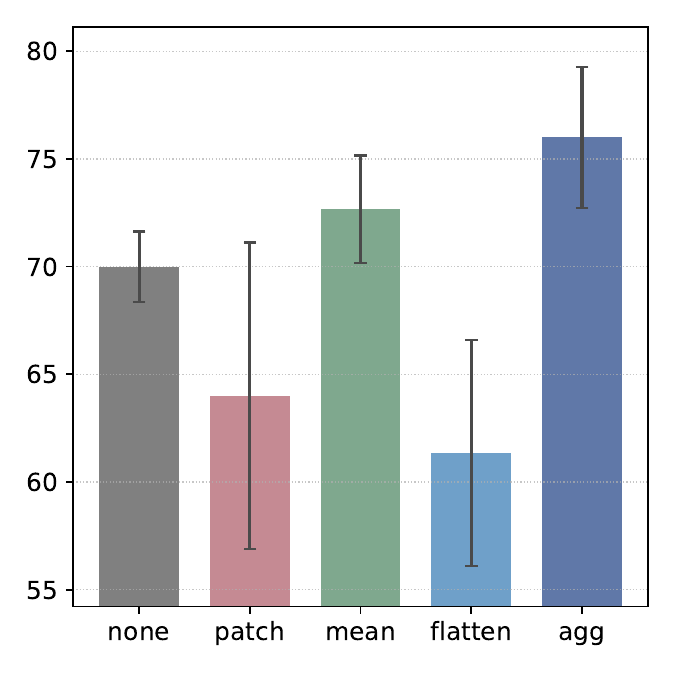}
        \caption{PushT}
        \label{fig:mpc_pusht}
    \end{subfigure}
\caption{Comparison of Different Straightening Strategies. The bar charts show the planning success rates. While all cosine similarity variants lead to better performance than no straightening, adding a learnable aggregation head gives the best performance.
}
\label{img:ablation_str}
\end{figure}

\clearpage

\section{Theoretical Analysis}

\subsection{Setup and notation}
We optimize an action sequence $\mathbf a=(a_0,\dots,a_{K-1})\in\mathbb{R}^{K\times d_a}$ over horizon $K$
to minimize the terminal MSE
\begin{equation}\label{eq:app_obj}
\mathcal L(\mathbf a)=\|z_K-z_g\|_2^2,\qquad z_K=\Phi(\mathbf a),
\end{equation}
where $\Phi$ denotes unrolling the latent dynamics from a fixed initial state $z_0$.

\begin{assumption}[Linear latent dynamics]\label{as:app_linear}
We assume linear latent dynamics
\begin{equation}\label{eq:app_linear}
z_{t+1}=Az_t+Ba_t,\qquad A\in\mathbb{R}^{d\times d},\; B\in\mathbb{R}^{d\times d_a}.
\end{equation}
\end{assumption}

\begin{definition}[Effective condition number]\label{def:app_keff}
For a PSD matrix $H\succeq 0$ with a nontrivial nullspace, define
\[
\kappa_{\mathrm{eff}}(H):=\frac{\sigma_{\max}(H)}{\sigma_{\min}^+(H)},
\]
where $\sigma_{\min}^+(H)$ is the smallest nonzero singular value.
\end{definition}

\begin{definition}[$\varepsilon$-straight transition]\label{def:app_eps}
In the linear model~\eqref{eq:app_linear}, define
\[
\varepsilon:=\|A-I\|_2.
\]
\end{definition}

\subsection{Conditioning of the planning Hessian}
\label{app:theory_hessian}

Unrolling~\eqref{eq:app_linear} gives the affine terminal map
\begin{equation}\label{eq:app_unroll}
z_K
=
A^{K} z_0 + \sum_{t=0}^{K-1} A^{K-1-t} B a_t.
\end{equation}
Define the rollout Jacobian
\begin{equation}\label{eq:app_Jphi}
J_\Phi
:=
\frac{\partial z_K}{\partial \mathbf a}
=
\bigl[\,A^{K-1}B,\quad A^{K-2}B,\quad \cdots,\quad B\,\bigr]
\in\mathbb{R}^{d\times (K d_a)}.
\end{equation}
The associated finite-horizon discrete controllability Gramian is
\begin{equation}\label{eq:app_gram}
\mathcal W_K
:=
J_\Phi J_\Phi^\top
=
\sum_{k=0}^{K-1} A^k B B^\top (A^\top)^k\in\mathbb{R}^{d\times d},
\end{equation}
a standard term in linear systems~\citep{kailath1980linear,sontag1998mathematical,chen1999linear}.

\begin{lemma}[Hessian form and Gramian equivalence]\label{lem:app_hess_gram}
Under~\eqref{eq:app_obj}--\eqref{eq:app_linear}, the planning Hessian satisfies
\begin{equation}\label{eq:app_hess}
H:=\nabla^2_{\mathbf a}\mathcal L(\mathbf a)=2J_\Phi^\top J_\Phi\succeq 0.
\end{equation}
Moreover, the nonzero singular values of $J_\Phi^\top J_\Phi$ equal those of $J_\Phi J_\Phi^\top$,
hence
\begin{equation}\label{eq:app_keff_equal}
\kappa_{\mathrm{eff}}(H)=\kappa(\mathcal W_K).
\end{equation}
\end{lemma}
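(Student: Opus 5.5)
The plan is to compute the Hessian directly from the affine structure and then use the SVD of $J_\Phi$. By the unrolling~\eqref{eq:app_unroll}, $z_K = c + J_\Phi \mathbf a$, where $c := A^K z_0$ does not depend on $\mathbf a$ and $J_\Phi$ is the block matrix in~\eqref{eq:app_Jphi}. Here $\mathbf a$ is identified with its vectorization in $\mathbb R^{Kd_a}$, with blocks ordered $a_0,\dots,a_{K-1}$, so that the $t$-th column block of $J_\Phi$ is $A^{K-1-t}B$. The objective is then the quadratic
\[
\mathcal L(\mathbf a)=\|J_\Phi\mathbf a + c - z_g\|_2^2 .
\]
Its gradient is $2J_\Phi^\top(J_\Phi\mathbf a+c-z_g)$, and differentiating once more gives the constant Hessian $H=2J_\Phi^\top J_\Phi$. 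This matrix is PSD because $\mathbf x^\top H\mathbf x=2\|J_\Phi \mathbf x\|_2^2\ge 0$ for every $\mathbf x$. That settles~\eqref{eq:app_hess}.

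For the spectral claim, take a compact SVD $J_\Phi=U\Sigma V^\top$ with $\Sigma=\mathrm{diag}(s_1,\dots,s_r)$, $s_i>0$, and $r=\mathrm{rank}(J_\Phi)$. Then
\[
J_\Phi^\top J_\Phi = V\Sigma^2V^\top, \qquad J_\Phi J_\Phi^\top = U\Sigma^2U^\top .
\]
Both matrices are symmetric PSD, so their singular values coincide with their eigenvalues. In each case the nonzero eigenvalues are exactly $s_1^2,\dots,s_r^2$, and the remaining eigenvalues are zero, with multiplicity depending on whether the ambient dimension is $Kd_a$ or $d$. The factor $2$ in $H$ scales $\sigma_{\max}$ and $\sigma^+_{\min}$ equally, so it cancels in the ratio. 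Hence
\[
\kappa_{\mathrm{eff}}(H)=\frac{s_1^2}{s_r^2}=\kappa_{\mathrm{eff}}(\mathcal W_K).
\]

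The only point needing care is how $\kappa(\mathcal W_K)$ is interpreted. When $d_a=d$ and $B$ is invertible, $\mathcal W_K\succeq BB^\top\succ 0$, because the $k=0$ term of~\eqref{eq:app_gram} is $BB^\top$ and every other term is PSD. So $\mathcal W_K$ is nonsingular, $r=d$, and $\kappa_{\mathrm{eff}}(\mathcal W_K)$ is the ordinary condition number $\kappa(\mathcal W_K)$. This gives~\eqref{eq:app_keff_equal}.

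Meanwhile $H$, of size $Kd\times Kd$ with rank $d$, has a nontrivial nullspace whenever $K>1$. This is why Definition~\ref{def:app_keff} uses $\sigma^+_{\min}$ for $H$. In the degenerate case $d_a<d$ (Remark~\ref{rem:lowdim}), the same SVD argument shows the identity still holds once $\kappa(\mathcal W_K)$ is read as the effective condition number on $\mathrm{range}(\mathcal W_K)$. This bookkeeping is the main thing to state carefully; there is no real analytic obstacle.
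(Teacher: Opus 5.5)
Your proposal is correct and follows essentially the same route as the paper: you differentiate the quadratic $\|J_\Phi\mathbf a + A^K z_0 - z_g\|_2^2$ directly to get $H = 2J_\Phi^\top J_\Phi \succeq 0$, then use that $J_\Phi^\top J_\Phi$ and $J_\Phi J_\Phi^\top$ share their nonzero eigenvalues (you justify this with a compact SVD, where the paper simply cites it as a standard fact). Your extra bookkeeping is a small but worthwhile clarification that the paper's proof leaves implicit: when $B$ is invertible, $\mathcal W_K \succeq BB^\top \succ 0$, so $\kappa(\mathcal W_K)$ is an ordinary condition number, and otherwise it must be read as an effective one.
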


\begin{proof}
$H$ is positive semi-definite by definition. Since $z_K$ is affine in $\mathbf a$ by~\eqref{eq:app_unroll},
$\mathcal L(\mathbf a)=\|z_K-z_g\|_2^2$ is a convex quadratic, and direct differentiation yields
$H=2J_\Phi^\top J_\Phi\succeq 0$.
For any matrix $M$, the nonzero eigenvalues of $M^\top M$ and $MM^\top$ coincide.
Applying this with $M=J_\Phi$ gives that the nonzero eigenvalues of $H/2$ equal those of $\mathcal W_K$,
which implies~\eqref{eq:app_keff_equal}.
\end{proof}

\begin{theorem}[Conditioning bound]\label{thm:app_cond}
Assume~\eqref{eq:app_linear}. Consider first the square-action case $d_a=d$ with $B$ invertible.
Then
\begin{equation}\label{eq:app_kappaA}
\kappa_{\mathrm{eff}}(H)=\kappa(\mathcal W_K)
\;\le\;
\kappa(B)^2\,
\frac{\sum_{k=0}^{K-1}\sigma_{\max}(A)^{2k}}{\sum_{k=0}^{K-1}\sigma_{\min}(A)^{2k}}
\;\le\;
\kappa(B)^2\,\kappa(A)^{2(K-1)},
\end{equation}
where $\kappa(A):=\sigma_{\max}(A)/\sigma_{\min}(A)$.
If additionally $\varepsilon=\|A-I\|_2<1$, then
\begin{equation}\label{eq:app_eps}
\kappa_{\mathrm{eff}}(H)
\;\le\;
\kappa(B)^2\left(\frac{1+\varepsilon}{1-\varepsilon}\right)^{2(K-1)}
\;\le\;
\kappa(B)^2 e^{6\varepsilon K}
\quad(\varepsilon\le\tfrac12).
\end{equation}
\end{theorem}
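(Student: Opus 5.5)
The plan is to work entirely with the Gramian. By Lemma~\ref{lem:app_hess_gram} we already have $\kappa_{\mathrm{eff}}(H)=\kappa(\mathcal W_K)$. When $d_a=d$ and $B$ is invertible, the $k=0$ term $BB^\top$ of $\mathcal W_K$ is positive definite. Every other term $A^kBB^\top(A^\top)^k$ is PSD, so $\mathcal W_K\succ 0$. Hence $\kappa(\mathcal W_K)=\lambda_{\max}(\mathcal W_K)/\lambda_{\min}(\mathcal W_K)$ is an ordinary condition number, and it suffices to bound the two extreme eigenvalues separately.

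\emph{Upper bound on $\lambda_{\max}$.} Use subadditivity of the spectral norm over the sum defining $\mathcal W_K$. Each term satisfies $\|A^kBB^\top(A^\top)^k\|_2=\sigma_{\max}(A^kB)^2\le \sigma_{\max}(A)^{2k}\sigma_{\max}(B)^2$ by submultiplicativity. Summing gives $\lambda_{\max}(\mathcal W_K)\le\sigma_{\max}(B)^2\sum_{k=0}^{K-1}\sigma_{\max}(A)^{2k}$.

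\emph{Lower bound on $\lambda_{\min}$.} Weyl's inequality, i.e.\ superadditivity of $\lambda_{\min}$ on sums of symmetric matrices, gives $\lambda_{\min}(\mathcal W_K)\ge\sum_k\lambda_{\min}(A^kBB^\top(A^\top)^k)=\sum_k\sigma_{\min}(A^kB)^2$. For square matrices, $\sigma_{\min}(MN)\ge\sigma_{\min}(M)\sigma_{\min}(N)$, which follows from $\|MNx\|\ge\sigma_{\min}(M)\|Nx\|$. Applying this repeatedly yields $\sigma_{\min}(A^kB)\ge\sigma_{\min}(A)^k\sigma_{\min}(B)$. This is still valid, though possibly vacuous, when $A$ is singular, because the $k=0$ term keeps the denominator positive.

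Dividing the two bounds gives the first inequality in~\eqref{eq:app_kappaA}, with $\kappa(B)^2=\sigma_{\max}(B)^2/\sigma_{\min}(B)^2$. For the second inequality I would use the mediant inequality: $\sum_k a_k/\sum_k b_k\le\max_k a_k/b_k$ for positive $b_k$. Here the termwise ratio is $\kappa(A)^{2k}$, which is maximized at $k=K-1$ since $\kappa(A)\ge1$. If $\sigma_{\min}(A)=0$, the right-hand side is $+\infty$ and there is nothing to prove.

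\emph{The $\varepsilon$-straight case.} Write $A=I+(A-I)$. Weyl's perturbation inequality for singular values gives $|\sigma_i(A)-1|\le\|A-I\|_2=\varepsilon$. Hence $\sigma_{\max}(A)\le1+\varepsilon$ and $\sigma_{\min}(A)\ge1-\varepsilon>0$, so $\kappa(A)\le(1+\varepsilon)/(1-\varepsilon)$. Substituting this into~\eqref{eq:app_kappaA} gives the first inequality in~\eqref{eq:app_eps}.

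For the exponential form it suffices to show $\log\frac{1+\varepsilon}{1-\varepsilon}\le3\varepsilon$ on $[0,\tfrac12]$. Then $2(K-1)\log\frac{1+\varepsilon}{1-\varepsilon}\le 6\varepsilon(K-1)\le6\varepsilon K$. To prove the scalar inequality, note that $\log\frac{1+\varepsilon}{1-\varepsilon}=2\,\mathrm{artanh}(\varepsilon)$ is convex on $[0,1)$ and vanishes at $0$. It therefore lies below its chord on $[0,\tfrac12]$, giving the bound $2\varepsilon\log 3\approx2.2\varepsilon\le3\varepsilon$.

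The only step I expect to need care is the eigenvalue lower bound, namely justifying superadditivity of $\lambda_{\min}$ and the product inequality for $\sigma_{\min}$. It is also the step that fails when $B$ is not square, which is why Remark~\ref{rem:lowdim} restricts to $\mathrm{range}(\mathcal W_K)$. The remaining steps are routine.
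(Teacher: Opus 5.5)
Your proposal is correct and follows essentially the same route as the paper. You bound the extreme eigenvalues of $\mathcal W_K$ term by term via $\sigma_{\max}(A)^k\sigma_{\max}(B)$ and $\sigma_{\min}(A)^k\sigma_{\min}(B)$, pass to the largest termwise ratio $\kappa(A)^{2(K-1)}$, then apply Weyl's perturbation bound and $\ln\frac{1+\varepsilon}{1-\varepsilon}\le 3\varepsilon$. Your subadditivity/superadditivity phrasing is just the matrix form of the paper's unit-vector quadratic-form bounds, and your chord argument justifies the scalar inequality that the paper simply calls standard.
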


\begin{proof}
By Lemma~\ref{lem:app_hess_gram}, it suffices to bound $\kappa(\mathcal W_K)$.

\paragraph{Upper bound.}
For any unit vector $x\in\mathbb{R}^d$,
\[
x^\top \mathcal W_K x = \sum_{k=0}^{K-1}\|B^\top (A^\top)^k x\|_2^2
\le
\sum_{k=0}^{K-1}\|B\|_2^2\|A^k\|_2^2\|x\|_2^2
\le
\sigma_{\max}(B)^2\sum_{k=0}^{K-1}\sigma_{\max}(A)^{2k}.
\]
Taking the maximum over $\|x\|_2=1$ yields
\[
\lambda_{\max}(\mathcal W_K)\le \sigma_{\max}(B)^2\sum_{k=0}^{K-1}\sigma_{\max}(A)^{2k}.
\]

\paragraph{Lower bound.}
Since $B$ is invertible, $\|B^\top u\|_2\ge \sigma_{\min}(B)\|u\|_2$ for all $u$.
Also $\sigma_{\min}(A^k)\ge \sigma_{\min}(A)^k$. Thus for any unit $x$,
\[
\|B^\top(A^\top)^k x\|_2
\ge
\sigma_{\min}(B)\,\|(A^\top)^k x\|_2
\ge
\sigma_{\min}(B)\,\sigma_{\min}(A^k)\,\|x\|_2
\ge
\sigma_{\min}(B)\,\sigma_{\min}(A)^k,
\]
hence
\[
x^\top \mathcal W_K x
\ge
\sigma_{\min}(B)^2\sum_{k=0}^{K-1}\sigma_{\min}(A)^{2k}.
\]
Taking the minimum over $\|x\|_2=1$ yields
\[
\lambda_{\min}(\mathcal W_K)\ge \sigma_{\min}(B)^2\sum_{k=0}^{K-1}\sigma_{\min}(A)^{2k}.
\]

\paragraph{Combine.}
Dividing the two bounds gives the first inequality in~\eqref{eq:app_kappaA}.
For the second, use positivity of terms:
\[
\frac{\sum_{k=0}^{K-1}\sigma_{\max}(A)^{2k}}{\sum_{k=0}^{K-1}\sigma_{\min}(A)^{2k}}
\le
\max_{0\le k\le K-1}\frac{\sigma_{\max}(A)^{2k}}{\sigma_{\min}(A)^{2k}}
=
\kappa(A)^{2(K-1)}.
\]

\paragraph{$\varepsilon$-specialization.}
If $\varepsilon=\|A-I\|_2<1$, then by Weyl's perturbation theorem,
$\sigma_{\max}(A)\le 1+\varepsilon$ and $\sigma_{\min}(A)\ge 1-\varepsilon$,
which implies the first inequality in~\eqref{eq:app_eps}.
For $\varepsilon\le \tfrac12$, the standard bound
$\ln\!\big(\frac{1+\varepsilon}{1-\varepsilon}\big)\le 3\varepsilon$
gives the exponential form.
\end{proof}

\begin{remark}[Low-dimensional actions $d_a<d$]\label{rem:app_lowdim}
If $d_a<d$, then $B$ is not invertible and $\mathcal W_K$ may be singular.
All statements hold on the controllable subspace
$\mathcal S_K=\mathrm{range}(\mathcal W_K)$ by replacing $\lambda_{\min}(\mathcal W_K)$ with
$\lambda_{\min}^+(\mathcal W_K)$ and interpreting $\kappa(\mathcal W_K)$ as an effective condition number.
In this case, additional controllability assumptions are needed to lower bound $\sigma_{\min}^+(\mathcal W_K)$.
\end{remark}

\subsection{Cosine similarity as a proxy}
\label{app:theory_cos}

\begin{assumption}[Constant velocity and smooth actions]\label{as:app_cos_const}
Define latent velocities $v_t:=z_{t+1}-z_t$. Assume there exists a constant $c>0$ such that
\[
\|v_t\|_2=c\qquad \text{for all } t=0,\dots,K-1.
\]
Assume action smoothness $\Delta_a:=\max_{t}\|a_{t+1}-a_t\|_2<\infty$.
\end{assumption}

\begin{definition}[Cosine similarity]\label{def:app_cos}
For $t=0,\dots,K-2$, define
\[
\mathcal C_t:=\cos(v_t,v_{t+1})
=\frac{v_t^\top v_{t+1}}{\|v_t\|_2\|v_{t+1}\|_2},
\qquad
\bar{\mathcal C}:=\frac{1}{K-1}\sum_{t=0}^{K-2}\mathcal C_t.
\]
\end{definition}

\begin{proposition}[Cosine proxy $\Rightarrow$ small $(A-I)$ along visited directions]\label{prop:app_cos}
Under linear dynamics~\eqref{eq:app_linear}, let $\hat v_t:=v_t/\|v_t\|_2$.
Under Assumption~\ref{as:app_cos_const}, for each $t=0,\dots,K-2$,
\begin{equation}\label{eq:app_dir_point_const}
\|(A-I)\hat v_t\|_2
\;\le\;
\sqrt{2(1-\mathcal C_t)}
\;+\;
\frac{\sigma_{\max}(B)\Delta_a}{c}.
\end{equation}
If $\bar{\mathcal C}\ge 1-\eta$, then
\begin{equation}\label{eq:app_dir_avg_const}
\frac{1}{K-1}\sum_{t=0}^{K-2}\|(A-I)\hat v_t\|_2
\;\le\;
\sqrt{2\eta}
\;+\;
\frac{\sigma_{\max}(B)\Delta_a}{c}.
\end{equation}
\end{proposition}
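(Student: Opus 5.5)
Under the linear dynamics~\eqref{eq:app_linear}, consecutive velocities satisfy the recursion
\[
v_{t+1}=z_{t+2}-z_{t+1}=A(z_{t+1}-z_t)+B(a_{t+1}-a_t)=Av_t+B(a_{t+1}-a_t).
\]
Subtracting $v_t$ gives the key identity
\[
(A-I)v_t=(v_{t+1}-v_t)-B(a_{t+1}-a_t).
\]
The plan is to bound the right-hand side with the triangle inequality and then divide by $\|v_t\|_2=c$. This reduces the whole problem to controlling $\|v_{t+1}-v_t\|_2$ in terms of $\mathcal C_t$.

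\textbf{Bounding the velocity change.} Assumption~\ref{as:app_cos_const} gives $\|v_t\|_2=\|v_{t+1}\|_2=c$. Hence
\[
\|v_{t+1}-v_t\|_2^2=2c^2-2v_t^\top v_{t+1}=2c^2(1-\mathcal C_t),
\]
so $\|v_{t+1}-v_t\|_2=c\sqrt{2(1-\mathcal C_t)}$. The action term is bounded directly:
\[
\|B(a_{t+1}-a_t)\|_2\le\sigma_{\max}(B)\Delta_a.
\]
Combining the two bounds yields $\|(A-I)v_t\|_2\le c\sqrt{2(1-\mathcal C_t)}+\sigma_{\max}(B)\Delta_a$. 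Dividing by $c$ gives \eqref{eq:app_dir_point_const}, because $(A-I)\hat v_t=(A-I)v_t/c$.

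\textbf{Averaged bound.} Average \eqref{eq:app_dir_point_const} over $t=0,\dots,K-2$. The action term is constant in $t$, so it passes through unchanged. For the curvature term, use concavity of $x\mapsto\sqrt{x}$ (Jensen):
\[
\frac{1}{K-1}\sum_{t}\sqrt{2(1-\mathcal C_t)}\le\sqrt{2(1-\bar{\mathcal C})}\le\sqrt{2\eta}.
\]
This gives \eqref{eq:app_dir_avg_const}.

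The argument is elementary, and the only delicate point is index bookkeeping. The identity for $v_{t+1}$ uses actions $a_t$ and $a_{t+1}$, so it requires $a_{t+1}$ to exist for $t\le K-2$. It also requires the constant-norm assumption to cover $v_{K-1}$, which is the reason Assumption~\ref{as:app_cos_const} ranges up to $K-1$.
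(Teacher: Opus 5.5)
Your proposal is correct and follows the paper's proof essentially step for step: you use the identity $v_{t+1}-v_t=(A-I)v_t+B(a_{t+1}-a_t)$, the triangle inequality with $\|v_t\|_2=c$, the expansion $\|v_{t+1}-v_t\|_2^2=2c^2(1-\mathcal C_t)$, and Jensen's inequality for the concave square root to get the averaged bound. Your index check, that $a_{t+1}$ and $v_{t+1}$ exist for $t\le K-2$ and fall within the ranges of Assumption~\ref{as:app_cos_const}, is also correct.
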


\begin{proof}
Under~\eqref{eq:app_linear},
\[
v_{t+1}-v_t
=
(z_{t+2}-z_{t+1})-(z_{t+1}-z_t)
=
(A-I)(z_{t+1}-z_t)+B(a_{t+1}-a_t)
=
(A-I)v_t+B(a_{t+1}-a_t).
\]
Thus, by the triangle inequality,
\[
\|(A-I)\hat v_t\|_2
=
\frac{\|(A-I)v_t\|_2}{\|v_t\|_2}
\le
\frac{\|v_{t+1}-v_t\|_2}{\|v_t\|_2}
+
\frac{\|B(a_{t+1}-a_t)\|_2}{\|v_t\|_2}
\le
\frac{\|v_{t+1}-v_t\|_2}{c}
+
\frac{\sigma_{\max}(B)\Delta_a}{c}.
\]
Since $\|v_t\|_2=\|v_{t+1}\|_2=c$,
\[
\|v_{t+1}-v_t\|_2^2
=
\|v_{t+1}\|_2^2+\|v_t\|_2^2-2v_{t+1}^\top v_t
=
2c^2(1-\mathcal C_t),
\]
hence $\|v_{t+1}-v_t\|_2/c=\sqrt{2(1-\mathcal C_t)}$, proving~\eqref{eq:app_dir_point_const}.
Averaging and applying Jensen's inequality to the concave map $x\mapsto\sqrt{x}$ gives
\[
\frac{1}{K-1}\sum_{t=0}^{K-2}\sqrt{1-\mathcal C_t}
\le
\sqrt{1-\bar{\mathcal C}}
\le
\sqrt{\eta},
\]
which implies~\eqref{eq:app_dir_avg_const}.
\end{proof}

\begin{remark}[Directional vs.\ spectral control]\label{rem:app_dir_vs_spec}
Proposition~\ref{prop:app_cos} bounds $(A-I)$ only along visited directions $\{\hat v_t\}$. Upgrading this to a uniform spectral bound $\varepsilon=\|A-I\|_2$ requires an additional coverage condition so that visited directions span the latent space. This is not an unreasonable assumption since training trajectories are typically collected to be diverse. Under such regimes, maximizing cosine similarity provides a meaningful proxy for making $A$ close to $I$ in spectral norm.
\end{remark}

\section{Related Work (Cont.) }
\label{sec:related_cont}

Here, we discuss the connections and differences between temporal straightening and local linearization, Koopman methods, and the broader literature on learning plannable representations. Temporal straightening targets the curvature of latent trajectories, a geometric property distinct from linear dynamics, whether local or global.

Local model-based control approximates nonlinear dynamics around a nominal trajectory using first- or second-order models, as in DDP and iLQR \citep{mayne1966ddp,li2004ilqr}. Because these methods presuppose a low-dimensional state space, they have motivated representation-learning methods that map high-dimensional observations into latent spaces where local dynamics models become applicable. For example, E2C and RCE explicitly impose locally linear latent dynamics \citep{watter2015embedcontrollocallylinear,banijamali2018robust}, while later methods broaden this direction by learning representations or objectives that support locally linear control \citep{zhang2019solar,Levine2020pcc,shu2020pc3}. Temporal straightening differs in both goal and mechanism. We do not aim to learn locally linear dynamics, nor do we design representations for locally linear control. Instead, we jointly learn the encoder and predictor while directly regularizing the geometry of latent trajectories.

Koopman methods seek observables whose evolution is linear under a global operator \citep{koopman1931hamiltonian}. Recent deep models learn such observables or eigenfunctions directly from data \citep{lusch2018koopman,yeung2019koopman,takeishi2017learning,cheng2026information}. Temporal straightening targets a different property of the learned representation. Koopman methods constrains the form of the dynamics model but do not require latent trajectories to be straight as linear systems can still produce curved or oscillatory paths. In contrast, temporal straightening allows nonlinear latent dynamics through a learned predictor, but more directly regularizes trajectory geometry by penalizing curvature along observed transitions.

More broadly, our work is related to learning plannable representations, especially methods that make planning easier by aligning latent geometry or value structure with feasible transitions. This includes approaches where planning or inference can be carried out through interpolation in latent space \citep{eysenbach2024inference,kurutach2018causalinfogan,wang2019visualplanning}, as well as methods that build planning-oriented geometry from shortest-path, reachability, or asymmetric goal-reaching structure \citep{yang2020plan2vec,wang2023quasimetric}. These works share the premise that representation geometry or value structure matters for planning, but with different focuses. Our contribution is to use straightening itself as a simple geometry regularization during training.

\clearpage
\section{Visualizations}

\subsection{Distance Heatmaps}
\label{app:heatmap}

We plot heatmaps of the Euclidean distances in the embedding space. The yellow star represents the target, and we compute the Euclidean distance between its embedding and those of all other states in the maze. Blue indicates small values, and red indicates large values. With straightening, the latent distance accurately reflects the minimum number of steps required to reach the target. We find that spatial and global/aggregated features capture complementary distance information: spatial features preserve local geometry and yield fine-grained, locally discriminative distances, while global/aggregated features provide an informative longer-range signal.

We compare the distance heatmaps with ground-truth heatmaps constructed by dividing the mazes into discrete grids and applying the A-star algorithm. 4-neighbor connectivity means each grid cell connects only to up/down/left/right cells. 8-neighbor connectivity adds the four diagonals (up-left, up-right, down-left, down-right), so paths can cut corners diagonally and distances are usually shorter.

\begin{figure}[h]
\center
    \begin{subfigure}[t]{0.46\textwidth}
        \centering
        \includegraphics[width=\textwidth]{images/heatmap/umaze_geo8_gt.pdf}
        \caption{Ground-Truth using A-star (4 neighbors)}
    \end{subfigure}
    \hfill
    \begin{subfigure}[t]{0.46\textwidth}
        \centering
        \includegraphics[width=\textwidth]{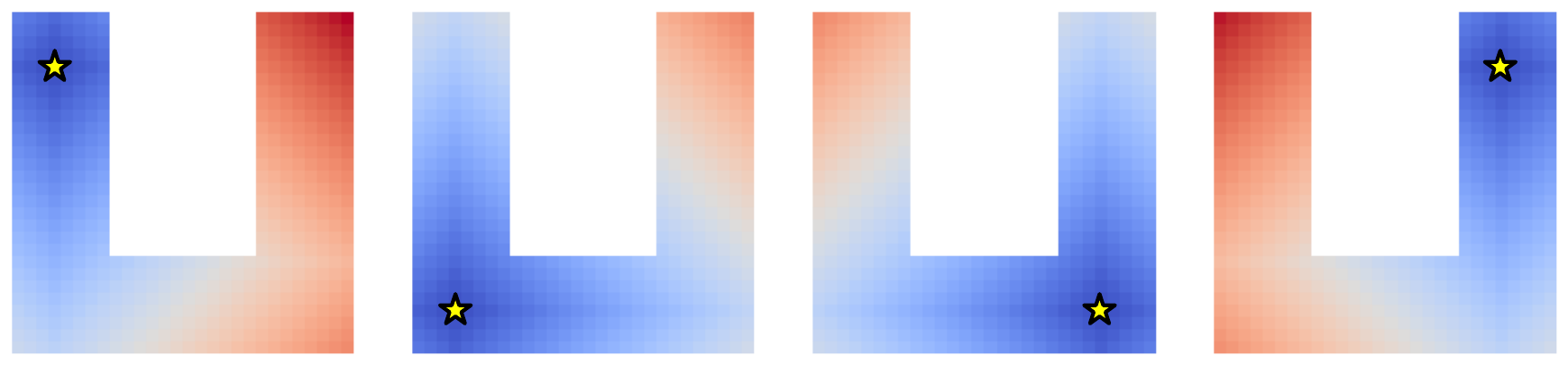}
        \caption{Ground-Truth using A-star (8 neighbors)}
    \end{subfigure}
    \begin{subfigure}[t]{0.46\textwidth}
        \centering
        \includegraphics[width=\textwidth]{images/heatmap/umaze_dino_cls.pdf}
        \caption{DINOv2 CLS embedding}
    \end{subfigure}
    \hfill
    \begin{subfigure}[t]{0.46\textwidth}
        \centering
        \includegraphics[width=\textwidth]{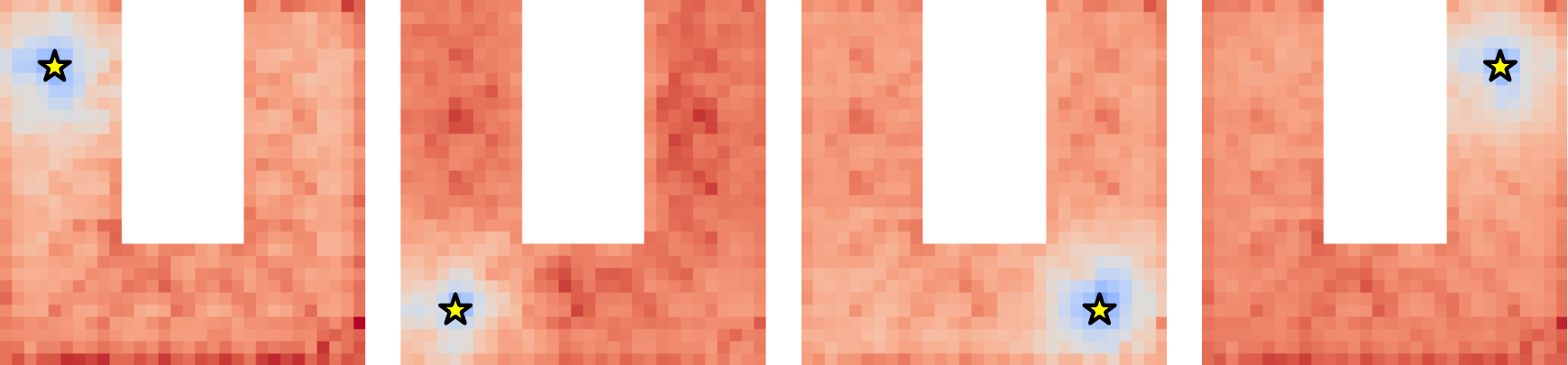}
        \caption{DINOv2 patch embedding}
    \end{subfigure}
    \begin{subfigure}[t]{0.46\textwidth}
        \centering
        \includegraphics[width=\textwidth]{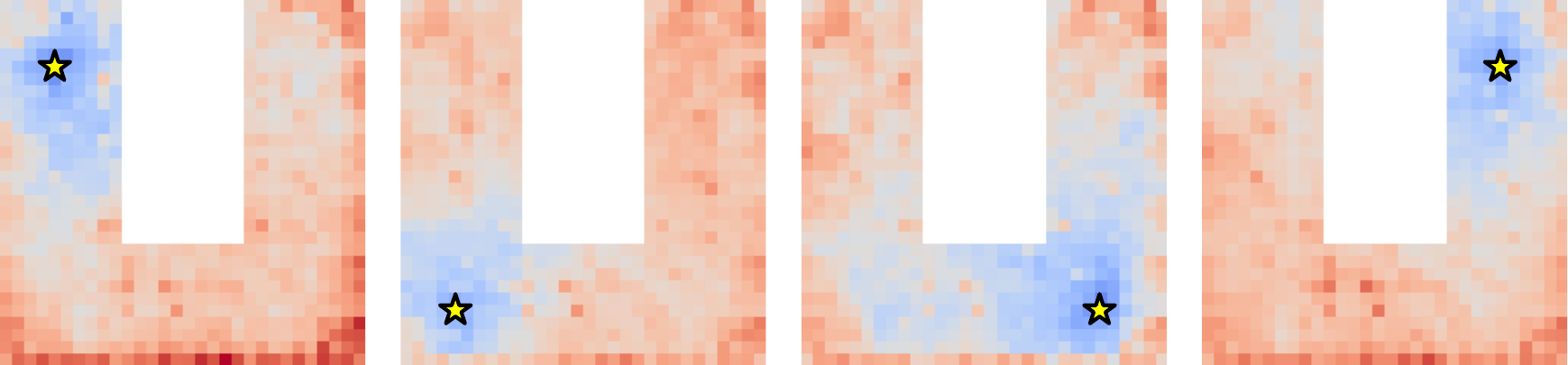}
        \caption{DINOv2 + spatial proj [straightening=False]}
    \end{subfigure}
    \hfill
    \begin{subfigure}[t]{0.46\textwidth}
        \centering
        \includegraphics[width=\textwidth]{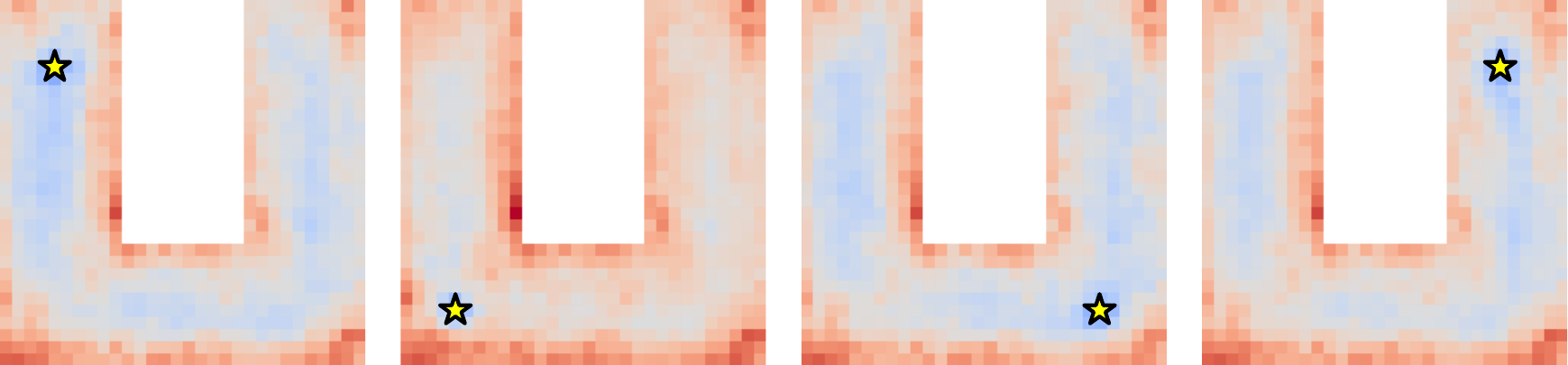}
        \caption{ResNet - spatial [straightening=False]}
    \end{subfigure}
        \begin{subfigure}[t]{0.46\textwidth}
        \centering
        \includegraphics[width=\textwidth]{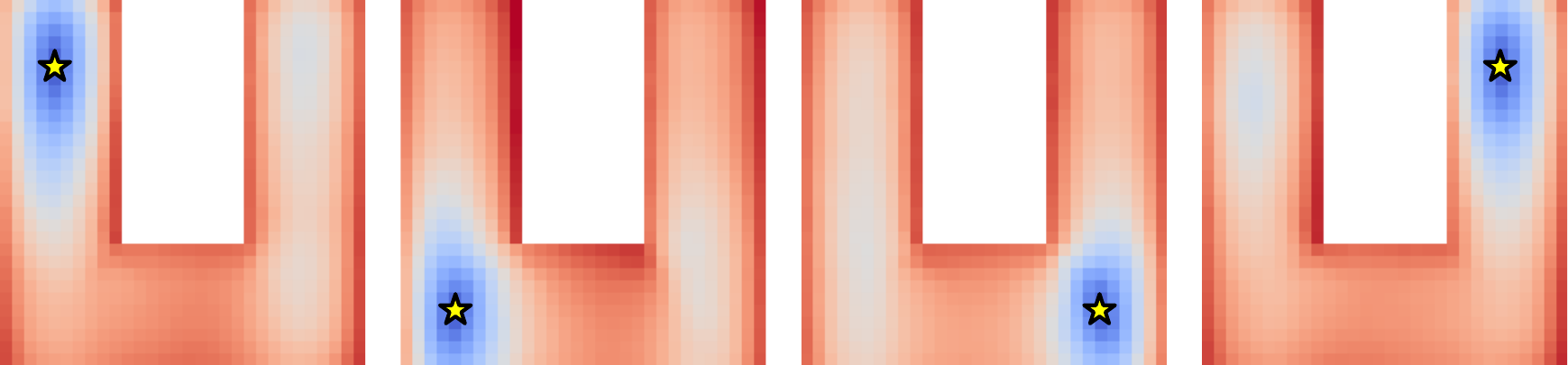}
        \caption{ResNet - global [straightening=False]}
    \end{subfigure}
    \hfill
    \begin{subfigure}[t]{0.46\textwidth}
        \centering
        \includegraphics[width=\textwidth]{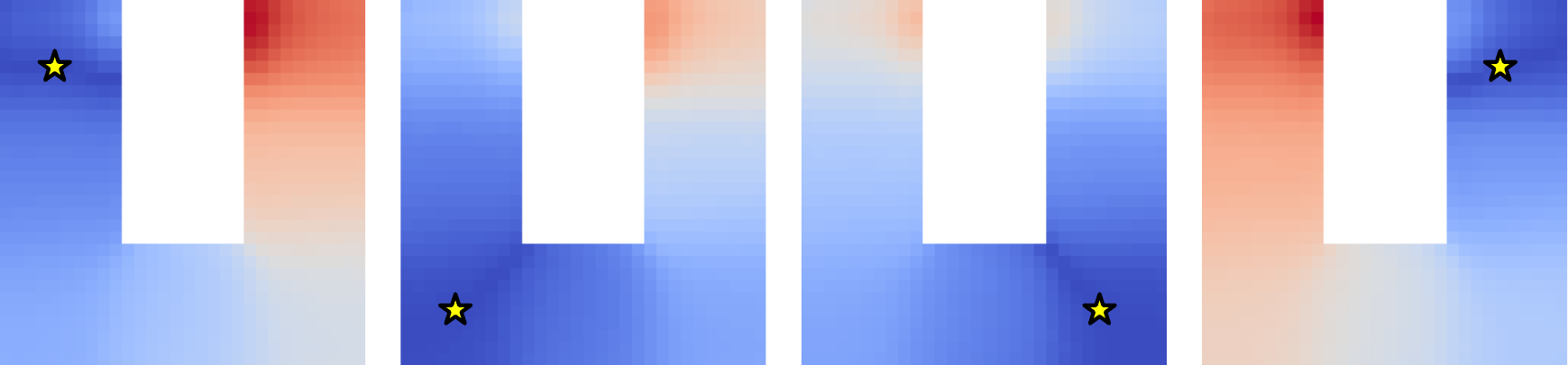}
        \caption{ResNet - global [straightening=True]}
    \end{subfigure}
    \begin{subfigure}[t]{0.46\textwidth}
        \centering
        \includegraphics[width=\textwidth]{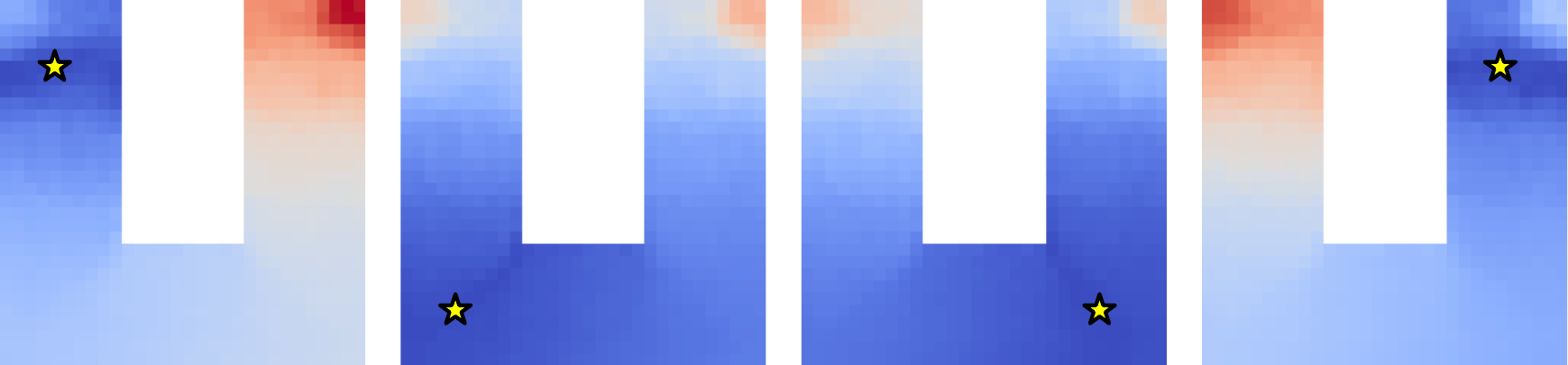}
        \caption{DINOv2 + spatial proj (agg head) [straightening=True]}
    \end{subfigure}
    \hfill
    \begin{subfigure}[t]{0.46\textwidth}
        \centering
        \includegraphics[width=\textwidth]{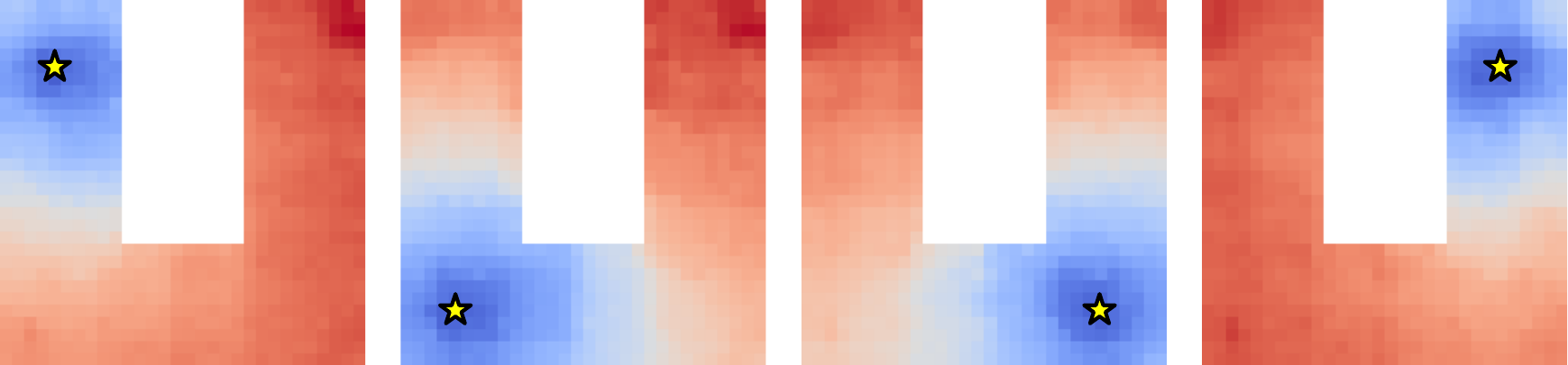}
        \caption{DINOv2 + spatial proj (spatial) [straightening=True]}
    \end{subfigure}
        \begin{subfigure}[t]{0.46\textwidth}
        \centering
        \includegraphics[width=\textwidth]{images/heatmap/resnet/umaze_aggmlpcos1e-1_resnetplus_dim8_hw14/20_agg.pdf}
        \caption{ResNet - spatial (agg head) [straightening=True]}
    \end{subfigure}
    \hfill
    \begin{subfigure}[t]{0.46\textwidth}
        \centering
        \includegraphics[width=\textwidth]{images/heatmap/resnet/umaze_aggmlpcos1e-1_resnetplus_dim8_hw14/20_patch.pdf}
        \caption{ResNet - spatial (spatial) [straightening=True]}
    \end{subfigure}
\caption{Distance heatmaps of PointMaze-UMaze.
}
\label{img:more_heatmap_umaze}
\end{figure}

\begin{figure}[h]
\center
    \begin{subfigure}[t]{0.46\textwidth}
        \centering
        \includegraphics[width=\textwidth]{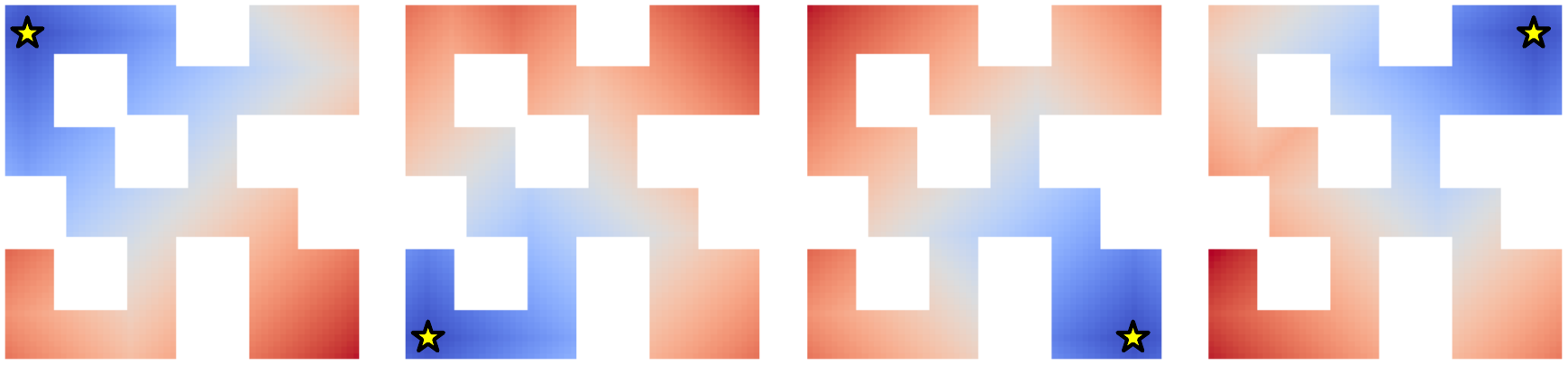}
        \caption{Ground-Truth using A-star (4 neighbors)}
    \end{subfigure}
    \hfill
    \begin{subfigure}[t]{0.46\textwidth}
        \centering
        \includegraphics[width=\textwidth]{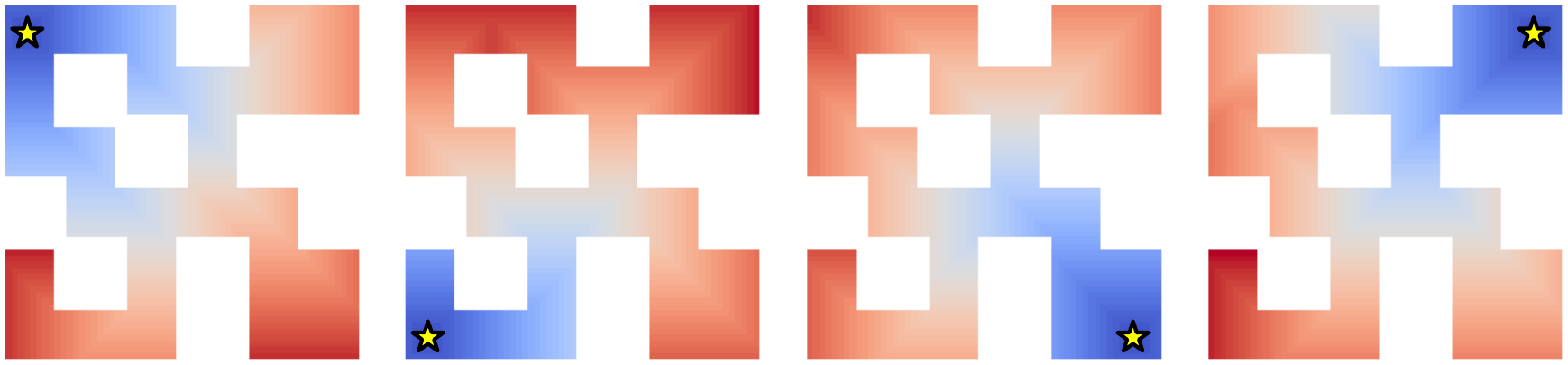}
        \caption{Ground-Truth using A-star (8 neighbors)}
    \end{subfigure}
    \hfill
    \begin{subfigure}[t]{0.46\textwidth}
        \centering
        \includegraphics[width=\textwidth]{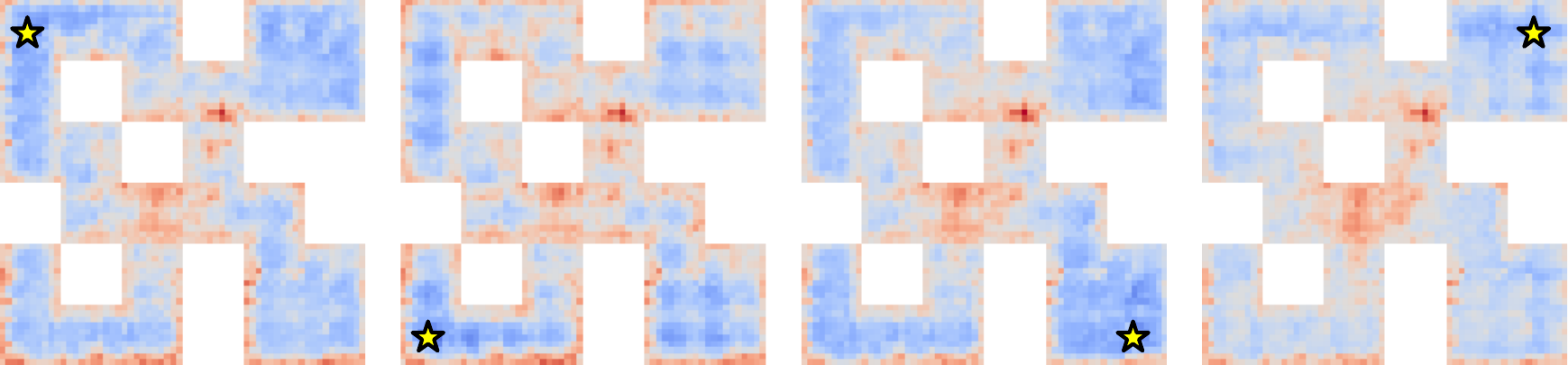}
        \caption{DINOv2 CLS embedding}
    \end{subfigure}
    \hfill
    \begin{subfigure}[t]{0.46\textwidth}
        \centering
        \includegraphics[width=\textwidth]{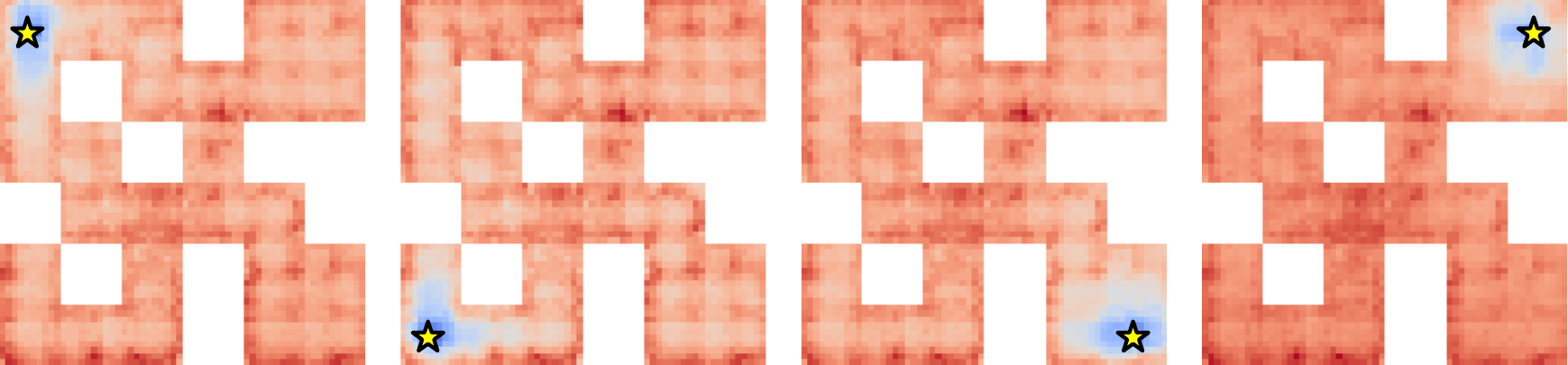}
        \caption{DINOv2 patch embedding}
    \end{subfigure}
    \hfill
    \begin{subfigure}[t]{0.46\textwidth}
        \centering
        \includegraphics[width=\textwidth]{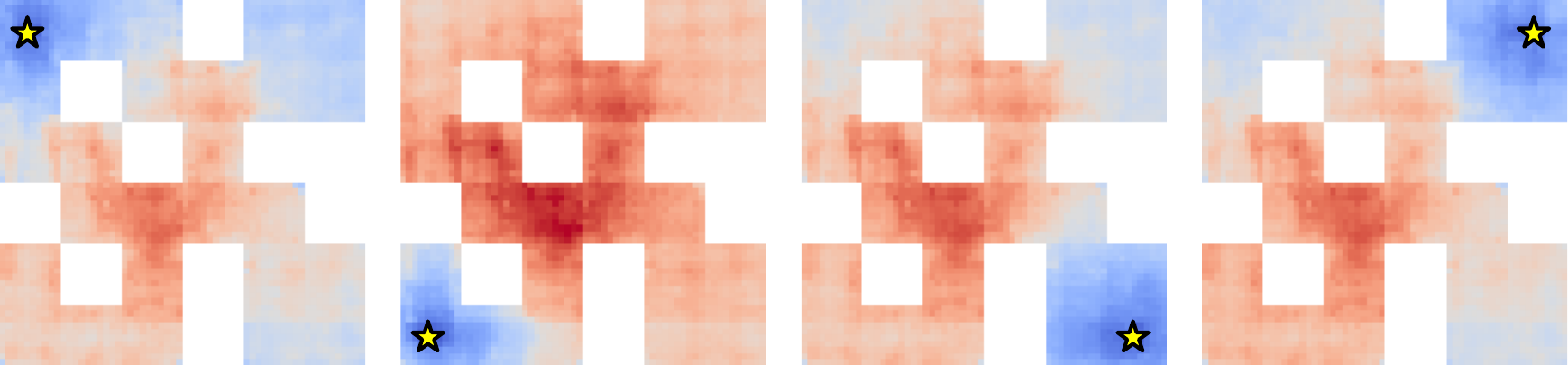}
        \caption{DINOv2 + spatial proj [straightening=False]}
    \end{subfigure}
    \hfill
    \begin{subfigure}[t]{0.46\textwidth}
        \centering
        \includegraphics[width=\textwidth]{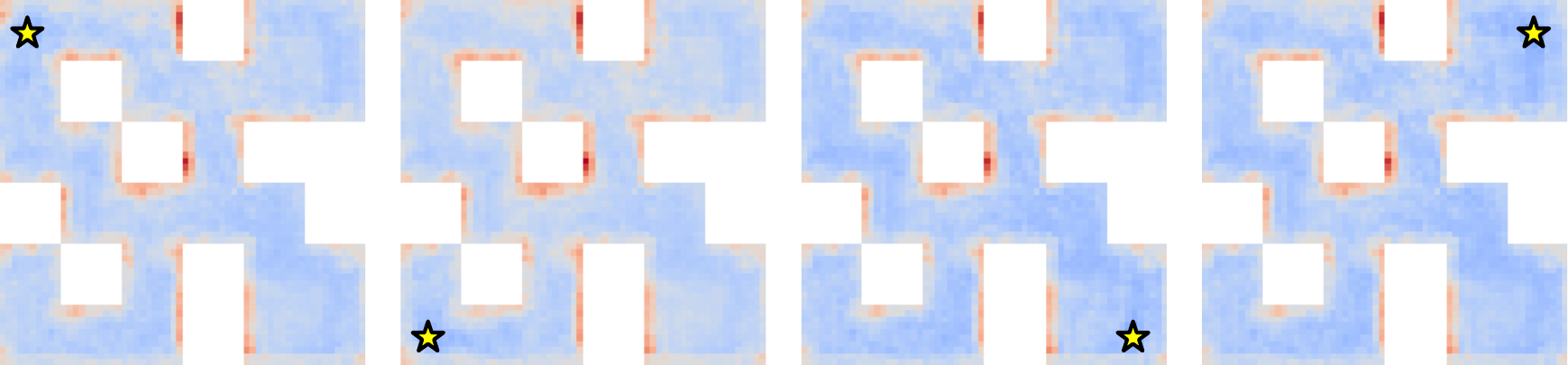}
        \caption{ResNet - spatial [straightening=False]}
    \end{subfigure}
    \hfill
    \begin{subfigure}[t]{0.46\textwidth}
        \centering
        \includegraphics[width=\textwidth]{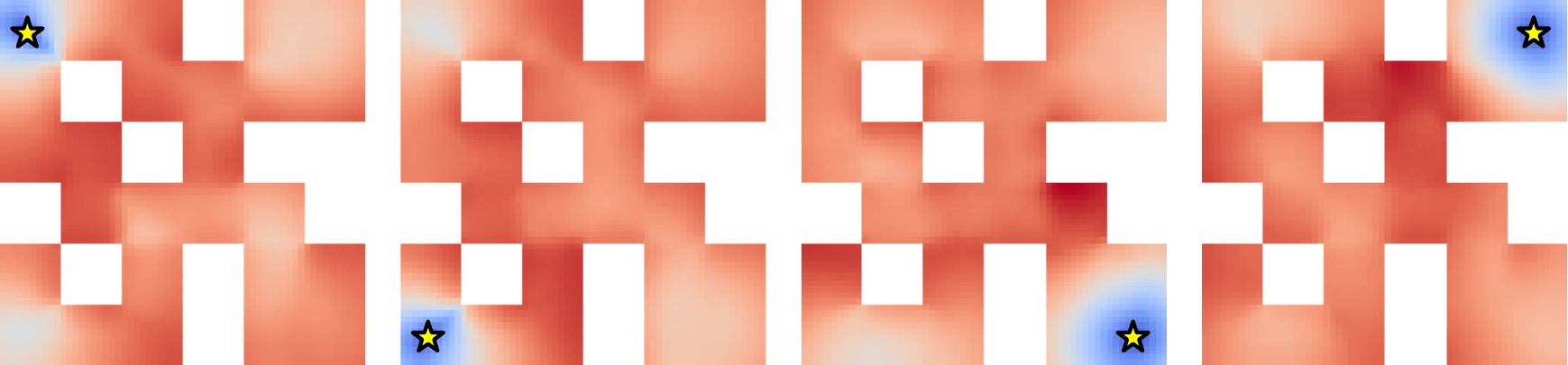}
        \caption{ResNet - global [straightening=False]}
    \end{subfigure}
    \hfill
    \begin{subfigure}[t]{0.46\textwidth}
        \centering
        \includegraphics[width=\textwidth]{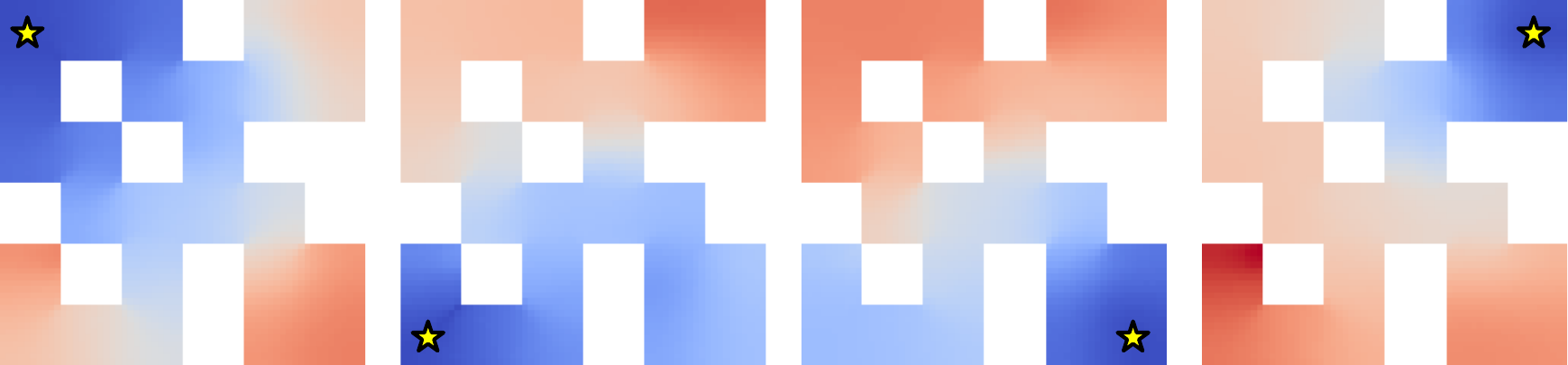}
        \caption{ResNet - global [straightening=True]}
    \end{subfigure}
    \hfill
    \begin{subfigure}[t]{0.46\textwidth}
        \centering
        \includegraphics[width=\textwidth]{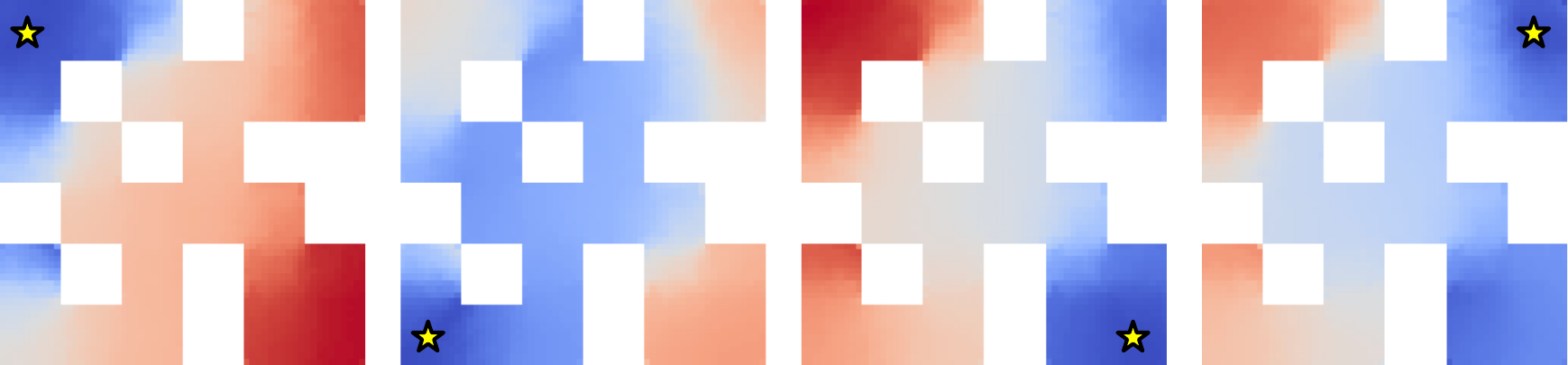}
        \caption{DINOv2 + spatial proj (agg head) [straightening=True]}
    \end{subfigure}
    \hfill
    \begin{subfigure}[t]{0.46\textwidth}
        \centering
        \includegraphics[width=\textwidth]{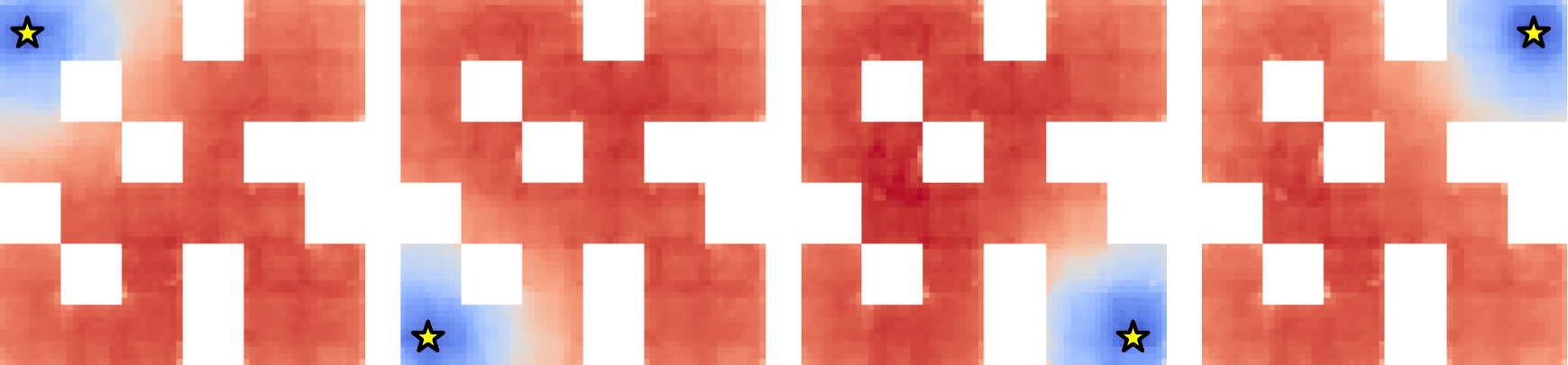}
        \caption{DINOv2 + spatial proj (spatial) [straightening=True]}
    \end{subfigure}
        \begin{subfigure}[t]{0.46\textwidth}
        \centering
        \includegraphics[width=\textwidth]{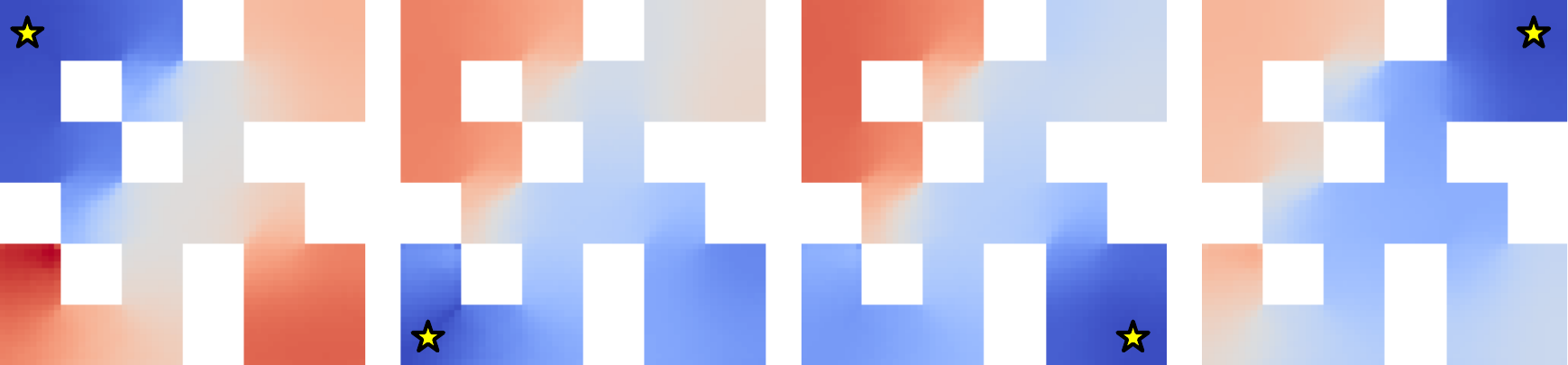}
        \caption{ResNet - spatial (agg head) [straightening=True]}
    \end{subfigure}
    \hfill
    \begin{subfigure}[t]{0.46\textwidth}
        \centering
        \includegraphics[width=\textwidth]{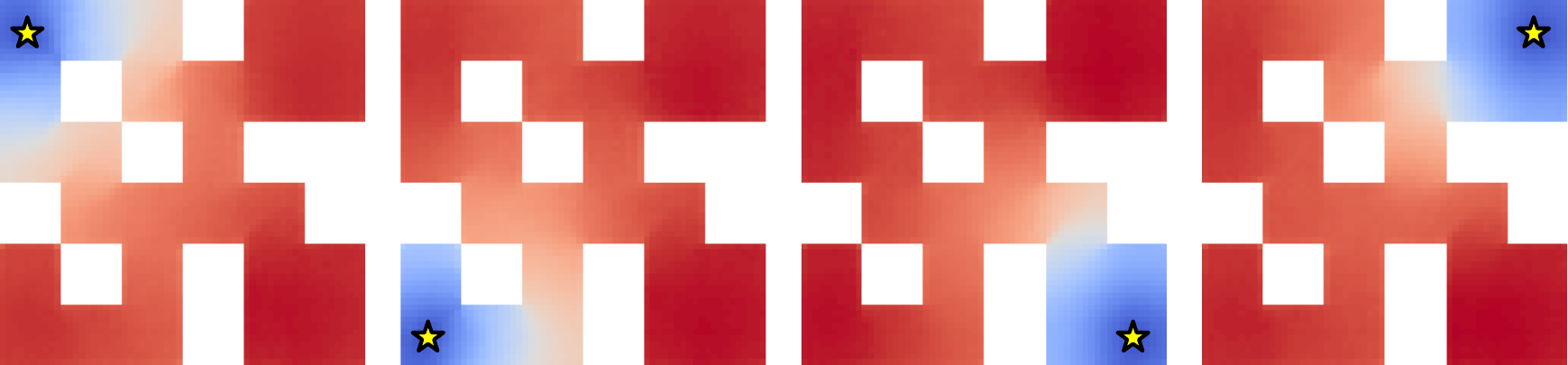}
        \caption{ResNet - spatial (spatial) [straightening=True]}
    \end{subfigure}
\caption{Distance heatmaps of PointMaze-Medium.
}
\label{img:more_heatmap_medium}
\end{figure}

\clearpage

\subsection{Visualization of Latent Trajectories}
\label{app:pca}
To visualize the learned representations of the trajectories, we randomly sample trajectories with a length of 30 and plot them in 2D using PCA. Here, we use DINO CLS token embeddings and the aggregated features of our model (trained with straightening). While latent trajectories are highly curved in DINO CLS embedding space, they become significantly smoother after straightening. Additionally, we compute the MSE between the embeddings of each intermediate state and the target. The Euclidean distance is closer to the geodesic distance for straighter trajectories, and thus MSE (which is squared Euclidean distance) becomes a more useful planning cost function that can reflect the true progress towards the target. Visualizations for different environments are in~\Cref{img:more_pca_wall,img:more_pca_umaze,img:more_pca_medium,img:more_pca_pusht}.

\begin{figure*}[h]
    \centering
    \begin{subfigure}[t]{\textwidth}
        \centering
        \includegraphics[width=0.95\textwidth]{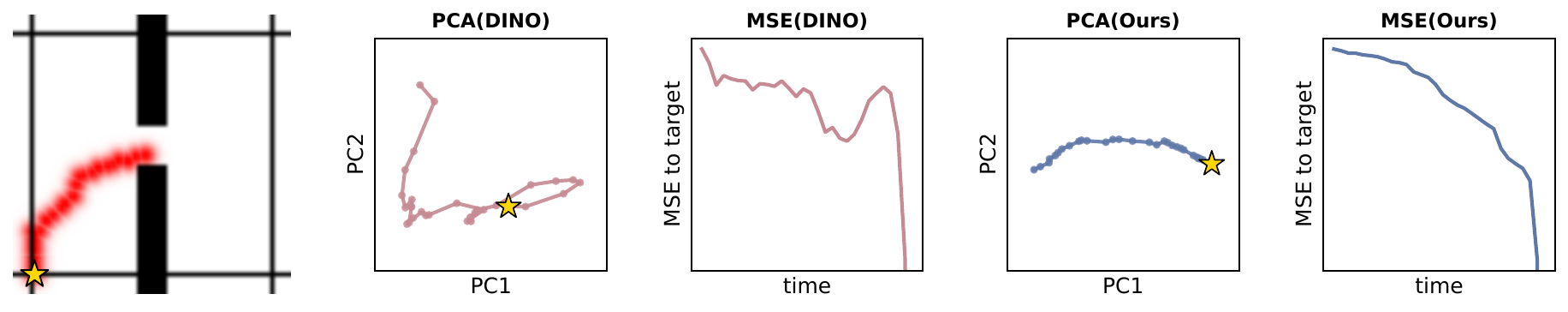}
    \end{subfigure}
    \begin{subfigure}[t]{\textwidth}
    \centering
    \includegraphics[width=0.95\textwidth]{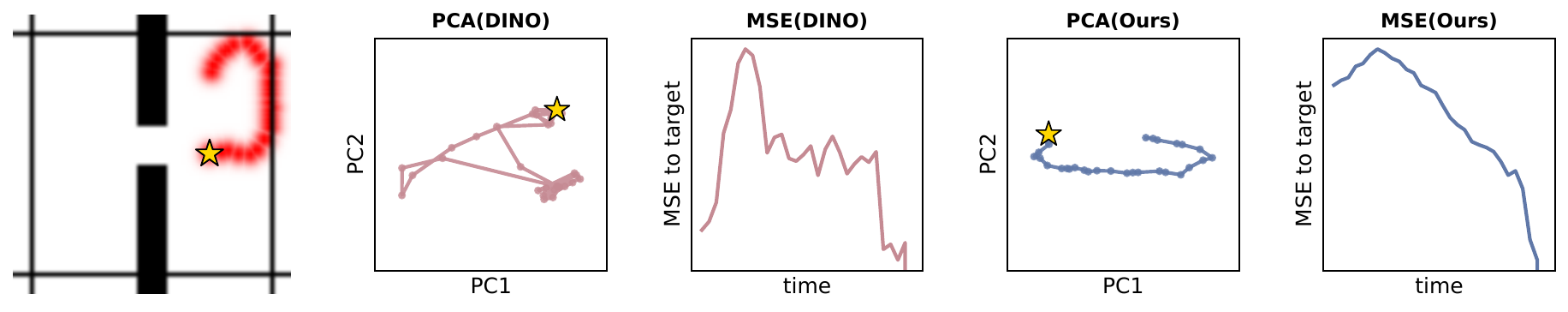}
    \end{subfigure}
\caption{PCA of Trajectories of Wall.}
\label{img:more_pca_wall}
\end{figure*}

\begin{figure*}[h]
    \centering
    \begin{subfigure}[t]{\textwidth}
        \centering
        \includegraphics[width=0.95\textwidth]{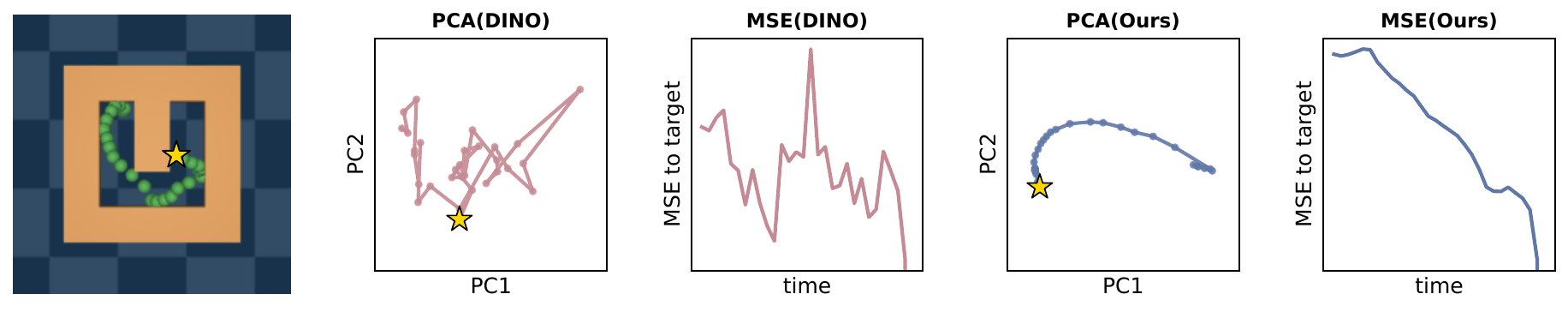}
    \end{subfigure}
    \begin{subfigure}[t]{\textwidth}
    \centering
    \includegraphics[width=0.95\textwidth]{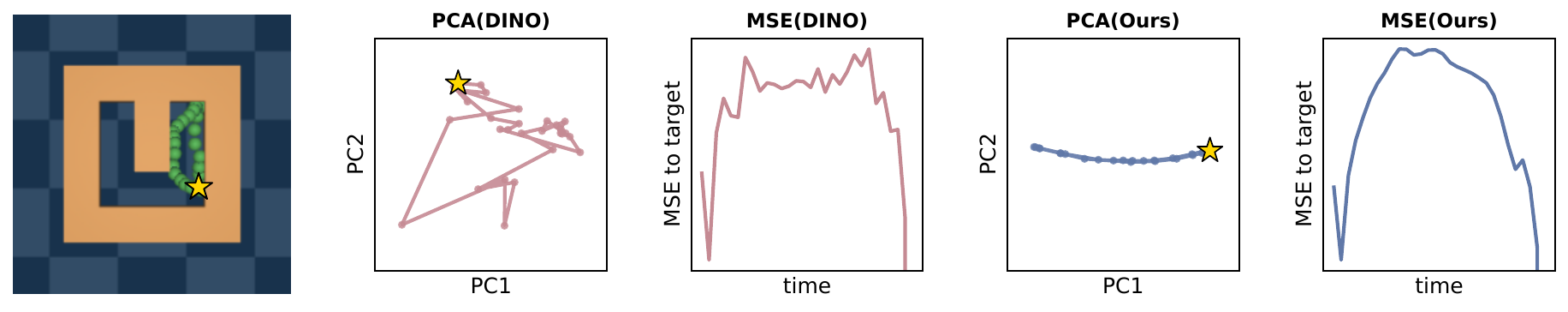}
    \end{subfigure}
\caption{PCA of Trajectories of PointMaze-UMaze.}
\label{img:more_pca_umaze}
\end{figure*}

\begin{figure*}[h]
    \centering
    \begin{subfigure}[t]{\textwidth}
        \centering
        \includegraphics[width=0.95\textwidth]{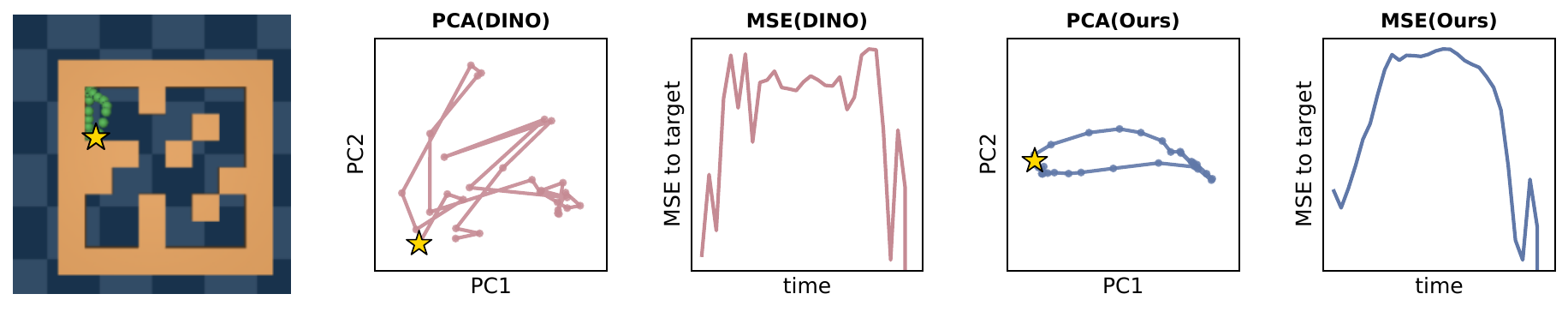}
    \end{subfigure}
    \begin{subfigure}[t]{\textwidth}
        \centering
        \includegraphics[width=0.95\textwidth]{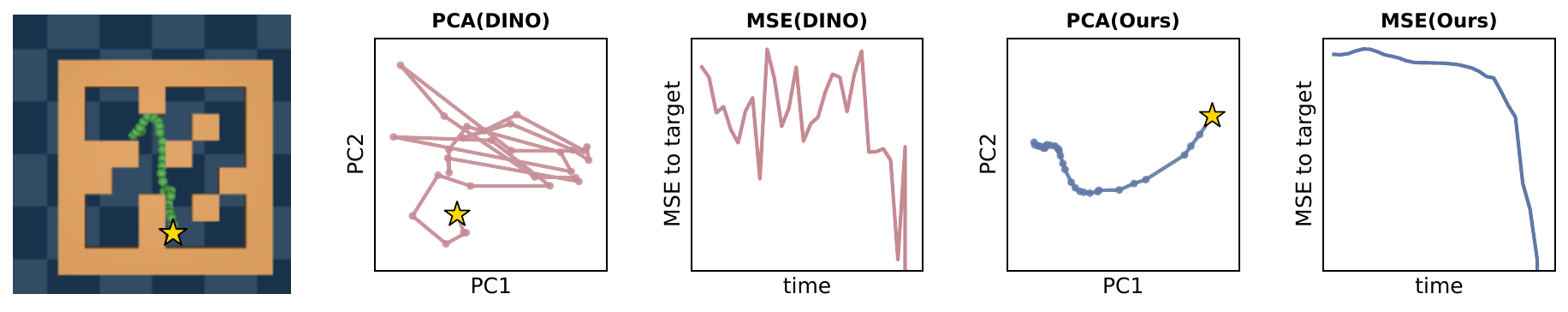}
    \end{subfigure}
    \begin{subfigure}[t]{\textwidth}
    \centering
    \includegraphics[width=0.95\textwidth]{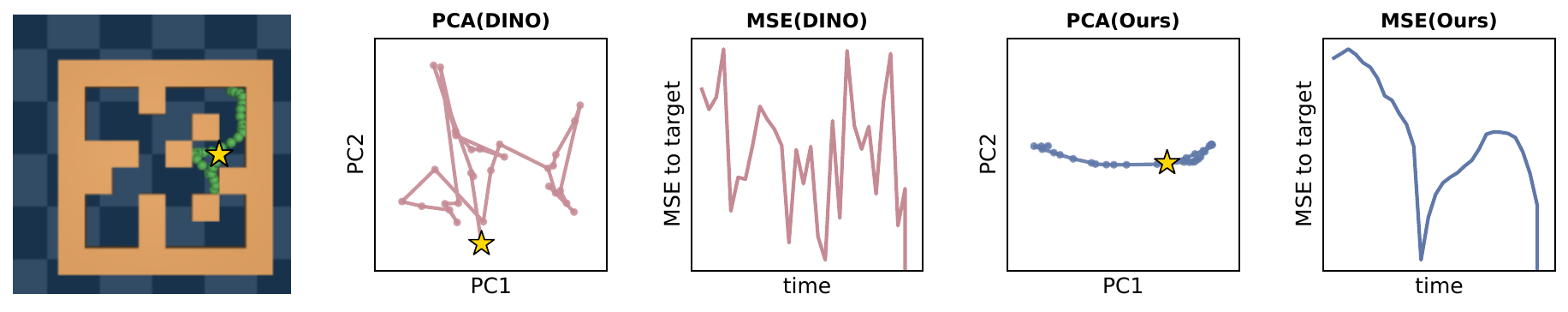}
    \end{subfigure}
\caption{PCA of Trajectories of PointMaze-Medium.}
\label{img:more_pca_medium}
\end{figure*}

\begin{figure*}[h]
    \centering
    \begin{subfigure}[t]{\textwidth}
        \centering
        \includegraphics[width=0.95\textwidth]{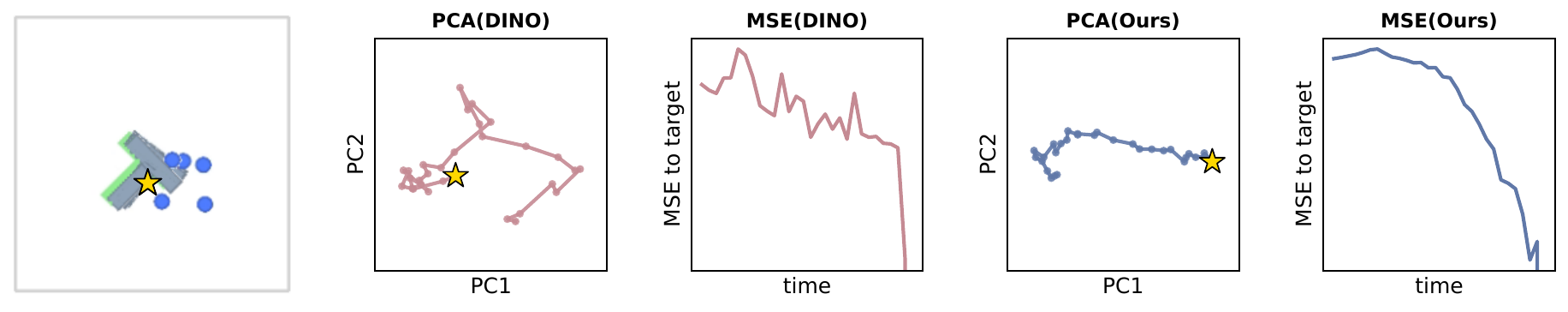}
    \end{subfigure}
    \begin{subfigure}[t]{\textwidth}
        \centering
        \includegraphics[width=0.95\textwidth]{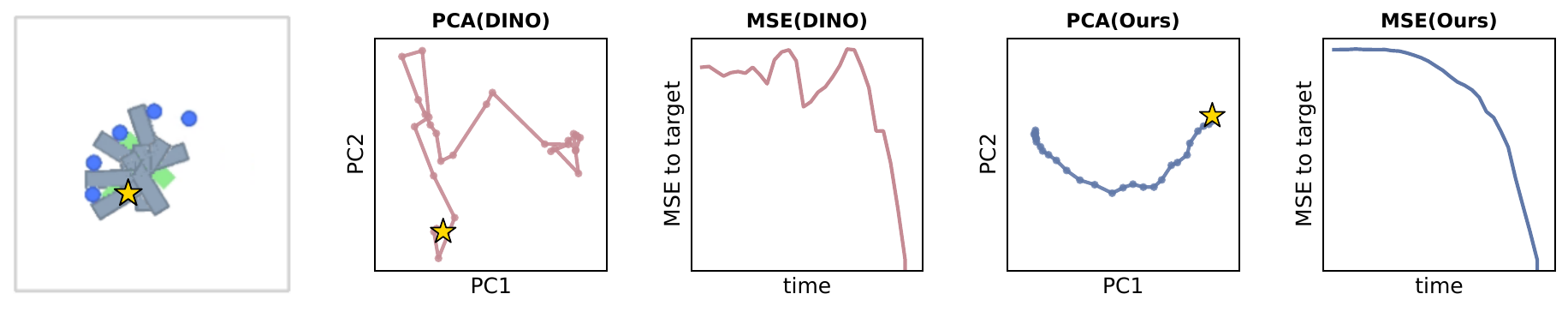}
    \end{subfigure}
    \begin{subfigure}[t]{\textwidth}
    \centering
    \includegraphics[width=0.95\textwidth]{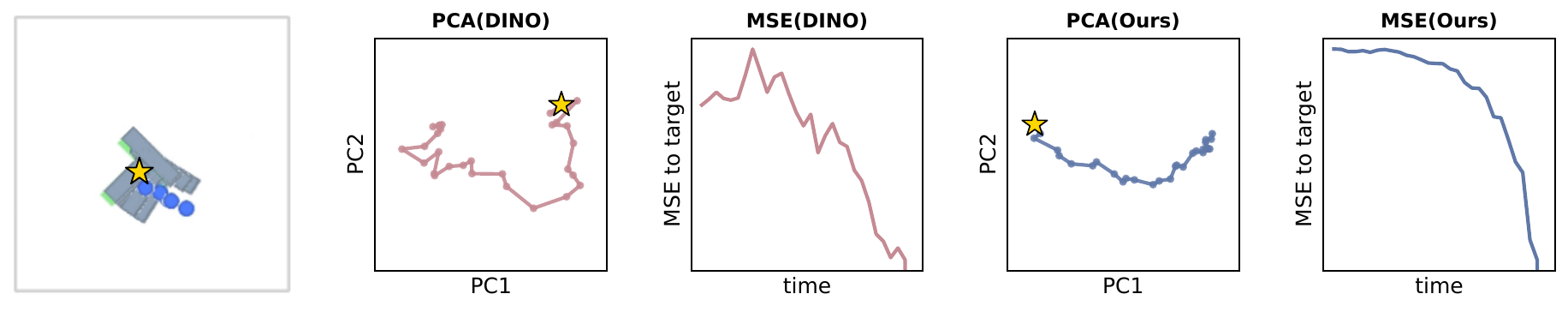}
    \end{subfigure}
\caption{PCA of Trajectories of PushT. The overlaid figures only include five samples for readability.}
\label{img:more_pca_pusht}
\end{figure*}

\clearpage

\subsection{Planning Trajectories}
\label{app:traj}

\begin{figure}[h]
    \centering
    \begin{subfigure}[t]{\textwidth}
        \centering
        \includegraphics[width=0.95\textwidth]{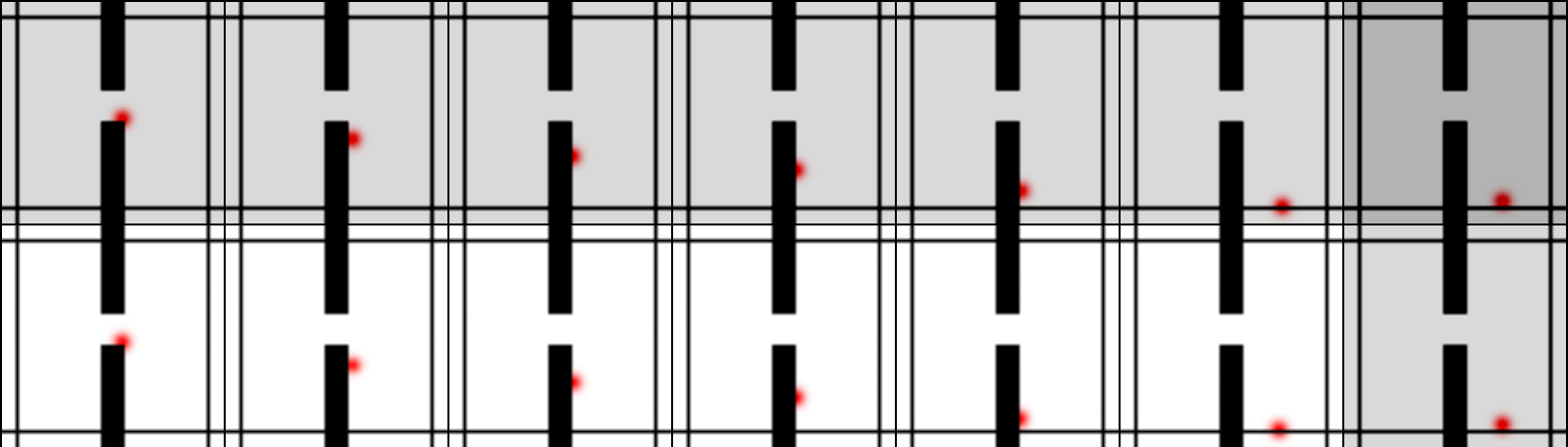}
        \vspace{0.1in}
    \end{subfigure}
    \begin{subfigure}[t]{\textwidth}
        \centering
        \includegraphics[width=0.95\textwidth]{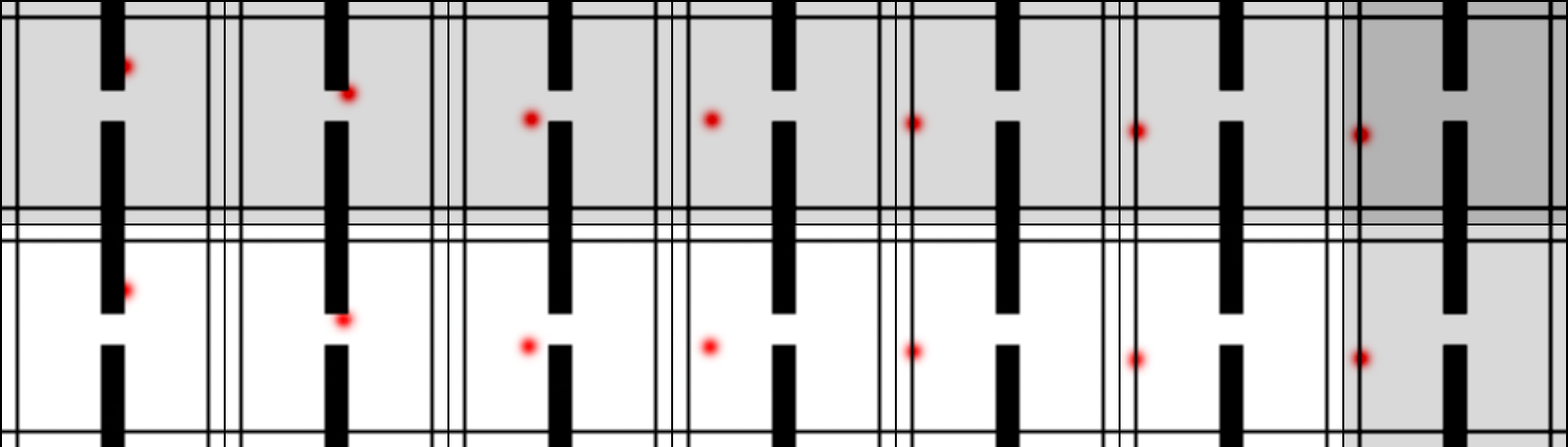}
        \vspace{0.1in}
    \end{subfigure}
    \begin{subfigure}[t]{\textwidth}
        \centering
        \includegraphics[width=0.95\textwidth]{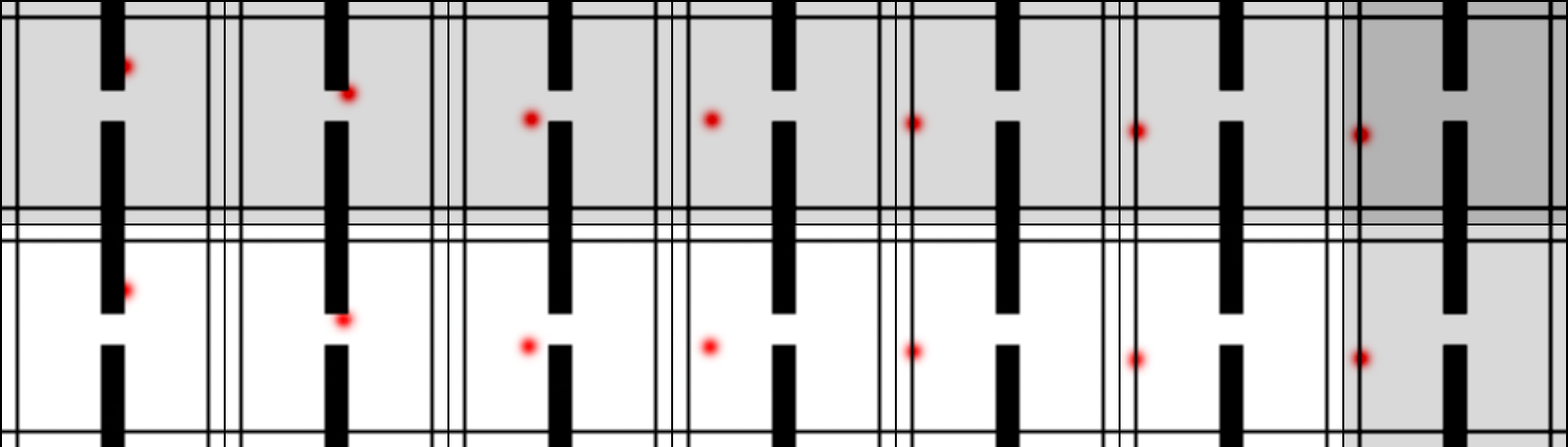}
        \vspace{0.1in}
    \end{subfigure}
\caption{Open-Loop Planning Trajectories of Wall. The first row is from the simulator and the second from the decoder. The last column is the goal image.}
\label{img:plan_traj_pusht}
\end{figure}

\begin{figure*}[h]
    \centering
    \begin{subfigure}[t]{\textwidth}
        \centering
        \includegraphics[width=0.95\textwidth]{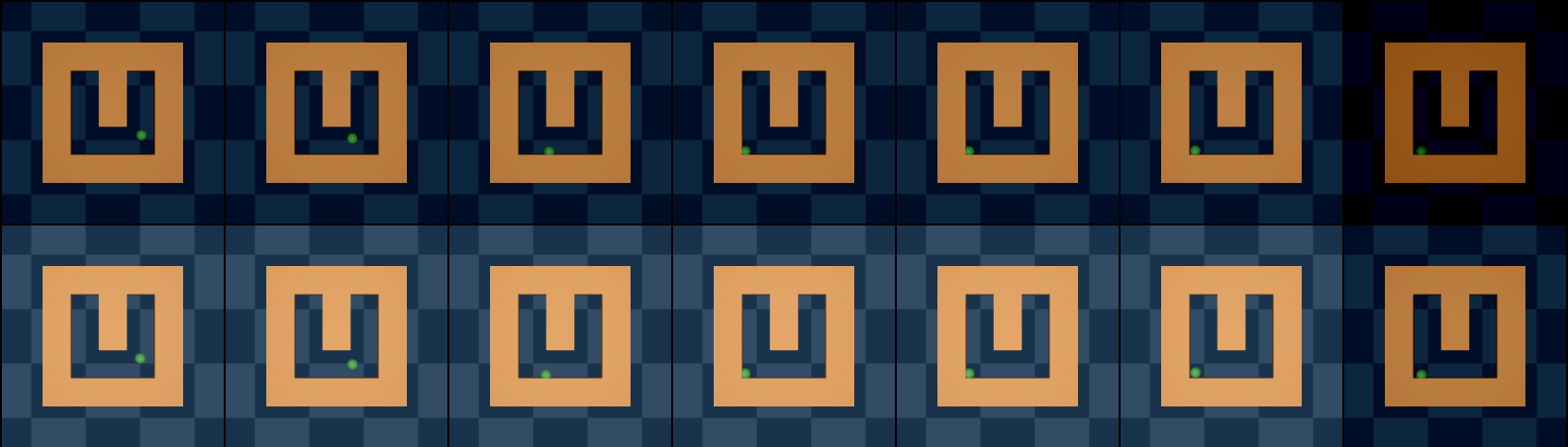}
        \vspace{0.1in}
    \end{subfigure}
    \begin{subfigure}[t]{\textwidth}
        \centering
        \includegraphics[width=0.95\textwidth]{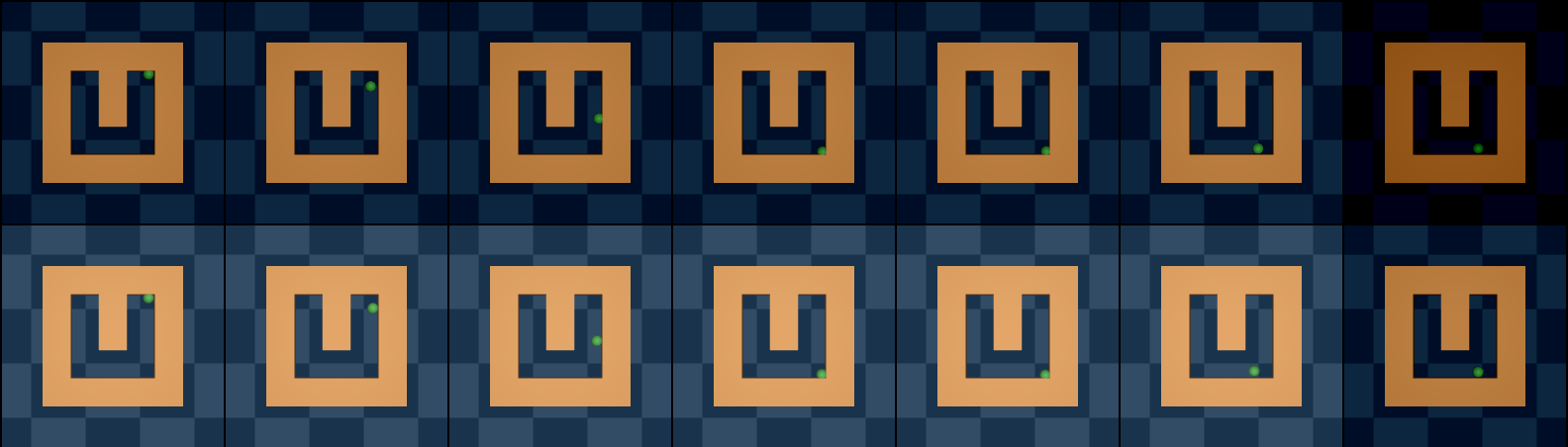}
        \vspace{0.1in}
    \end{subfigure}
    \begin{subfigure}[t]{\textwidth}
        \centering
        \includegraphics[width=0.95\textwidth]{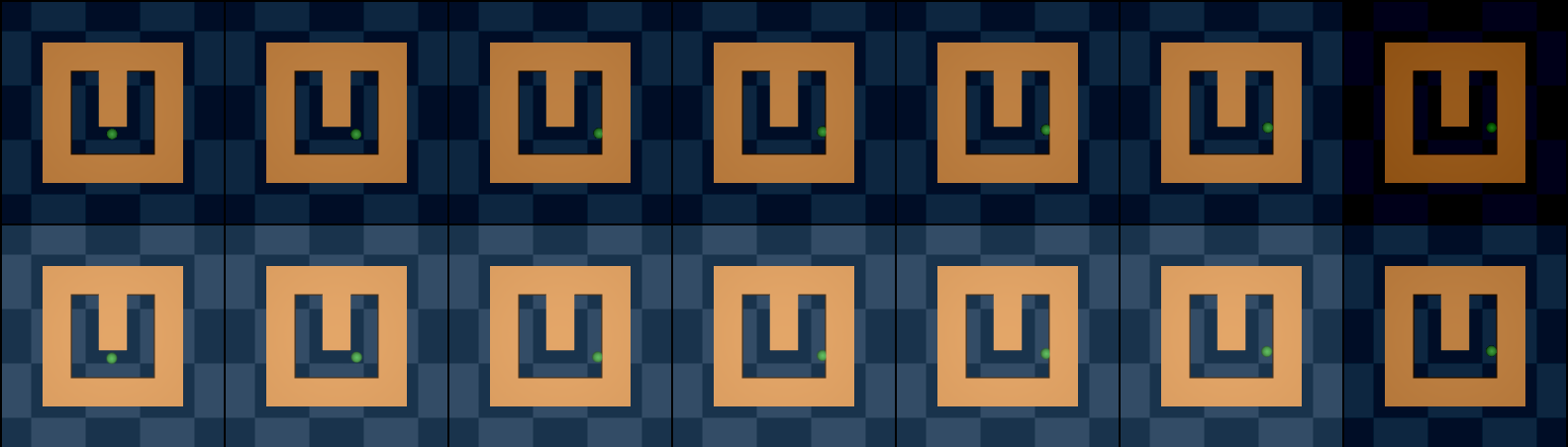}
        \vspace{0.1in}
    \end{subfigure}
    \begin{subfigure}[t]{\textwidth}
        \centering
        \includegraphics[width=0.95\textwidth]{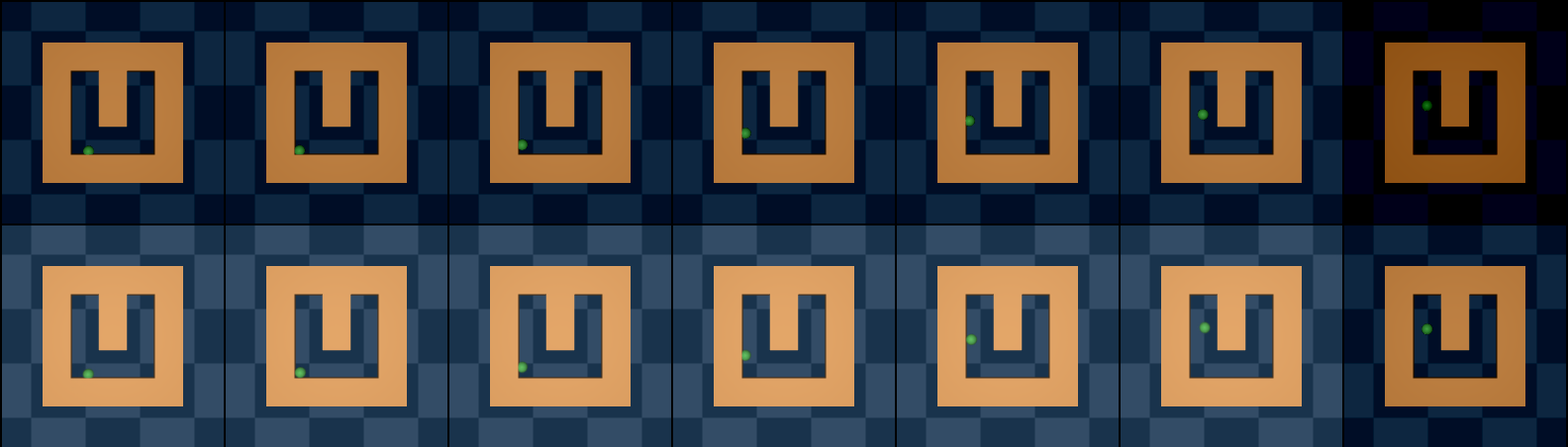}
    \end{subfigure}
\caption{Open-Loop Planning Trajectories of PointMaze-UMaze. The first row is from the simulator and the second from the decoder. The last column is the goal image.}
\label{img:plan_traj_umaze}
\end{figure*}

\begin{figure*}[h]
    \centering
    \begin{subfigure}[t]{\textwidth}
        \centering
        \includegraphics[width=0.95\textwidth]{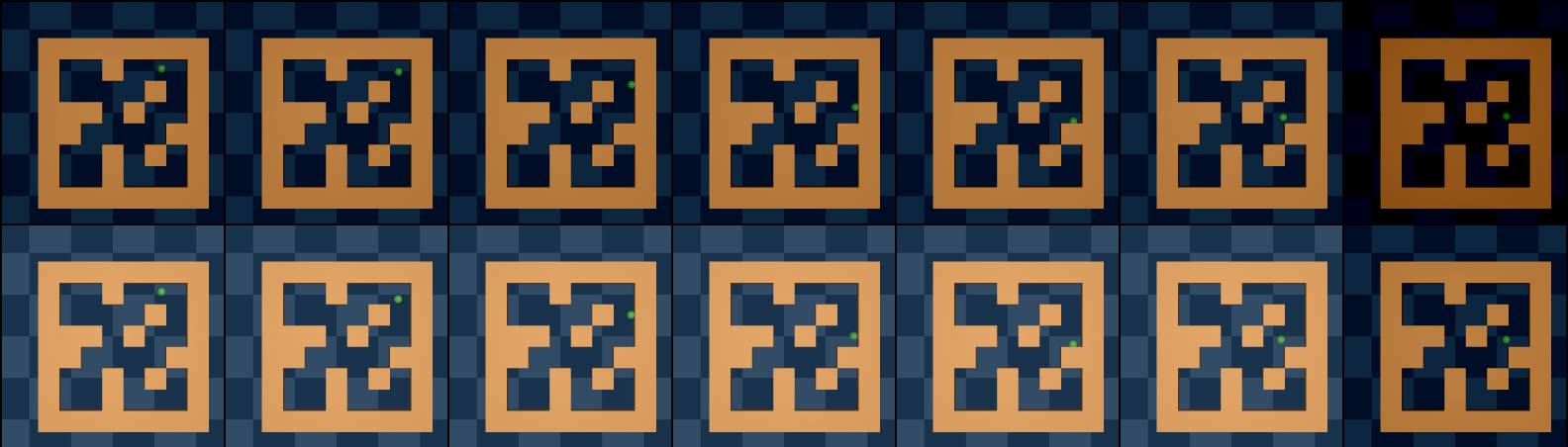}
        \vspace{0.1in}
    \end{subfigure}
    \begin{subfigure}[t]{\textwidth}
        \centering
        \includegraphics[width=0.95\textwidth]{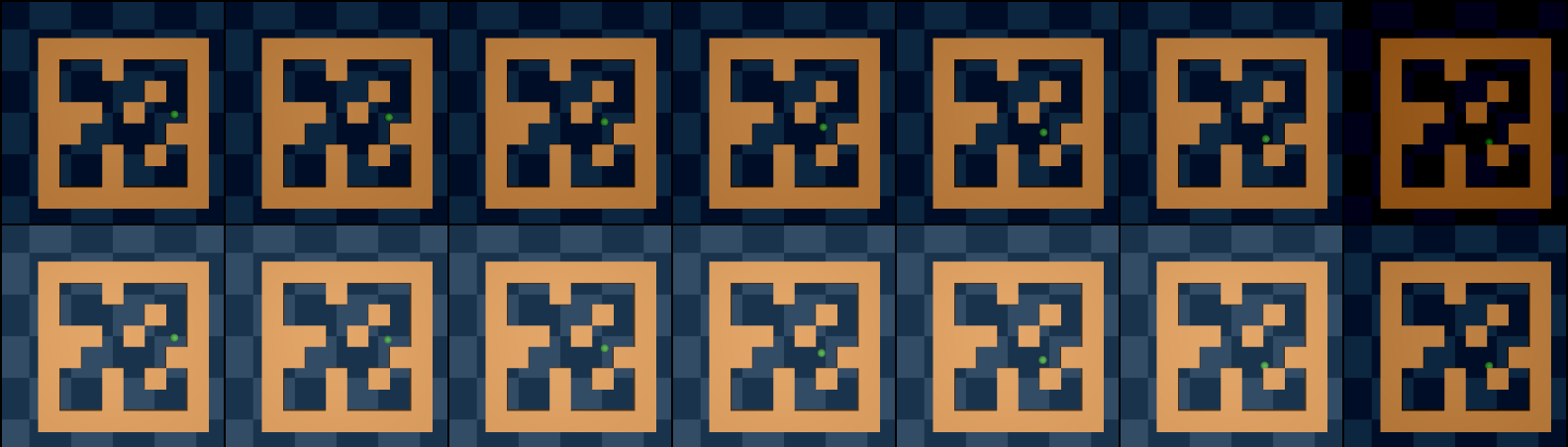}
        \vspace{0.1in}
    \end{subfigure}
    \begin{subfigure}[t]{\textwidth}
        \centering
        \includegraphics[width=0.95\textwidth]{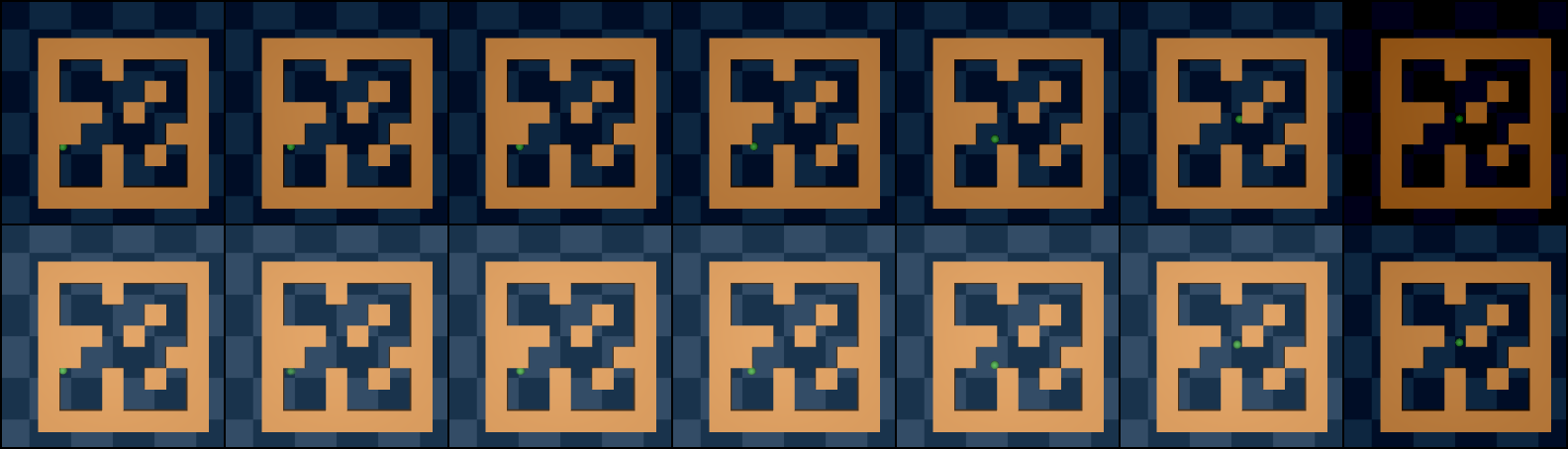}
        \vspace{0.1in}
    \end{subfigure}
    \begin{subfigure}[t]{\textwidth}
        \centering
        \includegraphics[width=0.95\textwidth]{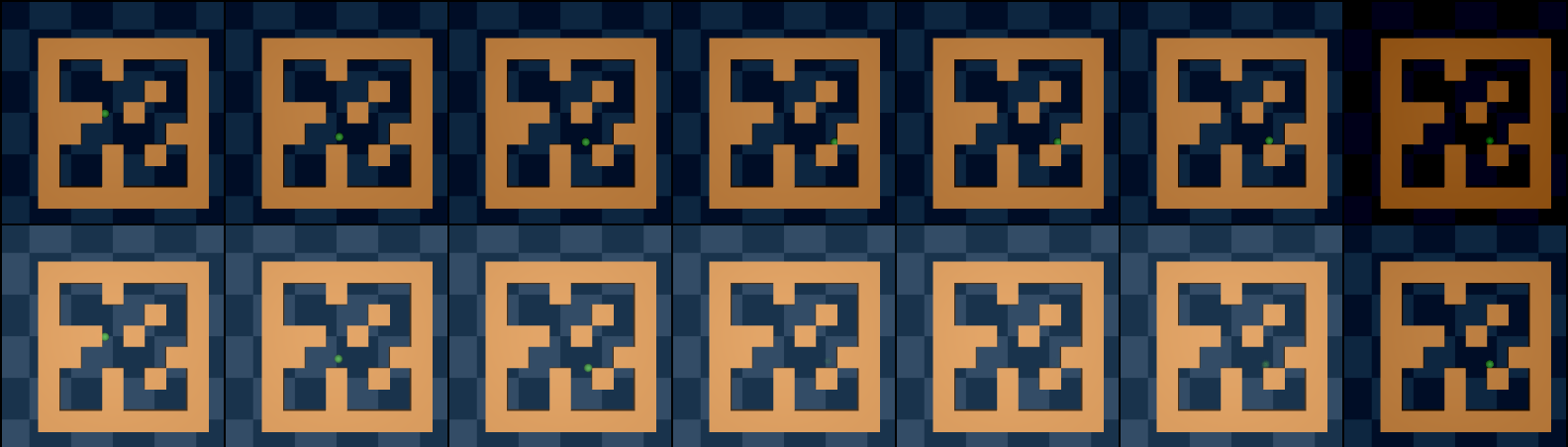}
    \end{subfigure}
\caption{Open-Loop Planning Trajectories of PointMaze-Medium. The first row is from the simulator and the second from the decoder. The last column is the goal image.}
\label{img:plan_traj_medium}
\end{figure*}

\begin{figure*}[h]
    \centering
    \begin{subfigure}[t]{\textwidth}
        \centering
        \includegraphics[width=0.95\textwidth]{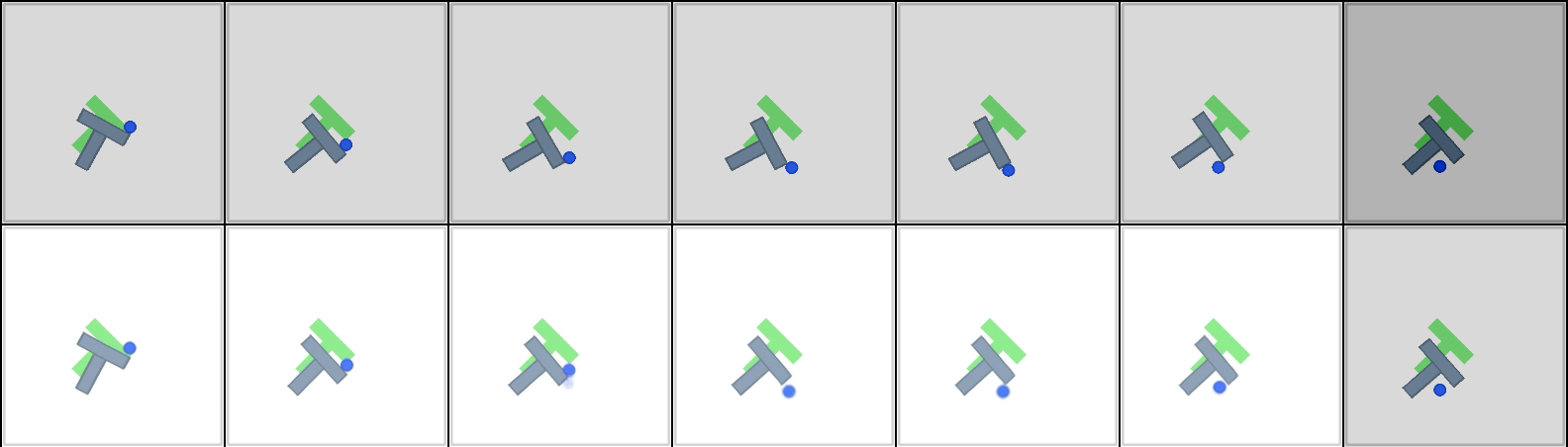}
        \vspace{0.1in}
    \end{subfigure}
    \begin{subfigure}[t]{\textwidth}
        \centering
        \includegraphics[width=0.95\textwidth]{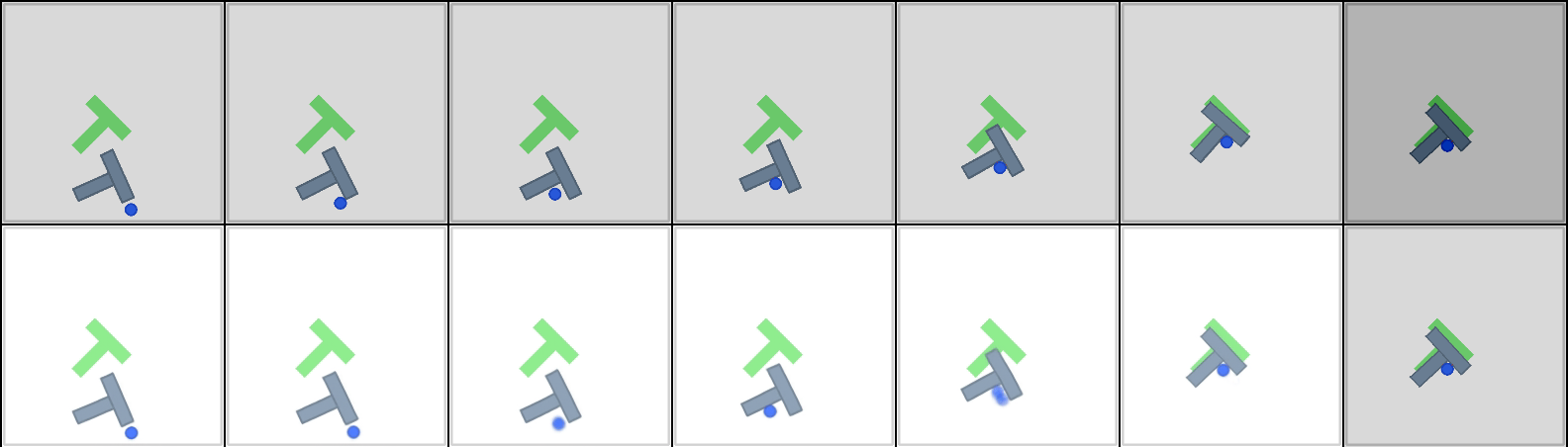}
        \vspace{0.1in}
    \end{subfigure}
    \begin{subfigure}[t]{\textwidth}
        \centering
        \includegraphics[width=0.95\textwidth]{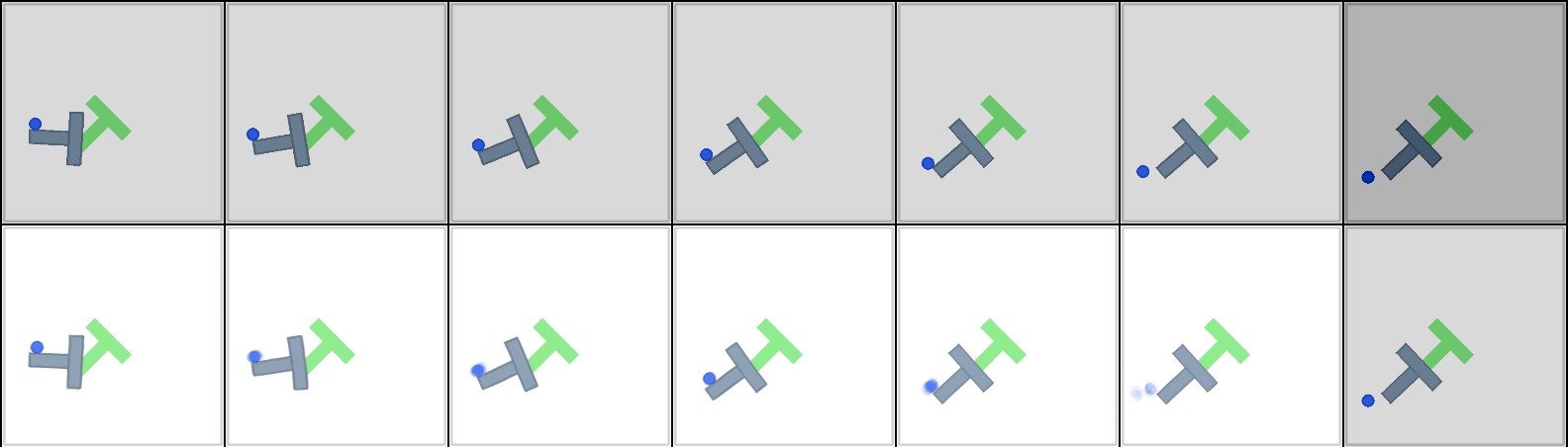}
        \vspace{0.1in}
    \end{subfigure}
    \begin{subfigure}[t]{\textwidth}
        \centering
        \includegraphics[width=0.95\textwidth]{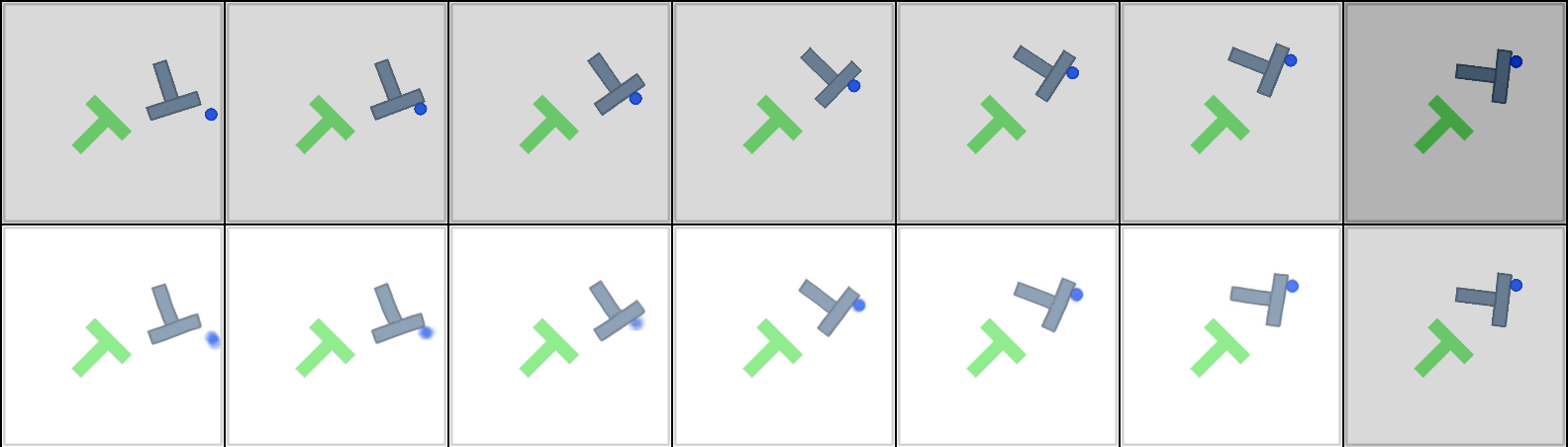}
    \end{subfigure}
\caption{Open-Loop Planning Trajectories of PushT. The first row is from the simulator and the second from the decoder.}
\label{img:plan_traj_pusht}
\end{figure*}

\clearpage
\section{Teleported-PointMaze}
\label{appendix:teleportation}

This is a novel 2D navigation environment adapted from PointMaze. The core modification is a one-way teleportation dynamic. While the top, bottom, and left boundaries of the maze function as standard solid obstacles, a predefined region near the right wall acts as a teleportation trigger. If an agent's state transition at time $t$ results in a new x-position $x_{t+1}$ that crosses this threshold (i.e., $x_{t+1} > x_{\text{right-border}}$), an instantaneous state intervention occurs, modifying the agent's state as follows:

\begin{enumerate}
    \item Position (x): The agent's x-position is reset to the left side of the maze: $x_{t+1} \leftarrow x_{\text{left-border}}$.
    \item Position (y): The agent's y-position $y_{t+1}$ is preserved.
    \item Velocity (x): The agent's x-axis velocity $v_x$ is reset to its absolute value: $v_{x, t+1} \leftarrow |v_{x, t}|$.
\end{enumerate}

\begin{figure*}[h]
\center
\includegraphics[width=\textwidth]{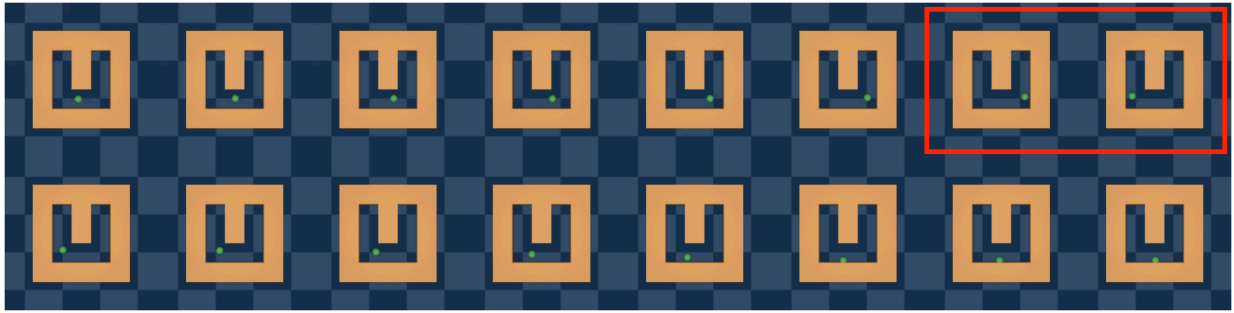}
\caption{Teleported-PointMaze. Note that the teleportation happens inside the red box.
}
\label{img:teleport}
\end{figure*}

\begin{figure*}[h]
\center
    \begin{subfigure}[t]{0.46\textwidth}
        \centering
        \includegraphics[width=\textwidth]{images/heatmap/umaze_dino_patch.pdf}
        \caption{DINOv2 patch embedding}
    \end{subfigure}
    \hfill
    \begin{subfigure}[t]{0.46\textwidth}
        \centering
        \includegraphics[width=\textwidth]{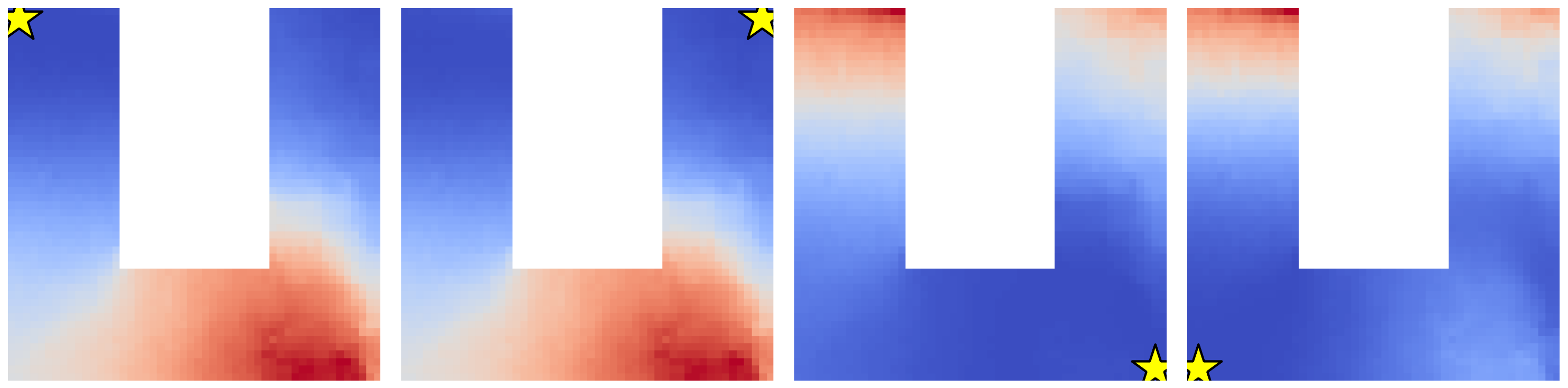}
        \caption{Trained projector with straightening}
    \end{subfigure}
    \hfill
    \begin{subfigure}[t]{0.46\textwidth}
        \centering
        \includegraphics[width=\textwidth]{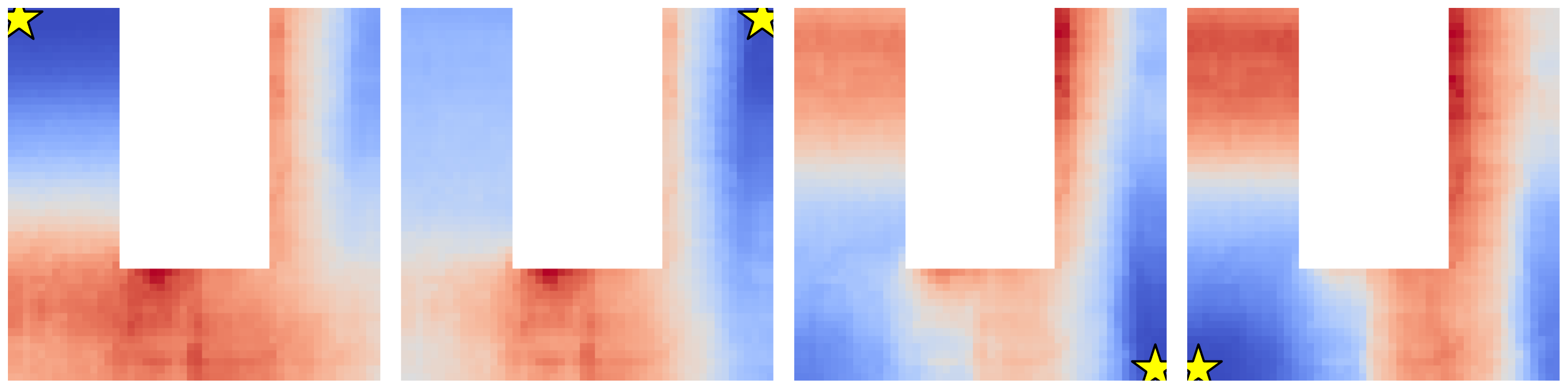}
        \caption{Trained projector without straightening}
    \end{subfigure}
    \hfill
    \begin{subfigure}[t]{0.46\textwidth}
        \centering
        \includegraphics[width=\textwidth]{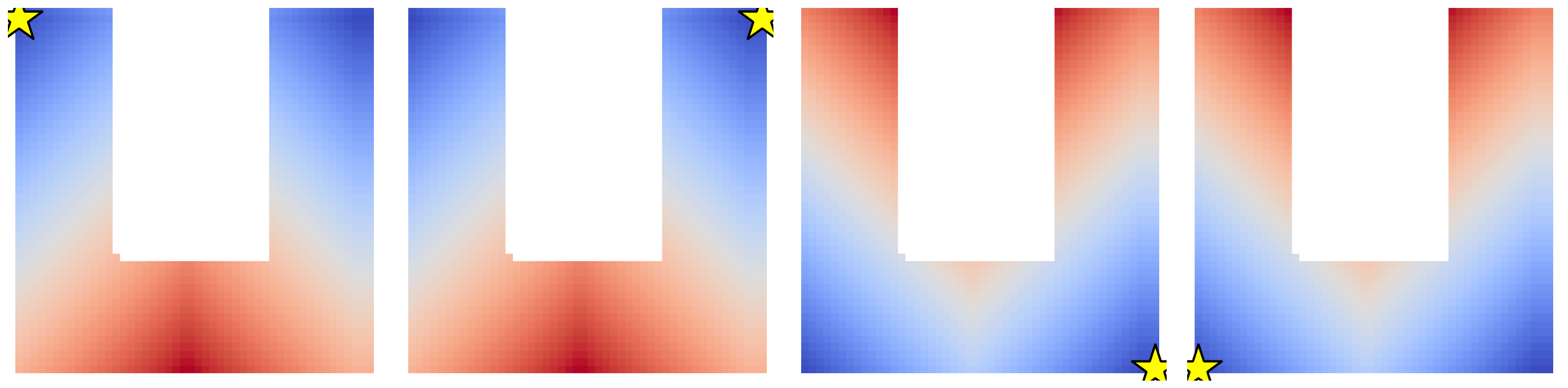}
        \caption{Ground-Truth using A-star}
    \end{subfigure}
\caption{Distance heatmaps of Teleported-PointMaze (blue indicates small values, red indicates large values). The state marked by the yellow star is used as the target, and we compute the MSE between its embedding and those of all other states in the maze. Since MSE is symmetric, this visualization does not fully capture directional reachability in the asymmetric teleportation dynamics. Nevertheless, with straightening, the resulting heatmaps are significantly closer to the transition-aware distances obtained using A-star.}
\label{img:heatmap_teleport}
\end{figure*}

\begin{figure*}[h]
\center
    \begin{subfigure}[t]{\textwidth}
        \centering
        \includegraphics[width=\textwidth]{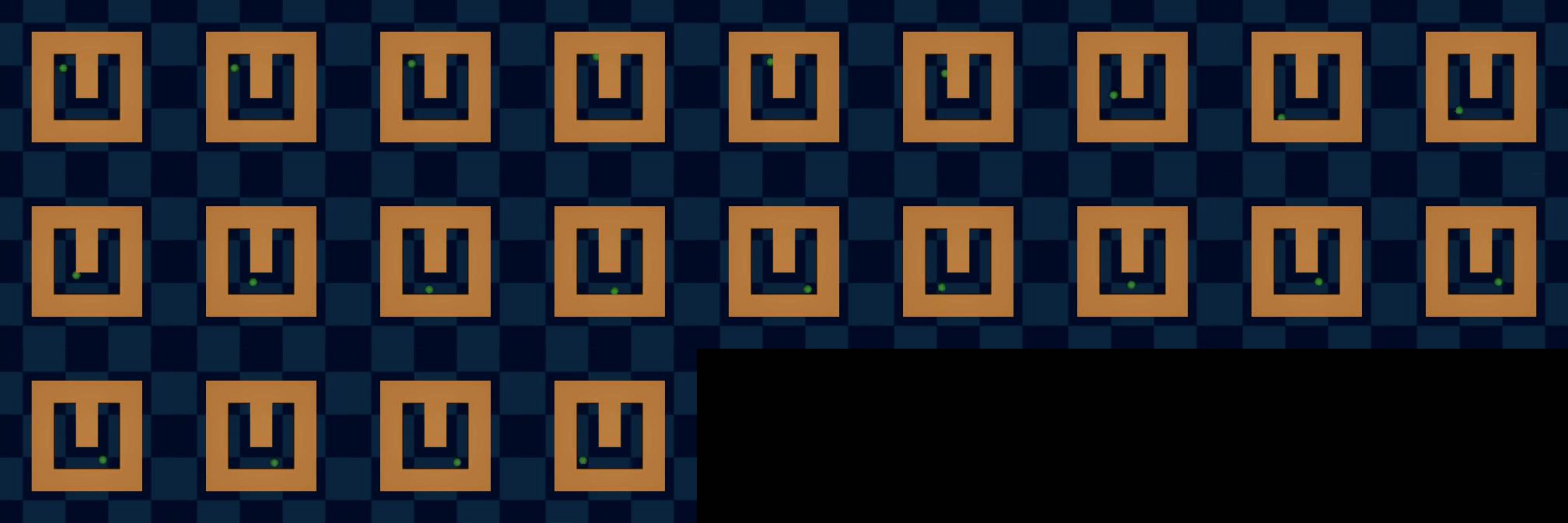}
        \caption{With straightening, the agent reaches the target within given step limit.}
    \end{subfigure}
    \begin{subfigure}[t]{\textwidth}
        \centering
        \includegraphics[width=\textwidth]{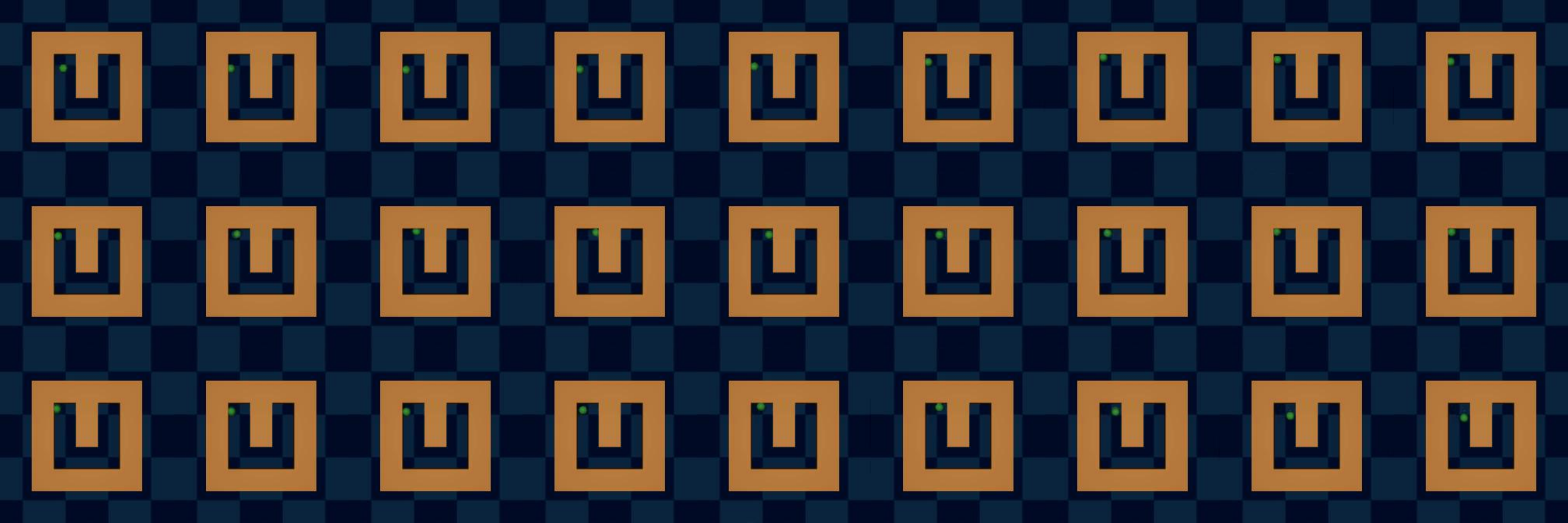}
        \caption{Without straightening, the agent gets stuck at the corner.}
    \end{subfigure}
\caption{Comparison of Planning Trajectories in Teleported-PointMaze. The frames were masked by black after reaching the target.
}
\label{img:teleport_str_comp}
\end{figure*}

\end{document}